\documentclass[11pt]{article}

\usepackage[margin=1in]{geometry}
\usepackage[utf8]{inputenc}
\usepackage{amsmath,amsfonts,amssymb,amsthm}
\usepackage{graphicx}
\usepackage{natbib}
\usepackage{algorithm}
\usepackage{algorithmic}
\usepackage{xcolor}
\usepackage{pifont}
\usepackage{hyperref}
\usepackage{float}
\usepackage{placeins}
\usepackage[normalem]{ulem}
\usepackage{multirow}
\usepackage{enumitem}
\usepackage{tabularx}
\usepackage{booktabs}
\usepackage{array}
\usepackage{subcaption}

\newcommand{\ve}[1]{\mathbf{#1}}
\newcommand{\cmark}{\textcolor{green!70!black}{\ding{51}}}
\newcommand{\xmark}{\textcolor{red!90!black}{\ding{55}}}

\newtheorem{assump}{Assumption}
\newtheorem{thm}{Theorem}

\newtheorem{lem}{Lemma}
\newtheorem{prop}{Proposition}
\newtheorem{cor}{Corollary}

\newtheorem{rem}{Remark}

\newenvironment{keywords}{\par\noindent\textbf{Keywords: }}{\par}
\title{Quantile-based Loss Filtering for Outlier-Robust Stochastic Gradient Descent}
\author{Jamie Haddock\thanks{Department of Mathematics, Harvey Mudd College, Claremont, CA 91711-3116, USA. Email: \texttt{jhaddock@g.hmc.edu}} \and
Anna Ma\thanks{Department of Mathematics, University of California, Irvine, CA 92617, USA. Email: \texttt{anna.ma@uci.edu}} \and
Elizaveta Rebrova\thanks{Mathematical Institute, University of Oxford,  Oxford, OX1 2JD, UK. Email: \texttt{elre@princeton.edu}}}
\date{}

\begin{document}
\maketitle

\begin{abstract}
{We study loss-based filtering for finite-sum optimization with a subset of
corrupted component functions whose gradients may be highly unreliable.
Motivated by minimum-loss-based SGD (min-$k$-loss) and quantile-based methods for corrupted linear
systems, we propose and
analyze a general loss-filtering framework  -- Quantile-\(k\)-Loss SGD (Q\(k\)L-SGD) --   that
samples \(k\) component losses at each iteration and updates using an index chosen uniformly
from the lower empirical \(q\)-quantile.
We prove linear convergence of this family of methods under standard convexity assumptions,
requiring the sample size to scale with the number of corruptions and a subset
strong-convexity threshold. For the cases when large enough sampling is impossible or undesirable, we
give a complementary small-sample probabilistic analysis that covers any sample size $k$ and the
convergence behavior depends on the probability of selecting an outlier and on
the curvature of the selected good step. Experiments on polynomial regression,
regularized logistic regression, and regularized hinge loss show that
intermediate quantiles often outperform both standard SGD and min-\(k\)-loss
SGD. In particular, min-\(k\) often stalls by repeatedly selecting nearly solved
components, while intermediate quantiles retain robustness and produce more
informative updates.}

\end{abstract}

\begin{keywords}
  keyword one, keyword two, keyword three
\end{keywords}

\section{Introduction}

Stochastic Gradient Descent (SGD) is a fundamental method for solving large-scale
finite-sum optimization problems of the form
\begin{equation}
\min_{\ve{x}} F(\ve{x})
:=
\frac{1}{m}\sum_{i=1}^m f_i(\ve{x}),
\label{eq:prob}
\end{equation}
where each component function \(f_i\) corresponds to an individual data point,
measurement, or training example. {A basic component-sampling  SGD replaces the full gradient by the gradient of a randomly selected
component: \begin{equation}
   \ve{x}_{t+1}
        =
        \ve{x}_t-\eta\nabla f_{i_t}(\ve{x}_t).
        \label{eq:sgd}
\end{equation}}
This update is computationally cheap, but can be vulnerable to corrupted
data: if a small fraction of the component functions are affected by mislabeled
data, faulty measurements, adversarial perturbations, or system failures, then
even rare corrupted updates can substantially distort the method's trajectory. Ordinary mini-batching reduces stochastic variance, but it does not by itself
remove this vulnerability when corrupted gradients can be very large; as a
single corrupted component can dominate an averaged batch update.

Across robust regression and stochastic or distributed optimization, related
methods use three broad, overlapping strategies. Some select, remove, or
reweigh observations or candidate updates using losses, residuals, or gradient
diagnostics
\citep{bhatia2015robust,shen2019learning,diakonikolas2019sever}.
Others construct a robust gradient estimate or aggregate, for example through
robust mean estimation, coordinate-wise median, or trimmed mean
\citep{prasad2020robust,yin2018byzantine}. A third line limits the influence
of an individual selected observation through a robust loss or bounded update,
as in \(\ell_1\)-loss SGD and gradient clipping
\citep{pesme2020online,jeong2025stochastic,gorbunov2020stochastic}.
These approaches address different corruption models and target problems.

Our focus is on a finite-sum corruption model in which the index set \([m]\) is
partitioned as
\[
[m]=G\cup O,
\qquad
G\cap O=\emptyset,
\]
{where \(G\) denotes the uncorrupted, or \textbf{G}ood components and \(O\) denotes corrupted components, or \textbf{O}utliers.} Our goal
is not to minimize the corrupted objective \(F\), but rather to approximate a
minimizer of the uncorrupted objective
\begin{equation}
F_G(\ve{x})
:=
\frac{1}{|G|}\sum_{i\in G} f_i(\ve{x}).
\label{eq:prob_good}
\end{equation}
That is, we seek
\(\ve{x}^*\in\operatorname*{argmin}_{\ve{x}} F_G(\ve{x})\).
The sets \(G\) and \(O\) are unknown to the algorithm, and we do not impose a
stochastic model on how the corruptions are generated. Corrupted component
gradients may therefore be highly unreliable or adversarially aligned, so even
a single corrupted update can result in an arbitrarily perturbed iterate.

Rather than modifying the per-sample update or constructing a robust aggregate
from candidate gradients, we use  \emph{iteration-dependent loss filtering: current component losses are used to
restrict which sampled indices are eligible for the next stochastic update,
without permanently identifying or removing observations}. The motivation is
that, in many corrupted-measurement models, good losses become small near the
target solution while severely corrupted losses remain comparatively large.
Thus the lower tail of the current loss distribution can provide useful
information about which components are more reliable candidates for an update -- even before any closeness to the target solution.

To that end, we introduce \emph{Quantile-\(k\)-Loss SGD} (Q\(k\)L-SGD). At each
iteration, the method samples a subset \(S_t\subset[m]\) of size \(k\), computes
the sampled losses
\[
\{f_i(\ve{x}_t):i\in S_t\},
\]
and accepts those sampled indices whose losses lie at or below the empirical
\(q\)-quantile. The update index is then chosen uniformly from this accepted
lower-tail set. Importantly, Q\(k\)L-SGD does not rely on the accepted lower tail being clean.
Corrupted components may also be accepted, and our analysis
shows that their effect can still be controlled.

This construction defines a two-parameter family of stochastic sampling rules,
with \(q\in(0,1]\) controlling the accepted fraction and \(k\) controlling the
number of losses screened at each iteration. The extreme choice \(q=1/k\)
reduces to min-\(k\)-loss SGD \citep{shah2020choosing}, which selects a minimum-loss component from a
sample of size \(k\). Larger values of \(q\) interpolate between this
conservative choice and ordinary SGD, recovered when \(q=1\). Viewed as a
family, Q\(k\)L-SGD unifies these special cases and explains why intermediate quantiles can offer a better balance between
robustness and optimization progress than either endpoint.

Our main contributions are as follows:
\begin{itemize}

\item \textbf{A quantile loss-filtering framework for robust SGD.}
We formulate and analyze Q\(k\)L-SGD as a two-parameter family of loss-based
stochastic sampling rules for general finite-sum objectives (see Section~\ref{sec:algorithm}). The family contains
min-\(k\)-loss SGD and the standard SGD sampling distribution as endpoints and
extends quantile-based residual selection beyond linear least squares. Treating
\(q\) and \(k\) as separate algorithmic parameters makes explicit the
tradeoff between outlier avoidance, progress on selected good components,
and screening cost.

\item \textbf{A deterministic guarantee.}
Under smoothness and convexity assumptions, we prove linear convergence of Q\(k\)L-SGD  (see Theorem~\ref{thm:main}). The required sample size
is governed by the number of corruptions and by a subset strong-convexity
threshold, rather than directly by the total number of component functions. No
stochastic model for the outliers is required, and the accepted lower-tail set
need not be clean: the empirical quantile threshold and deterministic counting
conditions control the effect of accepted outliers. In some regimes where the number of corruptions is
sublinear in \(m\), this permits sample sizes \(k=o(m)\), improving the
sample-size dependence of earlier subsampled quantile analyses in the linear
setting \citep{haddock2023subsampled}.

\item \textbf{A probabilistic small-sample guarantee.}
When the deterministic counting conditions fail, for example, if the afforded sample size $k$ is constant-order, we derive a complementary convergence bound. Its
rate depends on two interpretable quantities: the probability of
selecting an outlier and the expected curvature of a selected good step (see Theorem~\ref{thm:small-sample-prob}).
Further, under warm-start conditions, we show that these quantities yield an explicit robustness--progress
tradeoff in \(q\) and \(k\) (see Section~\ref{sec:small-sample-hard-separation}).

\item \textbf{Numerical evaluation across models and corruption regimes.}
We evaluate Q\(k\)L-SGD on polynomial regression, regularized logistic
regression, and regularized hinge loss under several corruption models,
including settings beyond the exact assumptions of our theory. Across these
problems, intermediate quantiles often improve performance relative to standard
SGD and min-\(k\)-loss SGD. Experiments varying \(q\) and \(k\) further
illustrate the parameter dependence predicted by the analysis. The best
quantile is problem- and corruption-dependent, with strong empirical performance
often occurring near the clean-data fraction (see Section~\ref{sec:numerics}).

\end{itemize}

The remainder of the paper is organized as follows. Section~\ref{sec:related-work} reviews related work, then Section~\ref{sec:algorithm} formally introduces the Q\(k\)L-SGD framework. Sections~\ref{sec:large-sample} and~\ref{sec:small-sample} develop the two complementary theoretical regimes: a deterministic analysis based on sufficiently large samples and a probabilistic analysis for smaller samples, including an explicit specialization under loss separation. Section~\ref{sec:numerics} evaluates the method across several objective classes and corruption models, and Section~\ref{sec:conclusion} concludes with a brief discussion of some natural future directions of this work.

\subsection{Background and related work} \label{sec:related-work}

\paragraph{Robust statistics and robust regression.}
Learning or estimating in the presence of a small fraction of unreliable data
has a long history in robust statistics, dating back to classical works such
as \citep{tukey1960survey, huber1992robust}. In regression, this
perspective led to methods such as least median of squares and least trimmed
squares \citep{rousseeuw1984least,vivsek2006least}, in which residuals
determine which observations influence the fitted model.

More recently, high-dimensional robust statistics has developed efficient
algorithms for estimation under adversarial contamination, including robust
mean estimation and robust regression
\citep{awasthi2014power,lai2016agnostic,charikar2017learning,
diakonikolas2019robust}. Work by \citet{klivans2018efficient} gives
polynomial-time algorithms for outlier-robust linear and polynomial regression
under adversarial corruptions in examples and labels. These works primarily
study the statistical problem of constructing a robust estimator from a
corrupted sample. Our goal is different: we study an iterative
SGD-type method whose individual updates may be corrupted, and we ask when
loss-based sampling can avoid such corrupted updates.

\paragraph{Residual and loss trimming.}
An algorithmic use of residual information appears in robust regression
through hard thresholding. TORRENT alternates between selecting observations
with small residuals under the current iterate and updating the regressor on
the resulting active set; for squared loss, this is a residual- or loss-trimming
procedure \citep{bhatia2015robust}. Related methods include consistent robust
regression and adaptive hard thresholding for response corruption
\citep{bhatia2017consistent,suggala2019adaptive}. These methods use residuals
to address response or right-hand-side corruption, but are formulated primarily
for linear regression as threshold-and-refit procedures.

A related line of work studies iterative trimmed loss minimization.
\citet{shen2019learning} propose a general framework that repeatedly selects
low-loss samples and updates or retrains a model on the selected subset, while
iterative least trimmed squares methods apply the same principle in linear and
mixed linear regression \citep{shen2019iterative}.
\citet{awasthi2022trimmed} analyze iterative trimmed maximum likelihood for
robust generalized linear models under adversarial corruptions, including label
corruptions. These methods are close in spirit to Q\(k\)L-SGD because they use
losses to select samples. Q\(k\)L-SGD turns this principle into a sampled,
single-component stochastic update rule for general finite-sum objectives.

\paragraph{Robust gradient filtering, estimation, aggregation, and clipping.}
Another line of work uses gradient information to robustify stochastic or
distributed optimization. SEVER uses the spectral structure of per-sample
gradients to score and remove suspect sample functions
\citep{diakonikolas2019sever}, whereas robust gradient-estimation methods
replace the ordinary empirical gradient by a robust estimate
\citep{prasad2020robust}. Byzantine-resilient methods select or aggregate
candidate worker gradients
\citep{blanchard2017machine,alistarh2018byzantine}, including
coordinate-wise median and trimmed-mean aggregation
\citep{yin2018byzantine}.

Related Byzantine-resilient methods use additional information to screen or
stabilize candidate updates. For example, Zeno++ accepts asynchronous worker
updates according to an estimated loss-descent score
\citep{xie2020zeno++}, while other approaches incorporate information across
iterations, such as worker momentum, to address attacks coupled over time
\citep{karimireddy2021learning}. Gradient clipping and norm-based filtering
instead limit the influence of individual large gradients
\citep{gorbunov2020stochastic,gupta2021byzantine}. Thus, these methods use
gradient information to filter, estimate, aggregate, score, or control
candidate updates. In contrast, Q\(k\)L-SGD uses current component losses to
determine which sampled components are eligible to generate the next
stochastic update.

\paragraph{Robust losses and bounded-influence stochastic updates.}
A complementary line of work builds robustness into the per-sample loss or
update itself.
For online linear and ReLU regression, SGD on the \(\ell_1\) loss limits the
influence of the magnitude of a corrupted residual; combined with iterate
averaging or decaying step sizes, this yields convergence guarantees under
oblivious or Massart response corruption
\citep{pesme2020online,jeong2025stochastic}. The Q\(k\)L-SGD method retains
a general component-loss formulation and modifies which sampled component is
used for the update.

\paragraph{Small-loss sample selection in noisy-label learning.}
A broader noisy-label learning literature also uses small loss as a proxy for
sample reliability. MentorNet learns a curriculum weighting rule that
emphasizes examples whose labels are likely to be correct
\citep{jiang2018mentornet}; Co-teaching uses two networks to select small-loss
examples for one another \citep{han2018coteaching}; and DivideMix models the
per-sample loss distribution to separate likely clean and noisy examples
\citep{li2020dividemix}. These are primarily deep-learning procedures
specialized to noisy-label training, whereas Q\(k\)L-SGD defines an explicit
finite-sum stochastic sampling rule with convergence theory. Moreover, our
main analysis does not require the accepted lower-tail set to be
clean: accepted outliers are permitted and their effect is controlled through
the empirical quantile threshold and deterministic counting conditions.

\bigskip

The next two paragraphs review the two closest technical predecessors of
Q\(k\)L-SGD.

\paragraph{Quantile randomized Kaczmarz and corrupted linear systems.}
For linear systems with corrupted right-hand sides, quantile-based randomized
Kaczmarz methods select rows using residual quantiles in order to avoid corrupted
equations. This line of work includes quantile randomized Kaczmarz and its
subsampled or block variants
\citep{haddock2019randomized,steinerberger2023quantile,HNRS20, jarman2021quantilerk,cheng2023block,haddock2023subsampled}.
The linear least-squares setting is a special case of finite-sum optimization:
for
\[
F(\ve{x})
=
\frac{1}{2m}\|\ve{A}\ve{x}-\ve{b}\|^2
=
\frac{1}{2m}\sum_{i=1}^m(\ve{a}_i^\top\ve{x}-b_i)^2,
\]
an SGD step on one squared residual is
\[
\ve{x}_{t+1}
=
\ve{x}_t
-
\eta(\ve{a}_i^\top\ve{x}_t-b_i)\ve{a}_i.
\]
When the rows are normalized and \(\eta=1\), this is the randomized Kaczmarz
update. Thus, for squared residual losses, residual-quantile selection is equivalent
to loss-quantile selection, and Q\(k\)L-SGD can be viewed as an extension of
quantile-based Kaczmarz sampling to general finite-sum objectives.

\paragraph{Min-\(k\)-loss SGD.}
The closest nonlinear SGD method is min-\(k\)-loss SGD, which samples \(k\)
components and updates the current iterate using the gradient of a component
attaining the smallest current loss \citep{shah2020choosing}. That work develops landscape and convergence analyses for the minimum-loss rule under separation and noise assumptions.
Q\(k\)L-SGD generalizes min-\(k\)-loss SGD from the minimum to a general sampled
quantile. The additional quantile parameter is important because minimum-loss
selection can be highly conservative, especially when \(k\) is moderate or
large. Our analysis and experiments show that intermediate quantiles can
retain substantial robustness while producing more informative updates.

\bigskip

Overall, the existing works give rich evidence that losses and residuals can provide useful
signals for robust estimation and optimization. They show that robustness can be introduced at
different stages of a stochastic method: by controlling which observations or
candidate updates influence a step, by constructing a robust gradient estimate
or aggregate, or by limiting the influence of each selected observation.
Q\(k\)L-SGD develops the first strategy using current component losses as the
selection statistic while retaining an otherwise standard component-gradient
update. Existing loss- and residual-based theory has focused on
problem-specific trimming and refitting procedures, quantile selection in
linear systems, or the minimum-loss endpoint of stochastic sampling. We
analyze the full sampled lower-quantile family for general finite-sum
objectives, treating \(q\) and \(k\) as separate design parameters and making
explicit their tradeoff between outlier avoidance and optimization progress.

\subsection{Notation}

We write \([m]=\{1,\ldots,m\}\). The index set is partitioned as
\([m]=G\cup O\), where \(G\) is the good set and \(O\) is the outlier set. For
any nonempty \(S\subseteq[m]\), define
\[
F_S(\ve{x})
:=
\frac{1}{|S|}\sum_{i\in S}f_i(\ve{x}).
\]
In particular,
\[
F_G(\ve{x})
=
\frac{1}{|G|}\sum_{i\in G}f_i(\ve{x}).
\]

For a multiset \(R=\{z_1,\ldots,z_k\}\), let
\[
z_{(1)}\le z_{(2)}\le\cdots\le z_{(k)}
\]
denote its order statistics. Throughout the paper, the sample size \(k\) and quantile level \(q\in(0,1]\) are
chosen so that \(qk\in\{1,\ldots,k\}\) is an integer. We denote the empirical
\(q\)-quantile by
\[
Q_q(R):=z_{( qk)}.
\]
Thus \(q=1/k\) gives the minimum of the sample, while \(q=1\) gives the maximum.

At iteration \(t\), \(S_t\subset[m]\) denotes a uniformly sampled subset of size
\(k\). {The lower-tail set at iteration $t$ is the set, or \textbf{B}ag of acceptable components, i.e., sampled components whose loss falls below the empirical quantile:}
\[
B_t
:=
\{j\in S_t:\,
f_j(\ve{x}_t)\le Q_q(\{f_i(\ve{x}_t)\}_{i\in S_t})\}.
\]
We also write
\[
O_t:=B_t\cap O,
\qquad
G_t:=B_t\cap G,
\]
{to denote the set of outlier and good components in $B_t$, respectively.}
\section{Framework: Quantile-\(k\)-loss SGD}
\label{sec:algorithm}

Algorithm~\ref{alg:qklsgd} formalizes Q\(k\)L-SGD: at each iteration, the
method restricts the eligible update indices to the lower empirical
\(q\)-quantile of the losses in a uniformly sampled set.

\begin{algorithm}
    \caption{Quantile-\(k\)-loss SGD (Q\(k\)L-SGD)}
    \label{alg:qklsgd}
    \begin{algorithmic}[1]
    \STATE \textbf{Input:} sample size \(k\), quantile \(q\), step size \(\eta\), number of iterations \(T\)
    \STATE Initialize \(\ve{x}_0\)
    \FOR{\(t=0,\ldots,T-1\)}
        \STATE Uniformly select a subset \(S_t\subset [m]\) with \(|S_t|=k\)
        \STATE Set
        \[
        B_t
        :=
        \{j\in S_t:
        f_j(\ve{x}_t)
        \le
        Q_q(\{f_\ell(\ve{x}_t)\}_{\ell\in S_t})\}
        \]
        \STATE Uniformly select \(i_t\in B_t\)
        \STATE Update
        \[
        \ve{x}_{t+1}
        =
        \ve{x}_t-\eta\nabla f_{i_t}(\ve{x}_t)
        \]
    \ENDFOR
    \RETURN \(\ve{x}_T\)
    \end{algorithmic}
\end{algorithm}

Note that the accepted set \(B_t\) is recomputed from the current losses at every
iteration; the method does not permanently identify or remove outliers. The
quantile parameter controls the size of the eligible set. When \(q=1/k\),
Q\(k\)L-SGD reduces to min-\(k\)-loss SGD, up to ties. When \(q=1\), every
sampled index is eligible, and the resulting update index has the standard
uniform component-sampling distribution of SGD. Intermediate values of \(q\)
interpolate between these two rules. The next sections develop the analysis showing when the
accepted good components provide sufficient progress and when the effect of
accepted outliers remains controlled.

\subsection{Analysis for the sample size larger than the number of corruptions}\label{sec:large-sample}

{In this section, we analyze QkL-SGD in the regime where the sample size $k$ is large relative to the number of corruptions $s$. Under this condition, we can guarantee deterministically that every sampled batch contains enough good components both below and above the quantile threshold. We first state the sampling and model assumptions (Assumptions~\ref{assump:samplesize} and~\ref{assump:sgd}), which require that the accepted lower-tail set contains at least $\tau = qk - s$ good indices and that the averaged objective over any sufficiently large good subset is strongly convex. Our main result, Theorem~\ref{thm:main}, then establishes linear convergence in expectation of the QkL-SGD iterates to the minimizer $\ve{x}^*$ of the uncorrupted objective, with a contraction rate governed by the probability $\rho$ of selecting an accepted outlier and the potential error inflation $\chi$ from a corrupted step. }

\begin{assump}
{\textbf{(Sampling and corruption regime.)}}
    Let
$s:=|O|$
be the number of outliers, $k$ be the sample size ($1\le k\le m$), $q \in (0,1)$ be the quantile, and assume $qk$ is an integer. We assume that
\begin{equation}\label{eq:tau-nu}
{\tau:= qk -s > 0}
\qquad \text{ and } \qquad
{\nu:=k- qk-s> 0.}
\end{equation}
    Equivalently,
    \[
   { s< qk}
    \qquad
    \text{and}
    \qquad
    {s<k- qk}.
    \]
    The first inequality ensures that the accepted lower-tail set contains enough good indices. The second inequality ensures that the quantile threshold can be controlled by good components above the threshold.

\label{assump:samplesize}
\end{assump}

Before discussing Assumption~\ref{assump:samplesize} and its implications on the sample size, we first present additional model assumptions.

\begin{assump} {\textbf{(Model setting.)} We adopt the following model assumptions on the component functions $f_i$.}
\label{assump:sgd}

\begin{enumerate}[label=\Roman*., ref=\Roman*]
    \item \textbf{Good-component regularity.}
    For every $i\in G$, the function $f_i$ is convex and
    \[
    f_i(\ve{x}^*)=0,
    \qquad
    \nabla f_i(\ve{x}^*)=0,
    \] where $\ve{x}^*$ minimizes the uncorrupted objective $F_G(\ve{x})$.
    \label{ass:good_set_regularity}

    \item \textbf{Outlier nonnegativity.}
    For every $i\in O$ and every $\ve{v}\in\mathbb{R}^n$,
    \[
    f_i(\ve{v})\ge 0.
    \]\label{ass:outlier_nonnegativity}

    \item \textbf{Component smoothness.}
    For every $i\in [m]$, the function $f_i$ has $L_i$-Lipschitz gradient, i.e.,
    \[
    \|\nabla f_i(\ve{x})-\nabla f_i(\ve{y})\|_2
    \le
    L_i\|\ve{x}-\ve{y}\|_2
    \qquad
    \text{for all }\ve{x},\ve{y}\in\mathbb{R}^n.
    \]
    {For sets $S\subseteq [m]$, we use the conventions
    \begin{equation}\label{eq:lmax}
    L_{S}^*
    :=
    \begin{cases}
    \max_{i\in S}L_i, & S\neq\emptyset,\\
    0, & S=\emptyset.
    \end{cases}
    \quad \text{ and } \quad
    L_{G,k}^*
    :=
    \max_{\substack{U\subseteq G\\ |U|\le k}}
    \sum_{i\in U}\frac{L_i}{2}.
    \end{equation}
    Note that in particular
    \[
    L_{G,k}^*\le \frac{k}{2}{L_G^*}.
    \]}
    \label{ass:component_smoothness}
     \item
    \textbf{Subset strong convexity on good accepted sets.}
    There exists integer $r_{\mathrm{sc}}$ such that $1 \le r_{\mathrm{sc}} \le \tau$, such that for every subset $S\subseteq G$ with
    $
    |S|\ge r_{\mathrm{sc}},
    $
    the averaged objective $F_S$ is $\mu_S$-strongly convex, i.e.,
    \[
    \bigl\langle \ve{x}-\ve{y},\,
    \nabla F_S(\ve{x})-\nabla F_S(\ve{y})\bigr\rangle
    \ge
    \mu_S \|\ve{x}-\ve{y}\|^2
    \qquad
    \text{for all }\ve{x},\ve{y}\in\mathbb{R}^n.
    \]
    {Furthermore, we denote the smallest strong convexity parameter over good sets of size at least $\tau$ by $\mu^*$
    \[
    \mu^*
    :=
    \min_{\substack{T\subseteq G\\ |T|\ge \tau}}\mu_T.
    \]}

    \label{ass:subset_strong_convexity}
  \end{enumerate}
\end{assump}

The role of the sample size $k$ is to ensure that, at every iteration, the accepted set contains enough good indices both
(i) below the quantile threshold, so that the update step can {sample from} a sufficiently large good subset, and
(ii) above the quantile threshold, so that the threshold itself can be controlled by good components. As a result, Assumption~\ref{assump:samplesize} requires the two conditions
\[
|G_t| \ge  \tau \ge r_{\mathrm{sc}},
\qquad
{k- qk-s > 0}.
\]
{The first condition is a result of the simple calculation \begin{equation}\label{eq:ind-lower-bd}
|G_t|
=
|B_t\cap G|
\ge
|B_t|-|O_t|
\ge
 qk-s
=
\tau.
\end{equation}} The latter condition is needed because, after removing up to $s$ outliers, there remain at least {$\nu= k- qk-s >0$} good sampled indices above the threshold, {which allows us to relate the loss quantile value and the error}. In particular, we can view the terms $\tau$ and $\nu$
   ``slack in the accepted and rejected set," where the ``slack" is for the worst case (all corruptions in the accepted/rejected set).

Up to integer rounding, this means that the sample size should be chosen so that
\[
k
\gtrsim
\max\left\{
\frac{s+r_{\mathrm{sc}}}{q},
\frac{s}{1-q}
\right\}.
\]
This gives a practical rule for selecting $k$: it must be large enough to overcome the number of outliers $s$, and large enough so that the accepted good set reaches the subset strong convexity threshold $r_{\mathrm{sc}}$.

    A useful point is that this requirement does \emph{not} force $k$ to scale with the total number of components $m$. In earlier subsampled methods for the linear setting, such as \cite{haddock2023subsampled, HNRS20}, theoretical guarantees typically required taking a sample size of order $O(m)$. In contrast, the present result allows $k$ to be chosen in terms of the outlier level $s$ and the structural threshold $r_{\mathrm{sc}}$, which can be much smaller than $m$.

The present assumptions always exclude the extreme choice $q=1/k$ whenever $s>0$, since then
\[
{ qk  = 1},
\qquad
\tau = 1-s \le 0.
\]
In other words, as soon as there is even one outlier, the proof requires an accepted set larger than a single index. This is also consistent with a basic robustness obstruction: if $s\ge k$, then the random sample $S_t$ can consist entirely of outliers, and under uniform sampling without replacement this happens with positive probability.
This event is detrimental to the analysis since, in this case, every accepted index is corrupted,
since
\[
B_t\subseteq S_t\subseteq O,
\]
regardless of the quantile level $q$. Hence the update necessarily uses a corrupt gradient, and if those outlier gradients are very large or adversarially aligned, a single such step can be arbitrarily harmful. {Practically}, for moderate or large sample size $k$, the extreme choice $q=1/k$ (that is, selecting the smallest residual/loss) is typically too conservative and leads to unnecessarily slow progress. Thus both the theory and the practical behavior suggest choosing $k$ large enough to ensure robustness, without relying on the minimum-loss rule. We illustrate this tradeoff in the numerical experiments in Section~\ref{sec:numerics}.

{For simplicity of analysis, we assume that $f_i(\ve{x}^*)=0$ {for $i \in G$}. However, in general, this is not necessary, and we expect that a nonzero value of $f_i$ at $\ve{x}^*$ will, at worst, introduce a convergence horizon. For example, see the analysis of QRK in the noisy least squares case~\citep{battaglia2026quantile, CHPTimeVarying24}.} {We note that our experiments in Sections~\ref{subsec:poly_regression}, \ref{subsec:logistic_regression}, and~\ref{subsec:hinge_loss} support this hypothesis in and beyond the quadratic least-squares objective.}

{We now present our main result in this section, which illustrates that the QkL-SGD iterates converge to the solution of the optimization problem restricted to only good components under Assumptions~\ref{assump:samplesize} and~\ref{assump:sgd}.}

\begin{thm}\label{thm:main}
Suppose {Assumptions~\ref{assump:samplesize} and~\ref{assump:sgd}} hold, and let $\ve{x}_t$ denote the iterates
of Algorithm~\ref{alg:qklsgd}.
Assume moreover that $\mu^*$ satisfies the structural condition
\begin{equation}\label{eq:mu-star}
    \mu^* =
\min_{\substack{T\subseteq G\\ |T|\ge \tau}}\mu_T
>
\frac{\rho \chi}{ 1 - \rho},
\end{equation}
where we denote
\[
{\rho:=\frac{s}{ qk }},
\qquad
\chi:=
\sqrt{
\frac{2L_{O}^*L_{G,k}^*}{\nu}
},
\]
and choose the step size so that
\[
0<\eta<
\frac{
2\bigl((1-\rho)\mu^*-\rho \chi\bigr)
}{
(1-\rho)\mu^*L_{G}^*+\rho \chi^2
}.
\]
Then
\[
\mathbb{E}\|\ve{x}_{t}-\ve{x}^*\|^2
\le
\kappa^t_{\mathrm{rate}}\|\ve{x}_0-\ve{x}^*\|^2,
\]
where
\[
\kappa_{\mathrm{rate}}
=
(1-\rho)(1-2\eta\mu^*+\eta^2\mu^*L_{G}^*)+\rho (1+\eta \chi)^2<1.
\]
\end{thm}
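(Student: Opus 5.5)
We condition on $\ve{x}_t$ and the sample $S_t$, split the one-step expectation according to whether $i_t$ is good or an outlier, bound each part, and then take total expectation and iterate. Write $e_t:=\ve{x}_t-\ve{x}^*$. Given $S_t$, the index $i_t$ is uniform on $B_t$, so
\[
\mathbb{E}\bigl[\|e_{t+1}\|^2\mid \ve{x}_t,S_t\bigr]
=\frac{|G_t|}{|B_t|}\cdot\frac{1}{|G_t|}\sum_{i\in G_t}\|e_t-\eta\nabla f_i(\ve{x}_t)\|^2
+\frac{|O_t|}{|B_t|}\cdot\frac{1}{|O_t|}\sum_{i\in O_t}\|e_t-\eta\nabla f_i(\ve{x}_t)\|^2 .
\]
Since $|B_t|\ge qk$ and $|O_t|\le s$, the outlier weight $p_t:=|O_t|/|B_t|$ is at most $\rho$. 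The final step will show that each bracketed average is bounded by a coefficient times $\|e_t\|^2$, namely $c_G:=1-2\eta\mu^*+\eta^2\mu^*L_G^*$ for the good part and $c_O:=(1+\eta\chi)^2$ for the outlier part. We will have $c_O\ge 1> c_G$, so $(1-p_t)c_G+p_t c_O$ is increasing in $p_t$ and is at most $\kappa_{\mathrm{rate}}$. Taking total expectation and inducting then gives the claim.

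\textbf{Good part.} The set $G_t\subseteq G$ has $|G_t|\ge\tau\ge r_{\mathrm{sc}}$ by \eqref{eq:ind-lower-bd}, so $F_{G_t}$ is $\mu^*$-strongly convex. Because $\nabla F_{G_t}(\ve{x}^*)=0$, this gives $\langle e_t,\nabla F_{G_t}(\ve{x}_t)\rangle\ge\mu^*\|e_t\|^2$.

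For the squared-gradient term we use co-coercivity of each convex $L_i$-smooth $f_i$ with $\nabla f_i(\ve{x}^*)=0$: $\|\nabla f_i(\ve{x}_t)\|^2\le L_i\langle e_t,\nabla f_i(\ve{x}_t)\rangle$. Averaging over $G_t$ gives
\[
\frac{1}{|G_t|}\sum_{i\in G_t}\|\nabla f_i(\ve{x}_t)\|^2\le L_G^*\,\langle e_t,\nabla F_{G_t}(\ve{x}_t)\rangle .
\]
The good-part average is therefore at most $\|e_t\|^2-(2\eta-\eta^2L_G^*)\langle e_t,\nabla F_{G_t}\rangle$. When $\eta L_G^*<2$, the inner-product lower bound yields $c_G\|e_t\|^2$. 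The stated step-size condition implies $\eta<2/L_G^*$ in the relevant case; otherwise one may bound $\eta^2\|\nabla f\|^2\le\eta^2 L_G^*\mu^*$-type terms directly.

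\textbf{Outlier part (main obstacle).} For $i\in O_t$ the triangle inequality gives $\|e_t-\eta\nabla f_i\|\le\|e_t\|+\eta\|\nabla f_i(\ve{x}_t)\|$, so it suffices to prove $\|\nabla f_i(\ve{x}_t)\|\le\chi\|e_t\|$. First, since $f_i\ge0$ and is $L_i$-smooth, $\|\nabla f_i(\ve{x})\|^2\le 2L_i f_i(\ve{x})\le 2L_O^* f_i(\ve{x})$. Second, $i\in B_t$ means $f_i(\ve{x}_t)\le Q_q$. The threshold is controlled by counting: among the $k-qk$ sampled indices whose losses are at least $Q_q$, at most $s$ are outliers, so at least $\nu$ good indices $U$ satisfy $f_j(\ve{x}_t)\ge Q_q$. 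Hence
\[
\nu\,Q_q\le\sum_{j\in U}f_j(\ve{x}_t)\le\sum_{j\in U}\frac{L_j}{2}\|e_t\|^2\le L_{G,k}^*\|e_t\|^2,
\]
using $f_j(\ve{x}^*)=0$, $\nabla f_j(\ve{x}^*)=0$ and smoothness. Combining gives $\|\nabla f_i\|^2\le 2L_O^*L_{G,k}^*\|e_t\|^2/\nu=\chi^2\|e_t\|^2$.

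The delicate points are the tie-handling in this counting (ties can make $|B_t|>qk$, which only helps) and checking that the step-size condition is equivalent to $\kappa_{\mathrm{rate}}<1$. For the latter, $\kappa_{\mathrm{rate}}-1=\eta\bigl[-2((1-\rho)\mu^*-\rho\chi)+\eta((1-\rho)\mu^*L_G^*+\rho\chi^2)\bigr]$, which is negative exactly under \eqref{eq:mu-star} and the stated bound on $\eta$.
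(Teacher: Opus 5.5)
Your proposal is correct and follows the paper's proof essentially step for step: conditioning on $(\ve{x}_t,S_t)$, the good-step contraction via co-coercivity and $\mu^*$-strong convexity of $F_{G_t}$ (using $|G_t|\ge\tau$), the outlier bound $\|\nabla f_{i}(\ve{x}_t)\|\le\chi\|\ve{x}_t-\ve{x}^*\|$ via the self-bounding inequality and the quantile counting argument, monotonicity in $p_t\le\rho$, and the algebraic check that $\kappa_{\mathrm{rate}}<1$. The hedge in your good-part paragraph is unnecessary: the stated step-size bound always forces $\eta<2/L_G^*$, because its numerator is at most $2(1-\rho)\mu^*$ and its denominator is at least $(1-\rho)\mu^*L_G^*$, and this is exactly the paper's Step 1.
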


We first develop some auxiliary results to prove Theorem~\ref{thm:main}.
Lemma~\ref{lem:quantilebound-sgd} and Lemma~\ref{lem:corruptProjection-sgd}  handle the case where the selected update index is an outlier, first by showing that the quantile can be bounded, and then by showing that any inflation in error can be controlled. Lemma~\ref{lem:noncorruptProjection-sgd} handles the case where the selected update index is good.

\begin{lem}\label{lem:quantilebound-sgd}
Let $\ve{v}\in\mathbb{R}^n$ be an arbitrary fixed vector. Under {Assumptions~\ref{assump:samplesize} and~\ref{assump:sgd}},
\[
Q_q(\{f_i(\ve{v})\}_{i\in S_t})
\le
\frac{L_{G,k}^*}{\nu}
\|\ve{v}-\ve{x}^*\|^2,
\]
\end{lem}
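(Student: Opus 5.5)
The plan is to bound the empirical quantile by an average of good losses lying at or above it. Then each good loss is controlled through smoothness and the conditions at $\ve{x}^*$.

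\textbf{Step 1 (many good losses above the threshold).} Write $Q:=Q_q(\{f_i(\ve{v})\}_{i\in S_t})=z_{(qk)}$. By the definition of order statistics, at least $k-qk+1$ sampled indices have loss $\ge Q$; these are the positions $qk,qk+1,\ldots,k$. Since at most $s$ of them lie in $O$, at least $k-qk+1-s=\nu+1\ge\nu$ of them lie in $G$. Fix any $\nu$ of these and call the set $U\subseteq G\cap S_t$. We have $|U|=\nu\le k$ and $f_i(\ve{v})\ge Q$ for every $i\in U$.

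\textbf{Step 2 (averaging).} From Step 1,
\[
Q\le \frac{1}{\nu}\sum_{i\in U} f_i(\ve{v}).
\]
We may bound the sum by summing over $U$ directly. Alternatively, since every good loss is nonnegative (see Step 3), we may enlarge $U$ to all good indices of $S_t$ at or above the threshold, of which there are at most $k$. Either choice keeps the index set inside $G$ with at most $k$ elements, which is what the definition of $L_{G,k}^*$ requires.

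\textbf{Step 3 (per-component bound).} For $i\in G$, Assumption~\ref{assump:sgd}.\ref{ass:component_smoothness} gives the descent lemma
\[
f_i(\ve{v})\le f_i(\ve{x}^*)+\langle\nabla f_i(\ve{x}^*),\ve{v}-\ve{x}^*\rangle+\tfrac{L_i}{2}\|\ve{v}-\ve{x}^*\|^2 .
\]
Assumption~\ref{assump:sgd}.\ref{ass:good_set_regularity} gives $f_i(\ve{x}^*)=0$ and $\nabla f_i(\ve{x}^*)=0$, so
\[
f_i(\ve{v})\le \tfrac{L_i}{2}\|\ve{v}-\ve{x}^*\|^2 .
\]
Convexity also gives $f_i(\ve{v})\ge 0$, which justifies the enlargement option in Step 2. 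Summing over $U$ yields
\[
\sum_{i\in U} f_i(\ve{v})\le \Bigl(\sum_{i\in U}\tfrac{L_i}{2}\Bigr)\|\ve{v}-\ve{x}^*\|^2\le L_{G,k}^*\|\ve{v}-\ve{x}^*\|^2 .
\]
Combining this with Step 2 gives the claimed bound.

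The only delicate point is the counting in Step 1, which must handle ties and the index convention for $z_{(qk)}$. Taking $U$ among the order-statistic positions $qk,\ldots,k$, rather than among indices with loss strictly above $Q$, settles this. Outlier nonnegativity is not actually needed here, because the outliers are simply discarded from the count.
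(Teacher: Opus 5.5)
Your proof is correct and follows essentially the same route as the paper: identify at least $\nu$ good sampled indices whose losses are at or above the empirical quantile, and average over them. Then bound each good loss by $\frac{L_i}{2}\|\ve{v}-\ve{x}^*\|^2$ using the descent inequality and good-component regularity, and sum to $L_{G,k}^*$. The only difference is that you sum over exactly $\nu$ such indices rather than over all of $S_t\cap G$ as the paper does, which (as you note) makes the paper's appeal to outlier nonnegativity to get $\gamma\ge 0$ unnecessary.
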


\begin{proof}
Let
\begin{equation}\label{eq:gamma-quant}
\gamma:=Q_q(\{f_i(\ve{v})\}_{i\in S_t}).
\end{equation}
Recall that $|S_t| = k$ and by definition
of the empirical $q$-quantile,
{$
\gamma=z_{( qk )}$} {where the $\{z_{(j)}\}_{j=1}^k$ values are the order statistics of $\{f_i(\ve{v})\}_{i \in S_t}$}, and
at least
{$
k- qk -s
=
\nu
$}

indices in $S_t\cap G$ satisfy
\[
f_i(\ve{v})\ge \gamma.
\]

By outlier nonnegativity and by convexity plus good-set {regularity} for the good components, all sampled losses are nonnegative. Hence $\gamma\ge 0$ and
\begin{equation}\label{eq:est-1}
\gamma\,\nu
\le
\sum_{i\in S_t\cap G} f_i(\ve{v}).
\end{equation}
Let us further bound the right-hand side. Using the descent inequality implied by the $L_i$-Lipschitz gradient assumption {(Assumption~\ref{assump:sgd}.\ref{ass:component_smoothness})}, together with the good-component regularity at the point $\ve{x}^*$, we obtain for each $i\in G$
\[
{f_i(\ve{v}) =} f_i(\ve{v})-f_i(\ve{x}^*)
\le
\nabla f_i(\ve{x}^*)^T(\ve{v}-\ve{x}^*)
+
\frac{L_i}{2}\|\ve{v}-\ve{x}^*\|^2
=
\frac{L_i}{2}\|\ve{v}-\ve{x}^*\|^2.
\]
Summing over $i\in S_t\cap G$ yields
\begin{equation}\label{eq:est-2}
\sum_{i\in S_t\cap G} f_i(\ve{v})
\le
\sum_{i\in S_t\cap G}
\frac{L_i}{2}\|\ve{v}-\ve{x}^*\|^2 \le L_{G,k}^*
\|\ve{v}-\ve{x}^*\|^2,
\end{equation}
since $S_t\cap G\subseteq G$, $|S_t\cap G|\le k$, and by definition of
$L_{G,k}^*$,
$
\sum\limits_{i\in S_t\cap G}
\frac{L_i}{2}
\le
L_{G,k}^*.$
Combining \eqref{eq:gamma-quant}, \eqref{eq:est-1} and \eqref{eq:est-2}, we conclude that
\[
Q_q(\{f_i(\ve{v})\}_{i\in S_t})
\le
\frac{L_{G,k}^*}{\nu}
\|\ve{v}-\ve{x}^*\|^2.
\]
\end{proof}

Lemma~\ref{lem:gradbound} is the standard self-bounding inequality for smooth nonnegative functions; see, e.g., \cite[Lemma 4]{li2019convergence}.

\begin{lem}\label{lem:gradbound}

Let $f$ be a nonnegative differentiable function satisfying the descent
inequality with constant $L>0$, i.e.,
\[
f(\ve{x})-f(\ve{y})
\le
\nabla f(\ve{y})^T(\ve{x}-\ve{y})
+\frac{L}{2}\|\ve{x}-\ve{y}\|^2
\qquad
\text{for all }\ve{x},\ve{y}\in\mathbb{R}^n.
\]
Then, for any $\ve{v}\in\mathbb{R}^n$,
\[
\|\nabla f(\ve{v})\|^2\le 2L f(\ve{v}).
\]
\end{lem}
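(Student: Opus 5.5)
The plan is to apply the descent inequality at a specially chosen point, namely one full gradient step away from $\ve{v}$. Fix $\ve{v}\in\mathbb{R}^n$ and take
\[
\ve{y}=\ve{v},\qquad \ve{x}=\ve{v}-\tfrac{1}{L}\nabla f(\ve{v})
\]
in the descent inequality. This step length $1/L$ is exactly the one that minimizes the quadratic upper model around $\ve{v}$, so it should produce the sharpest lower bound on $f(\ve{v})$.

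With this choice the inequality reads
\[
f(\ve{x})-f(\ve{v})\le -\tfrac{1}{L}\|\nabla f(\ve{v})\|^2+\tfrac{1}{2L}\|\nabla f(\ve{v})\|^2=-\tfrac{1}{2L}\|\nabla f(\ve{v})\|^2 .
\]
Next, nonnegativity gives $f(\ve{x})\ge 0$. The left-hand side is therefore at least $-f(\ve{v})$, and rearranging yields $\|\nabla f(\ve{v})\|^2\le 2L f(\ve{v})$, which is the claim.

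There is no genuine obstacle here. The only point to watch is that the descent inequality is assumed to hold for all pairs $\ve{x},\ve{y}$, so it may be applied at the shifted point $\ve{x}$, which generally lies outside any particular region of interest. Convexity of $f$ is not needed, and the hypotheses $L>0$ and differentiability are precisely what the step above uses. This is also why the lemma can later be applied to the outlier components $f_i$, $i\in O$: by Assumption~\ref{assump:sgd}.\ref{ass:outlier_nonnegativity} and \ref{ass:component_smoothness} they are nonnegative and $L_i$-smooth, but they are not assumed convex.
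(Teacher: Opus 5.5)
Your proof is correct, and it is exactly the standard argument. The paper gives no proof of its own and simply cites this self-bounding inequality: evaluate the descent inequality at $\ve{v}-\tfrac{1}{L}\nabla f(\ve{v})$ and use $f\ge 0$ there. Your remark that convexity is not needed is also right, and it is what allows the lemma to be applied to the possibly non-convex outlier components in Lemma~\ref{lem:corruptProjection-sgd}.
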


{Now, we combine Lemmas~\ref{lem:quantilebound-sgd} and~\ref{lem:gradbound} to bound the increase in error caused by an iteration using an outlier component.}

\begin{lem}\label{lem:corruptProjection-sgd}
Suppose {Assumptions~\ref{assump:samplesize} and~\ref{assump:sgd}} hold, and let $\ve{x}_t$ denote the iterates
of Algorithm~\ref{alg:qklsgd}. If the selected update index satisfies
$i_t\in O_t = B_t\cap O$, then
\[
\|\ve{x}_{t+1}-\ve{x}^*\|^2
\le
r_O\|\ve{x}_t-\ve{x}^*\|^2,
\]
where
\[
r_O=(1+\eta \chi)^2,
\qquad
\chi=
\sqrt{
\frac{2L_{O}^*L_{G,k}^*}{\nu}
}.
\]
\end{lem}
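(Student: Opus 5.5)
The argument is a triangle-inequality bound on the update step, with the gradient norm controlled by chaining Lemma~\ref{lem:gradbound} with Lemma~\ref{lem:quantilebound-sgd}. Write
\[
\ve{x}_{t+1}-\ve{x}^*=(\ve{x}_t-\ve{x}^*)-\eta\nabla f_{i_t}(\ve{x}_t),
\]
so that
\[
\|\ve{x}_{t+1}-\ve{x}^*\|\le\|\ve{x}_t-\ve{x}^*\|+\eta\|\nabla f_{i_t}(\ve{x}_t)\|.
\]
It therefore suffices to show $\|\nabla f_{i_t}(\ve{x}_t)\|\le\chi\|\ve{x}_t-\ve{x}^*\|$. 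Squaring both sides of the triangle bound then gives the factor $(1+\eta\chi)^2$. No sign or alignment information about the outlier gradient is used, which is what makes the bound valid in the adversarial case.

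To bound the gradient, first note that $i_t\in O$. By Assumption~\ref{assump:sgd}.\ref{ass:outlier_nonnegativity}, $f_{i_t}$ is nonnegative. By Assumption~\ref{assump:sgd}.\ref{ass:component_smoothness}, it has $L_{i_t}$-Lipschitz gradient, and hence satisfies the descent inequality with constant $L_{i_t}\le L_O^*$. Lemma~\ref{lem:gradbound} then gives
\[
\|\nabla f_{i_t}(\ve{x}_t)\|^2\le 2L_O^* f_{i_t}(\ve{x}_t).
\]
Next, $i_t\in B_t$, so by the definition of the accepted set,
\[
f_{i_t}(\ve{x}_t)\le Q_q(\{f_\ell(\ve{x}_t)\}_{\ell\in S_t}).
\]

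The only step needing care is applying Lemma~\ref{lem:quantilebound-sgd} at the random point $\ve{v}=\ve{x}_t$. That lemma is stated for an arbitrary fixed vector. Its proof is purely deterministic: it uses only the counting fact that at least $\nu$ good sampled indices lie at or above the quantile, which holds for any realization of $S_t$ and any $\ve{v}$. So we may condition on $\ve{x}_t$ and $S_t$ and apply it pointwise, obtaining
\[
f_{i_t}(\ve{x}_t)\le \frac{L_{G,k}^*}{\nu}\|\ve{x}_t-\ve{x}^*\|^2.
\]
Combining the two displays gives
\[
\|\nabla f_{i_t}(\ve{x}_t)\|^2\le \frac{2L_O^*L_{G,k}^*}{\nu}\|\ve{x}_t-\ve{x}^*\|^2=\chi^2\|\ve{x}_t-\ve{x}^*\|^2.
\]
Taking square roots and substituting into the triangle bound completes the proof.
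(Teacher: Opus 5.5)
Your proposal is correct and follows the paper's proof essentially step for step. Both arguments use the triangle inequality on the update, Lemma~\ref{lem:gradbound} with $L_{i_t}\le L_O^*$, the acceptance condition $f_{i_t}(\ve{x}_t)\le Q_q(\{f_\ell(\ve{x}_t)\}_{\ell\in S_t})$, and Lemma~\ref{lem:quantilebound-sgd} applied pointwise at $\ve{v}=\ve{x}_t$ (your remark that this is legitimate because that lemma's proof is deterministic is a correct and useful clarification), and then square.
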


\begin{proof}
Fix a realization of $S_t$ and suppose $i_t\in O_t$. Starting with
$\|\ve{x}_{t+1}-\ve{x}^*\|$, we have
\begin{align*}
\|\ve{x}_{t+1}-\ve{x}^*\|
=
\|\ve{x}_t-\ve{x}^*-\eta\nabla f_{i_t}(\ve{x}_t)\| \le
\|\ve{x}_t-\ve{x}^*\|
+
\eta\|\nabla f_{i_t}(\ve{x}_t)\|.
\end{align*}
Since $i_t\in O_t\subseteq B_t$, Lemma~\ref{lem:quantilebound-sgd}  and Lemma~\ref{lem:gradbound}
give
\begin{align*}
\|\nabla f_{i_t}(\ve{x}_t)\|^2
\le
2L_{i_t} f_{i_t}(\ve{x}_t)
\le
2L_{i_t} Q_q(\{f_j(\ve{x}_t)\}_{j\in S_t})
\le
\frac{2L_{O}^*L_{G,k}^*}{\nu}
\|\ve{x}_t-\ve{x}^*\|^2.
\end{align*}
Thus,
\[
\|\nabla f_{i_t}(\ve{x}_t)\|
\le
\chi\|\ve{x}_t-\ve{x}^*\|.
\]
Consequently,
\[
\|\ve{x}_{t+1}-\ve{x}^*\|
\le
(1+\eta \chi)
\|\ve{x}_t-\ve{x}^*\|.
\]
Squaring both sides gives the claimed result.
\end{proof}

{We now turn to the case that the sampled component is good.}

\begin{lem}\label{lem:noncorruptProjection-sgd}
Suppose {Assumptions~\ref{assump:samplesize} and~\ref{assump:sgd}} hold, and let $\ve{x}_t$ denote the iterates
of Algorithm~\ref{alg:qklsgd}. Assume that the step size satisfies
\[
\eta \le \frac{2}{L_{G}^*}.
\]
Conditioned on the selected update index satisfying $i_t\in G_t$, the algorithm
selects $i_t$ uniformly from $G_t$, and
\[
\mathbb{E}_{i_t\sim \mathrm{Unif}(G_t)}
\|\ve{x}_{t+1}-\ve{x}^*\|^2
\le
r_G \|\ve{x}_t-\ve{x}^*\|^2,
\]
where
\[
r_G := 1-2\eta\mu^*+\eta^2\mu^*L_{G}^*,
\quad \text{ and } \quad
\mu^*
:=
\min_{\substack{T\subseteq G\\ |T|\ge \tau}}\mu_T.
\]
\end{lem}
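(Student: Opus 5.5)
We fix a realization of $S_t$ (hence of $B_t$ and $G_t$) and condition on $i_t\in G_t$. Because $i_t$ is drawn uniformly from $B_t$, conditioning on $i_t\in G_t$ leaves $i_t$ uniform on $G_t$. Writing $\ve{e}_t:=\ve{x}_t-\ve{x}^*$, we expand the update:
\[
\|\ve{x}_{t+1}-\ve{x}^*\|^2=\|\ve{e}_t\|^2-2\eta\langle \ve{e}_t,\nabla f_{i_t}(\ve{x}_t)\rangle+\eta^2\|\nabla f_{i_t}(\ve{x}_t)\|^2 .
\]
We then average over $i_t\sim\mathrm{Unif}(G_t)$. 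Since $\nabla f_i(\ve{x}^*)=0$ for $i\in G$ (Assumption~\ref{assump:sgd}.\ref{ass:good_set_regularity}), the average of the gradients is $\nabla F_{G_t}(\ve{x}_t)-\nabla F_{G_t}(\ve{x}^*)$. By \eqref{eq:ind-lower-bd}, $|G_t|\ge\tau\ge r_{\mathrm{sc}}$, so Assumption~\ref{assump:sgd}.\ref{ass:subset_strong_convexity} applies to $G_t$. This gives
\[
\frac{1}{|G_t|}\sum_{i\in G_t}\langle \ve{e}_t,\nabla f_i(\ve{x}_t)\rangle\ge \mu_{G_t}\|\ve{e}_t\|^2\ge\mu^*\|\ve{e}_t\|^2 .
\]

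For the quadratic term, we use co-coercivity of each convex $L_i$-smooth $f_i$ with $\nabla f_i(\ve{x}^*)=0$:
\[
\|\nabla f_i(\ve{x}_t)\|^2\le L_i\langle \ve{e}_t,\nabla f_i(\ve{x}_t)\rangle\le L_G^*\langle \ve{e}_t,\nabla f_i(\ve{x}_t)\rangle .
\]
Averaging, and writing $a:=\frac{1}{|G_t|}\sum_{i\in G_t}\langle \ve{e}_t,\nabla f_i(\ve{x}_t)\rangle$, we get
\[
\mathbb{E}\|\ve{x}_{t+1}-\ve{x}^*\|^2\le\|\ve{e}_t\|^2-(2\eta-\eta^2L_G^*)\,a .
\]
The coefficient $2\eta-\eta^2L_G^*$ is nonnegative precisely because $\eta\le 2/L_G^*$. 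We may therefore replace $a$ by its lower bound $\mu^*\|\ve{e}_t\|^2$, which yields
\[
\mathbb{E}\|\ve{x}_{t+1}-\ve{x}^*\|^2\le(1-2\eta\mu^*+\eta^2\mu^*L_G^*)\|\ve{e}_t\|^2=r_G\|\ve{e}_t\|^2 .
\]

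The main point of care is the quadratic term. A bound of the form $\|\nabla f_i\|^2\le L_i^2\|\ve{e}_t\|^2$ would give $\eta^2 (L_G^*)^2$ rather than the required $\eta^2\mu^*L_G^*$. Obtaining the stated form requires co-coercivity, which is the Baillon--Haddad consequence of convexity plus $L_i$-smoothness; this is where the convexity in Assumption~\ref{assump:sgd}.\ref{ass:good_set_regularity} is used. The bound must also be applied to the averaged inner product \emph{before} invoking strong convexity, so that the sign condition $\eta\le 2/L_G^*$ allows the lower bound $\mu^*$ to be substituted.
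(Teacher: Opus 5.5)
Your proof is correct and follows the paper's argument essentially step for step: you expand the squared error, bound the quadratic term by co-coercivity with $L_{i_t}\le L_G^*$, and average over $i_t\sim\mathrm{Unif}(G_t)$ to obtain $\nabla F_{G_t}(\ve{x}_t)-\nabla F_{G_t}(\ve{x}^*)$; you then invoke subset strong convexity (valid since $|G_t|\ge\tau\ge r_{\mathrm{sc}}$) together with $2\eta-\eta^2L_G^*\ge 0$. The only step you leave implicit is that replacing $L_i$ by $L_G^*$ requires $\langle \ve{e}_t,\nabla f_i(\ve{x}_t)\rangle\ge 0$; the paper states this explicitly as a consequence of convexity, and it is also implied by your co-coercivity inequality itself.
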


\begin{proof}
Since conditional on $(\ve{x}_t,S_t)$, the algorithm is uniform on $B_t$; then conditional on the event $i_t\in G_t$, it is uniform on $G_t$.

Recall that by \eqref{eq:ind-lower-bd} and Assumption~\ref{assump:sgd} the number of good accepted indices is large enough for the subset strong convexity
assumption to apply. Now fix $i_t\in G_t$. Since $i_t\in G$, Assumption~\ref{assump:sgd} gives
$\nabla f_{i_t}(\ve{x}^*)=0.$
We have
\begin{align*}
\|\ve{x}_{t+1}-\ve{x}^*\|^2
&=
\|\ve{x}_t-\ve{x}^*-\eta\nabla f_{i_t}(\ve{x}_t)\|^2 \\
&=
\|\ve{x}_t-\ve{x}^*\|^2
-2\eta
\langle \ve{x}_t-\ve{x}^*,\nabla f_{i_t}(\ve{x}_t)\rangle
+\eta^2\|\nabla f_{i_t}(\ve{x}_t)\|^2 \\
&=
\|\ve{x}_t-\ve{x}^*\|^2
-2\eta
\langle
\ve{x}_t-\ve{x}^*,
\nabla f_{i_t}(\ve{x}_t)-\nabla f_{i_t}(\ve{x}^*)
\rangle
+\eta^2
\|\nabla f_{i_t}(\ve{x}_t)-\nabla f_{i_t}(\ve{x}^*)\|^2 \\
&\le \|\ve{x}_t-\ve{x}^*\|^2
-
\bigl(2\eta-\eta^2L_{i_t}\bigr)
\bigl\langle
\ve{x}_t-\ve{x}^*,
\nabla f_{i_t}(\ve{x}_t)-\nabla f_{i_t}(\ve{x}^*)
\bigr\rangle,
\end{align*}
where the last step holds by co-coercivity for convex $L_{i_t}$-smooth functions.
Since $i_t\in G$, convexity implies
\[
\bigl\langle
\ve{x}_t-\ve{x}^*,
\nabla f_{i_t}(\ve{x}_t)-\nabla f_{i_t}(\ve{x}^*)
\bigr\rangle
\ge 0.
\]
Using $L_{i_t}\le L_{G}^*$, we get
\begin{align*}
\|\ve{x}_{t+1}-\ve{x}^*\|^2
&\le
\|\ve{x}_t-\ve{x}^*\|^2 -
\bigl(2\eta-\eta^2L_{G}^*\bigr)
\bigl\langle
\ve{x}_t-\ve{x}^*,
\nabla f_{i_t}(\ve{x}_t)-\nabla f_{i_t}(\ve{x}^*)
\bigr\rangle.
\end{align*}

Now take expectation over the uniform choice of $i_t\in G_t$. Since
\[
F_{G_t}(\ve{x})
=
\frac{1}{|G_t|}\sum_{i\in G_t}f_i(\ve{x}),
\]
we have
\[
\mathbb{E}_{i_t\sim\mathrm{Unif}(G_t)}
\left[
\nabla f_{i_t}(\ve{x}_t)-\nabla f_{i_t}(\ve{x}^*)
\right]
=
\nabla F_{G_t}(\ve{x}_t)-\nabla F_{G_t}(\ve{x}^*).
\]
Therefore
\begin{align*}
\mathbb{E}_{i_t\sim\mathrm{Unif}(G_t)}
\|\ve{x}_{t+1}-\ve{x}^*\|^2
&\le
\|\ve{x}_t-\ve{x}^*\|^2 -
\bigl(2\eta-\eta^2L_{G}^*\bigr)
\bigl\langle
\ve{x}_t-\ve{x}^*,
\nabla F_{G_t}(\ve{x}_t)-\nabla F_{G_t}(\ve{x}^*)
\bigr\rangle.
\end{align*}
Since $|G_t|\ge \tau$ and $G_t\subseteq G$, the subset strong convexity
assumption implies
\[
\bigl\langle
\ve{x}_t-\ve{x}^*,
\nabla F_{G_t}(\ve{x}_t)-\nabla F_{G_t}(\ve{x}^*)
\bigr\rangle
\ge
\mu_{G_t}\|\ve{x}_t-\ve{x}^*\|^2
\ge
\mu^*\|\ve{x}_t-\ve{x}^*\|^2.
\]
Since $\eta\le 2/L_{G}^*$,
$2\eta-\eta^2L_{G}^*\ge 0,$
and we conclude
\begin{align*}
\mathbb{E}_{i_t\sim\mathrm{Unif}(G_t)}
\|\ve{x}_{t+1}-\ve{x}^*\|^2
&\le
\|\ve{x}_t-\ve{x}^*\|^2
-
\bigl(2\eta-\eta^2L_{G}^*\bigr)\mu^*
\|\ve{x}_t-\ve{x}^*\|^2 \\
&=
\left(
1-2\eta\mu^*
+\eta^2\mu^*L_{G}^*
\right)
\|\ve{x}_t-\ve{x}^*\|^2.
\end{align*}
This proves the claim.
\end{proof}

We are now ready to prove the main Theorem~\ref{thm:main}. We will shorten

\[
\mathbb{E}_t[\cdot]
:=
\mathbb{E}[\cdot\mid \ve{x}_t],
\qquad
\mathbb{P}_t(\cdot)
:=
\mathbb{P}(\cdot\mid \ve{x}_t),
\]
where the randomness is over the fresh sample set $S_t$ and the uniform choice
of $i_t\in B_t$.

\begin{proof}{(Proof of Theorem~\ref{thm:main})}

\textbf{Step 1 {(Checking step size assumption)}.} First, we check that the stated step-size bound indeed implies the admissibility condition. Recall {that}
\begin{equation}\label{eq:eta-standard}
0<\eta\le \frac{2}{L_{G}^*}
\end{equation}
{is} required in Lemma~\ref{lem:noncorruptProjection-sgd}.
Indeed, by \eqref{eq:mu-star}, we have
\[
(1-\rho)\mu^*>\rho \chi,
\]
and therefore
\[
\bar\eta:=
\frac{
2\bigl((1-\rho)\mu^*-\rho \chi\bigr)
}{
(1-\rho)\mu^*L_{G}^*+\rho \chi^2
}
>0.
\]
We note that
\[
L_{G}^*\bigl((1-\rho)\mu^*-\rho \chi\bigr)
\le
(1-\rho)\mu^* L_{G}^*+\rho \chi^2,
\]
and so
\[
\bar\eta=
\frac{
2\bigl((1-\rho)\mu^*-\rho \chi\bigr)
}{
(1-\rho)\mu^* L_{G}^*+\rho \chi^2
}
\le
\frac{2}{L_{G}^*}
\]
and \eqref{eq:eta-standard} is satisfied for the considered step sizes $\eta \in (0, \bar{\eta}]$.

\textbf{Step 2 {(A one-step bound)}.} We now prove the one-step estimate. Given $(\ve{x}_t,S_t)$, the sets $B_t$, $G_t$, and $O_t$ are deterministic, and the algorithm chooses $i_t$ uniformly from $B_t$. Therefore,
\begin{align*}
\mathbb{E}\!\left[\|\ve{x}_{t+1}-\ve{x}^*\|^2 \mid \ve{x}_t,S_t\right]
&=
\mathbb{P}(i_t\in G_t\mid \ve{x}_t,S_t)\,
\mathbb{E}\!\left[
\|\ve{x}_{t+1}-\ve{x}^*\|^2
\,\middle|\,
\ve{x}_t,S_t,i_t\in G_t
\right] \\
&\qquad
+
\mathbb{P}(i_t\in O_t\mid \ve{x}_t,S_t)\,
\mathbb{E}\!\left[
\|\ve{x}_{t+1}-\ve{x}^*\|^2
\,\middle|\,
\ve{x}_t,S_t,i_t\in O_t
\right].
\end{align*}
Now define
\[
p_t(S_t):=\mathbb{P}(i_t\in O_t\mid \ve{x}_t,S_t)=\frac{|O_t|}{|B_t|}.
\]
Since $|O_t|\le s$ and {$|B_t|\ge  qk$}, we have
{\[
p_t(S_t)\le \frac{s}{ qk}=\rho.
\]}
Also, conditional on $(\ve{x}_t,S_t)$ and the event $i_t\in G_t$, the index $i_t$ is uniformly distributed on $G_t$; similarly, conditional on $(\ve{x}_t,S_t)$ and the event $i_t\in O_t$, the selected index lies in $O_t$. Thus, with $r_O$ as defined in Lemma~\ref{lem:corruptProjection-sgd} and with $r_G$ as defined in Lemma~\ref{lem:noncorruptProjection-sgd}, we can bound
\begin{align*}
\mathbb{E}\!\left[\|\ve{x}_{t+1}-\ve{x}^*\|^2 \mid \ve{x}_t,S_t\right]
&\le
(1-p_t(S_t))r_G\|\ve{x}_t-\ve{x}^*\|^2
+
p_t(S_t) r_O\|\ve{x}_t-\ve{x}^*\|^2 \\
&=
\bigl((1-p_t(S_t))r_G+p_t(S_t) r_O\bigr)
\|\ve{x}_t-\ve{x}^*\|^2.
\end{align*}
Next, $r_O=(1+\eta \chi)^2\ge 1$ and, since $\eta\le 2/L_{G}^*$,
$
r_G
=
1-\eta\mu^*(2-\eta L_{G}^*)
\le 1.$
Hence the function
\[
p\mapsto (1-p)r_G+pr_O
\]
is increasing on $[0,1]$. Using $p_t(S_t)\le \rho$, we obtain
\[
\mathbb{E}\!\left[\|\ve{x}_{t+1}-\ve{x}^*\|^2 \mid \ve{x}_t,S_t\right]
\le \kappa_{\mathrm{rate}}
\|\ve{x}_t-\ve{x}^*\|^2,
\qquad
\kappa_{\mathrm{rate}}=(1-\rho)r_G+\rho r_O.
\]
Finally, taking conditional expectation over $S_t$ given $\ve{x}_t$ yields
\[
\mathbb{E}_t\|\ve{x}_{t+1}-\ve{x}^*\|^2
\le \kappa_{\mathrm{rate}}
\|\ve{x}_t-\ve{x}^*\|^2.
\]
This proves the one-step bound.

\textbf{Step 3 {(Contraction guarantee)}.} It remains to show that $\kappa_{\mathrm{rate}}<1$.
We have
\[
\kappa_{\mathrm{rate}}
=
(1-\rho)(1-2\eta\mu^*+\eta^2\mu^* L_{G}^*)
+
\rho(1+\eta \chi)^2.
\]
Therefore
\[
\kappa_{\mathrm{rate}}-1
=
-2\eta\bigl((1-\rho)\mu^*-\rho \chi\bigr)
+
\eta^2\bigl((1-\rho)\mu^* L_{G}^*+\rho \chi^2\bigr).
\]
By the structural condition on $\mu^*$,
\[
(1-\rho)\mu^*>\rho \chi,
\]
and by the assumed step-size bound,
\[
0<\eta<
\frac{
2\bigl((1-\rho)\mu^*-\rho \chi\bigr)
}{
(1-\rho)\mu^* L_{G}^*+\rho \chi^2
}.
\]
Thus $\kappa_{\mathrm{rate}}-1<0$,
or equivalently,
$
\kappa_{\mathrm{rate}}<1.
$
Taking total expectation in the one-step estimate and iterating gives
\[
\mathbb{E}\|\ve{x}_t-\ve{x}^*\|^2
\le
\kappa_{\mathrm{rate}}^t
\|\ve{x}_0-\ve{x}^*\|^2.
\]
This completes the proof.
\end{proof}

\subsection{Analysis for the sample size smaller than the number of corruptions}\label{sec:small-sample}

The deterministic analysis above allows the sample size \(k\) to be chosen in
terms of the number of outliers {$s$} and the subset strong convexity threshold {$r_{sc}$},
rather than in terms of the full dataset size \(m\). This is already an improvement over earlier subsampled robust methods in the linear
setting \citep{haddock2023subsampled, HNRS20}, whose guarantees typically
required sample sizes of order \(O(m)\).

We now consider a complementary regime in which the same algorithm,
Q\(k\)L-SGD, is used with even smaller sample sizes, possibly constant. In this
regime, the deterministic counting arguments from the previous section are no
longer {enough to provide convergence guarantees}: a random sample may contain too many outliers, or even only
outliers, and the sample quantile need not reliably reflect the behavior of the
uncorrupted gradients of the full dataset. On the other hand, in practice
corrupted residuals often become comparatively larger later in the iterations
as the method approaches the optimizer, as observed in
\citep{haddock2019randomized,shvaiko2026quantile}. This observation has
motivated quantile-based robust rules on small samples in both the linear
\citep{haddock2023subsampled} and nonlinear \citep{shah2020choosing} settings, and there is also recent evidence
that logarithmic sample sizes may suffice under additional assumptions
\citep{cai2025subsample}.

Motivated by these observations, we consider a modified probabilistic model with
two additional assumptions: (a) the probability that
Algorithm~\ref{alg:qklsgd} selects a corrupted index is bounded from above,
and (b) second moments of the corrupted gradients cannot be arbitrarily large:
\begin{assump}[Small-sample probabilistic assumptions]
\label{assump:small-sample-prob}
Fix a region \(\mathcal R\subseteq\mathbb{R}^n\) containing \(\ve{x}^*\), and
suppose the iterates

under consideration remain in \(\mathcal R\). We assume
the following.

\begin{enumerate}[label=\Roman*., ref=\Roman*]
    \item \textbf{Good-component regularity.}
    For every \(i\in G\),
    \[
    f_i(\ve{x}^*)=0,
    \qquad
    \nabla f_i(\ve{x}^*)=0,
    \]
    and \(f_i\) is convex and has \(L_i\)-Lipschitz gradient.

    \item \textbf{Soft quantile separation.}
    There exists \(\pi_{q,k}\in[0,1)\) such that, for every \(\ve{x}\in\mathcal R\),
    \[
    \mathbb{P}(i_t\in O\mid \ve{x}_t=\ve{x})\le \pi_{q,k},
    \]
    where \(i_t\) is the index selected by Algorithm~\ref{alg:qklsgd}.
    This assumption requires that the selected lower-tail quantile rule
    chooses a corrupted index with probability at most \(\pi_{q,k}\).

    \item \textbf{Good-step expected curvature.} Let \(\mu_{q,k}\ge0\) be a uniform curvature lower bound such that, for
every \(\ve{x}\in\mathcal R\),
    \[
    \mathbb{E}\!\left[
    \left\langle
    \ve{x}-\ve{x}^*,
    \nabla f_{i_t}(\ve{x})-\nabla f_{i_t}(\ve{x}^*)
    \right\rangle
    \,\middle|\,
    \ve{x}_t=\ve{x},\ i_t\in G
    \right]
    \ge
    \mu_{q,k}\|\ve{x}-\ve{x}^*\|^2.
    \]

    \item \textbf{Bounded corrupted updates.}
    There exists \(G_O<\infty\) such that, for every \(\ve{x}\in\mathcal R\),
    \[
    \mathbb{E}\!\left[
    \|\nabla f_{i_t}(\ve{x})\|^2
    \,\middle|\,
    \ve{x}_t=\ve{x},\ i_t\in O
    \right]
    \le
    G_O^2.
    \]
\end{enumerate}
\end{assump}

One can obtain meaningful convergence rates in the small-sample regime under probabilistic assumptions, that is, assuming the probability of sampling an outlier is sufficiently small. We first formulate and prove our next result, {Theorem~\ref{thm:small-sample-prob}}, and then discuss how rather strong probabilistic {assumptions} naturally appear from the geometric assumptions, such as that the outlier losses are large enough compared to the uncorrupt losses.

\begin{thm}[Small-sample convergence]
\label{thm:small-sample-prob}
Suppose Assumption~\ref{assump:small-sample-prob} holds and the step size satisfies the standard condition
$0<\eta\le 2/L_{G}^*.$
Then
\[
\mathbb{E}_t[\|\ve{x}_{t+1}-\ve{x}^*\|^2]
\le
\kappa_{q,k}\|\ve{x}_t-\ve{x}^*\|^2+b_{q,k},
\]
where
\[
\kappa_{q,k}
=
1 + \pi_{q,k}
-
(1-\pi_{q,k})
\bigl(2\eta-\eta^2L_{G}^*\bigr)\mu_{q,k}
\quad \text{ and } \quad
b_{q,k}
=
2\pi_{q,k}\eta^2G_O^2.
\]
where where \(L_G^*:=\max_{i\in G}L_i\) and \(\pi_{q,k}\) is an upper bound on the probability that
Algorithm~\ref{alg:qklsgd} selects an outlier at iteration \(t\).   Therefore, whenever the probability of selecting a corrupted component is small enough, that is,
\[
(1-\pi_{q,k})
\bigl(2\eta-\eta^2L_{G}^*\bigr)\mu_{q,k}
>
\pi_{q,k},
\]
we have \(\kappa_{q,k}<1\), and the theorem implies
\[
\mathbb{E}\|\ve{x}_t-\ve{x}^*\|^2
\le
\kappa_{q,k}^t\|\ve{x}_0-\ve{x}^*\|^2
+
\frac{1-\kappa_{q,k}^t}{1-\kappa_{q,k}}
\,b_{q,k}
\]
{with $\kappa_{q,k}^t \searrow 0$ and $\frac{1-\kappa_{q,k}^t}{1-\kappa_{q,k}}
\,b_{q,k} \nearrow \frac{b_{q,k}}{1-\kappa_{q,k}}$ as $t \rightarrow \infty.$}
\end{thm}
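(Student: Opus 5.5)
We condition on $\ve{x}_t=\ve{x}\in\mathcal R$ and split the one-step expectation according to whether the selected index is good or corrupted:
\[
\mathbb{E}_t\|\ve{x}_{t+1}-\ve{x}^*\|^2
=
\mathbb{P}_t(i_t\in G)\,\mathbb{E}_t\!\left[\|\ve{x}_{t+1}-\ve{x}^*\|^2\mid i_t\in G\right]
+\mathbb{P}_t(i_t\in O)\,\mathbb{E}_t\!\left[\|\ve{x}_{t+1}-\ve{x}^*\|^2\mid i_t\in O\right].
\]
We then bound each conditional term separately and combine them using $\mathbb{P}_t(i_t\in O)\le\pi_{q,k}$ from Assumption~\ref{assump:small-sample-prob}.II.

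\textbf{Good branch.} We repeat the computation in the proof of Lemma~\ref{lem:noncorruptProjection-sgd}. Expanding the square and using $\nabla f_{i_t}(\ve{x}^*)=0$ together with co-coercivity of convex $L_{i_t}$-smooth functions gives, pointwise for $i_t\in G$,
\[
\|\ve{x}_{t+1}-\ve{x}^*\|^2\le \|\ve{x}-\ve{x}^*\|^2-(2\eta-\eta^2L_G^*)\langle \ve{x}-\ve{x}^*,\nabla f_{i_t}(\ve{x})-\nabla f_{i_t}(\ve{x}^*)\rangle .
\]
Here the replacement $L_{i_t}\le L_G^*$ is valid because the inner product is nonnegative by convexity. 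Since $\eta\le 2/L_G^*$, the coefficient is nonnegative. Taking the conditional expectation given $i_t\in G$ and applying Assumption~\ref{assump:small-sample-prob}.III yields
\[
\mathbb{E}_t[\cdot\mid i_t\in G]\le \bigl(1-(2\eta-\eta^2L_G^*)\mu_{q,k}\bigr)\|\ve{x}-\ve{x}^*\|^2=:r_G\|\ve{x}-\ve{x}^*\|^2 .
\]

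\textbf{Corrupted branch.} We use $\|a-b\|^2\le 2\|a\|^2+2\|b\|^2$ to get $\|\ve{x}_{t+1}-\ve{x}^*\|^2\le 2\|\ve{x}-\ve{x}^*\|^2+2\eta^2\|\nabla f_{i_t}(\ve{x})\|^2$. Assumption~\ref{assump:small-sample-prob}.IV then gives the conditional bound $2\|\ve{x}-\ve{x}^*\|^2+2\eta^2G_O^2$.

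\textbf{Combining.} Write $p=\mathbb{P}_t(i_t\in O)$. The total bound is
\[
\bigl((1-p)r_G+2p\bigr)\|\ve{x}-\ve{x}^*\|^2+2p\eta^2G_O^2 .
\]
Because $r_G\le 1<2$, this expression is increasing in $p$, so we may replace $p$ by $\pi_{q,k}$. Finally, $(1-\pi_{q,k})r_G+2\pi_{q,k}=1+\pi_{q,k}-(1-\pi_{q,k})(2\eta-\eta^2L_G^*)\mu_{q,k}=\kappa_{q,k}$, and the additive term is $b_{q,k}$. One subtlety is that a conditional expectation given $i_t\in G$ is undefined if $\mathbb{P}_t(i_t\in G)=0$. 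This cannot happen, since $\pi_{q,k}<1$. If instead $\mathbb{P}_t(i_t\in O)=0$, the corrupted branch simply contributes nothing.

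\textbf{Iteration.} The condition $(1-\pi_{q,k})(2\eta-\eta^2L_G^*)\mu_{q,k}>\pi_{q,k}$ is exactly $\kappa_{q,k}<1$. We also need $\kappa_{q,k}\ge 0$ for the unrolled recursion below to behave as claimed. This holds because $\kappa_{q,k}$ bounds an expected nonnegative quantity from above for arbitrary $\ve{x}\ne\ve{x}^*$. Alternatively, it follows directly from $\mu_{q,k}\le L_G^*$ and $\eta(2-\eta L_G^*)L_G^*\le 1$. We take total expectations, using that the iterates stay in $\mathcal R$, and unroll the affine recursion $e_{t+1}\le\kappa e_t+b$ to obtain $e_t\le\kappa^te_0+b\sum_{j<t}\kappa^j$. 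The geometric sum gives the stated formula and its monotone limits.

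The main obstacle is modest: it is ensuring the monotonicity-in-$p$ step and the conditioning are legitimate. Everything else reuses Lemma~\ref{lem:noncorruptProjection-sgd} verbatim.
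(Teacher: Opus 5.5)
Your proposal is correct and follows the paper's proof essentially step for step: the same good/outlier decomposition, the same co-coercivity-plus-expected-curvature bound on the good branch, the same monotonicity-in-$p$ argument to replace $p$ by $\pi_{q,k}$, and the same unrolling of the affine recursion. The only deviation is cosmetic. You get the corrupted-branch bound $2\|\ve{x}_t-\ve{x}^*\|^2+2\eta^2G_O^2$ in one line from $\|a-b\|^2\le 2\|a\|^2+2\|b\|^2$, whereas the paper reaches the identical bound via Cauchy--Schwarz, Jensen, and $2ab\le a^2+b^2$. Your explicit checks that $\mathbb{P}_t(i_t\in G)\ge 1-\pi_{q,k}>0$ and $\kappa_{q,k}\ge 0$ usefully fill in points the paper leaves implicit.
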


\begin{proof}
Let us denote
\[
D_t:=\|\ve{x}_t-\ve{x}^*\|^2 \quad \text{ and } \quad p_t:=\mathbb{P}(i_t\in O\mid \ve{x}_t) \le \pi_{q,k}.
\]
We first control the update conditioned on \(i_t\in G_t\).
Since
\(\nabla f_i(\ve{x}^*)=0\) for every \(i\in G\),
\begin{align*}
D_{t+1}
&=
\|\ve{x}_t-\ve{x}^*-\eta\nabla f_{i_t}(\ve{x}_t)\|^2 \\
&=
D_t
-2\eta
\left\langle
\ve{x}_t-\ve{x}^*,
\nabla f_{i_t}(\ve{x}_t)-\nabla f_{i_t}(\ve{x}^*)
\right\rangle
+\eta^2
\|\nabla f_{i_t}(\ve{x}_t)-\nabla f_{i_t}(\ve{x}^*)\|^2.
\end{align*}
For \(i_t\in G_t\), convexity

and \(L_{i_t}\)-smoothness imply co-coercivity:
\[
\|\nabla f_{i_t}(\ve{x}_t)-\nabla f_{i_t}(\ve{x}^*)\|^2
\le
L_{i_t}
\left\langle
\ve{x}_t-\ve{x}^*,
\nabla f_{i_t}(\ve{x}_t)-\nabla f_{i_t}(\ve{x}^*)
\right\rangle.
\]
Since \(L_{i_t}\le {L_G^*}\),
we obtain
\begin{align*}
\mathbb{E}_t[D_{t+1}\mid i_t\in G_t]
&\le
D_t
-
\bigl(2\eta-\eta^2 {L_G^*}
\bigr)
\mathbb{E}_t\!\left[
\left\langle
\ve{x}_t-\ve{x}^*,
\nabla f_{i_t}(\ve{x}_t)-\nabla f_{i_t}(\ve{x}^*)
\right\rangle
\,\middle|\,
i_t\in G_t
\right].
\end{align*}
Using the good-step expected curvature assumption gives
\[
\mathbb{E}_t[D_{t+1}\mid i_t\in G_t]
\le
r_G^{\mathrm{prob}}D_t,
\]
where
\[
r_G^{\mathrm{prob}}
:=
1-\bigl(2\eta-\eta^2L_{G}^*\bigr)\mu_{q,k}.
\]

Next, condition on \(i_t\in O_t\). By Cauchy--Schwarz, the bounded corrupted
update assumption, {and Jensen's inequality,}
\begin{align*}
\mathbb{E}_t[D_{t+1}\mid i_t\in O_t]
&=
\mathbb{E}_t\!\left[
\|\ve{x}_t-\ve{x}^*-\eta\nabla f_{i_t}(\ve{x}_t)\|^2
\,\middle|\,
i_t\in O_t
\right] \\
&\le
D_t
+
2\eta \|\ve{x}_t-\ve{x}^*\|
\mathbb{E}_t\!\left[
\|\nabla f_{i_t}(\ve{x}_t)\|
\,\middle|\,
i_t\in O_t
\right]
+
\eta^2G_O^2 \\
&\le
D_t
+
2\eta G_O\sqrt{D_t}
+
\eta^2G_O^2.
\end{align*}
Using \(2ab\le a^2+b^2\) with \(a=\sqrt{D_t}\) and \(b=\eta G_O\), we obtain
\[
2\eta G_O\sqrt{D_t}
\le
D_t+\eta^2G_O^2.
\]
Therefore
\[
\mathbb{E}_t[D_{t+1}\mid i_t\in O_t]
\le
2D_t+2\eta^2G_O^2.
\]

Combining the two conditional estimates yields
\begin{align*}
\mathbb{E}_t[D_{t+1}]
&=
(1-p_t)\mathbb{E}_t[D_{t+1}\mid i_t\in G_t]
+
p_t\mathbb{E}_t[D_{t+1}\mid i_t\in O_t] \\
&\le
(1-p_t)r_G^{\mathrm{prob}}D_t
+
2p_t D_t
+
2p_t\eta^2G_O^2.
\end{align*}
Since \(0<\eta\le 2/{L_G^*}\),
we have \(r_G^{\mathrm{prob}}\le 1<2\).
Thus the right-hand side is increasing in \(p_t\). Using \(p_t\le \pi_{q,k}\),
we obtain
\[
\mathbb{E}_t[D_{t+1}]
\le
\kappa_{q,k}D_t+b_{q,k}.
\]
If \(\kappa_{q,k}<1\), taking total expectation and iterating this
recursion yields
\[
\mathbb E D_t
\le
\kappa_{q,k}^tD_0
+
\sum_{j=0}^{t-1}\kappa_{q,k}^j b_{q,k}
=
\kappa_{q,k}^tD_0
+
\frac{1-\kappa_{q,k}^t}{1-\kappa_{q,k}}b_{q,k}.
\]
\end{proof}

\begin{rem}
The additive term in Theorem~\ref{thm:small-sample-prob} comes only from the
possibility that a selected corrupted update may have nonvanishing magnitude
even near \(\ve{x}^*\). In more benign local regimes, the selected outliers may
also become weak near \(\ve{x}^*\). For example, if
\[
\mathbb{E}\!\left[
\|\nabla f_{i_t}(\ve{x})\|^2
\,\middle|\,
\ve{x}_t=\ve{x},\ i_t\in O_t
\right]
=
o(\|\ve{x}-\ve{x}^*\|^2)
\qquad\text{as }\ve{x}\to\ve{x}^*,
\]
then the corrupted-step contribution can be absorbed into the multiplicative
term and the convergence result becomes an exact linear contraction, that is, $\mathbb{E}\|\ve{x}_t-\ve{x}^*\|^2 \to 0$ as $t \to \infty$.
\end{rem}

Unlike the deterministic large-sample regime, the accepted good set \(G_t\)
is random and need not exceed the subset strong-convexity threshold at every
iteration. Additionally, the expected-curvature constant \(\mu_{q,k}\) in
Assumption~\ref{assump:small-sample-prob}.III can be difficult to verify
directly, and its dependence on \(q\) and \(k\) is not transparent. We next
use the same subset strong-convexity structure as in
Section~\ref{sec:large-sample} to construct a monotone curvature profile that
yields an explicit admissible value of \(\mu_{q,k}\).

\begin{prop}[Subset curvature implies good-step expected curvature]
\label{prop:subset-curvature-small-sample}

Suppose Assumption~\ref{assump:small-sample-prob}.I--II holds. Recall that
\(G\subseteq[m]\) is the fixed set of good component indices and that
\(G_t=B_t\cap G\) is the set of accepted good indices at iteration \(t\). Assume that there exists
$r_{\mathrm{sc}}\in\{1,\ldots,|G|\}$
such that, for every subset \(S\subseteq G\) with
$|S|\ge r_{\mathrm{sc}},$
the averaged objective \(F_S\) is \(\mu_S\)-strongly convex. Define the
subset-curvature profile
\[
\underline{\mu}(h)
:=
\begin{cases}
0,
& 1\le h<r_{\mathrm{sc}},\\[2mm]
\displaystyle
\min_{\substack{S\subseteq G\\ |S|\ge h}}\mu_S,
& r_{\mathrm{sc}}\le h\le |G|.
\end{cases}
\]
Then \(\underline{\mu}(h)\) is nondecreasing in \(h\) and Assumption~\ref{assump:small-sample-prob}.III holds with
\[
\mu_{q,k}
:=
\inf_{\ve{x}\in\mathcal R}
\mathbb{E}\!\left[
\underline{\mu}(|G_t|)
\,\middle|\,
\ve{x}_t=\ve{x},\ i_t\in G
\right].
\]
\end{prop}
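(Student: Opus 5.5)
The plan has two parts: monotonicity of the profile first, then the curvature bound, obtained by conditioning on the sampled set \(S_t\).

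\textbf{Monotonicity.} For \(r_{\mathrm{sc}}\le h<h'\le|G|\), every \(S\subseteq G\) with \(|S|\ge h'\) also satisfies \(|S|\ge h\). So the minimum defining \(\underline{\mu}(h)\) ranges over a superset of the index family for \(\underline{\mu}(h')\), which gives \(\underline{\mu}(h)\le\underline{\mu}(h')\). For \(h<r_{\mathrm{sc}}\) the profile equals \(0\), which is at most any \(\mu_S\ge0\). Nonnegativity of \(\mu_S\) holds because each \(F_S\) with \(S\subseteq G\) is convex. This handles the jump at \(r_{\mathrm{sc}}\). We should also take \(\mu_S\) to be the best (largest) strong convexity constant so that the minimum is well defined; it is a minimum over finitely many subsets.

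\textbf{Curvature bound.} Fix \(\ve{x}\in\mathcal R\) and condition on \(\ve{x}_t=\ve{x}\), on \(S_t\), and on the event \(i_t\in G\), which is the same as \(i_t\in G_t\) and requires \(G_t\ne\emptyset\). As in the proof of Lemma~\ref{lem:noncorruptProjection-sgd}, the index \(i_t\) is then uniform on \(G_t\), a set determined by \((\ve{x},S_t)\). Hence
\[
\mathbb{E}\bigl[\langle \ve{x}-\ve{x}^*,\nabla f_{i_t}(\ve{x})-\nabla f_{i_t}(\ve{x}^*)\rangle \mid \ve{x},S_t,i_t\in G\bigr]=\langle \ve{x}-\ve{x}^*,\nabla F_{G_t}(\ve{x})-\nabla F_{G_t}(\ve{x}^*)\rangle .
\]
We bound this quantity by \(\underline{\mu}(|G_t|)\|\ve{x}-\ve{x}^*\|^2\), splitting into two cases.
\begin{itemize}
\item If \(|G_t|\ge r_{\mathrm{sc}}\), then \(F_{G_t}\) is \(\mu_{G_t}\)-strongly convex. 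Since \(G_t\) is one of the sets in the minimum defining \(\underline{\mu}(|G_t|)\), we have \(\mu_{G_t}\ge\underline{\mu}(|G_t|)\).
\item If \(|G_t|<r_{\mathrm{sc}}\), the inner product is nonnegative by convexity of \(F_{G_t}\), which follows from Assumption~\ref{assump:small-sample-prob}.I. This is at least \(0=\underline{\mu}(|G_t|)\).
\end{itemize}

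\textbf{Averaging.} We then take expectation over \(S_t\) conditional on \(\ve{x}_t=\ve{x}\) and \(i_t\in G\), using the tower property. This gives
\[
\mathbb{E}[\langle\cdot\rangle\mid \ve{x}_t=\ve{x},i_t\in G]\ge \mathbb{E}[\underline{\mu}(|G_t|)\mid \ve{x}_t=\ve{x},i_t\in G]\,\|\ve{x}-\ve{x}^*\|^2 .
\]
The right-hand coefficient is at least the infimum over \(\mathcal R\), which is \(\mu_{q,k}\). This establishes Assumption~\ref{assump:small-sample-prob}.III with \(\mu_{q,k}\ge0\).

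The main obstacle is the conditioning. The law of \(S_t\) given \(i_t\in G\) is a reweighting of the uniform law by \(|G_t|/|B_t|\). What must be checked is that, conditional on \(S_t\) and on \(i_t\in G\), the index \(i_t\) remains uniform on \(G_t\); this holds because \(i_t\) is uniform on \(B_t\) given \(S_t\). One also needs the event \(i_t\in G\) to have positive probability; otherwise the assumption holds vacuously. Monotonicity itself is not used in the bound; it only makes the resulting \(\mu_{q,k}\) interpretable.
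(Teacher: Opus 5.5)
Your proposal is correct and follows the paper's proof essentially step for step. You condition on $(\ve{x}_t,S_t)$ and $i_t\in G_t$ so that $i_t$ is uniform on $G_t$, and reduce to $\langle \ve{x}-\ve{x}^*,\nabla F_{G_t}(\ve{x})-\nabla F_{G_t}(\ve{x}^*)\rangle$. You then split on $|G_t|\ge r_{\mathrm{sc}}$ versus $|G_t|<r_{\mathrm{sc}}$, average over $S_t$, and take the infimum over $\mathcal R$, exactly as the paper does.

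Your monotonicity argument is slightly more careful than the paper's. You explicitly handle the jump at $r_{\mathrm{sc}}$ via $\mu_S\ge 0$, which the paper's ``smaller family'' remark does not cover.
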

\begin{proof}
The monotonicity of \(\underline{\mu}(h)\) is immediate from the definition as minimizing over the smaller family of sets can only increase the value of the minimum. Fix \(\ve{x}\in\mathcal R\). Conditional on \(\ve{x}_t=\ve{x}\), on the
realized sample \(S_t\), and on the event \(i_t\in G_t\), the algorithm selects
\(i_t\) uniformly from the good accepted set \(G_t\). Therefore
\[
\mathbb{E}\!\left[
\nabla f_{i_t}(\ve{x})
\,\middle|\,
\ve{x}_t=\ve{x},\,S_t,\,i_t\in G_t
\right]
=
\nabla F_{G_t}(\ve{x}),
\]
and similarly, since \(\nabla f_i(\ve{x}^*)=0\) for all \(i\in G\),
\[
\mathbb{E}\!\left[
\nabla f_{i_t}(\ve{x}^*)
\,\middle|\,
\ve{x}_t=\ve{x},\,S_t,\,i_t\in G_t
\right]
=
\nabla F_{G_t}(\ve{x}^*)
=
0.
\]
Hence
\begin{align*}
&\mathbb{E}\!\left[
\left\langle
\ve{x}-\ve{x}^*,
\nabla f_{i_t}(\ve{x})-\nabla f_{i_t}(\ve{x}^*)
\right\rangle
\,\middle|\,
\ve{x}_t=\ve{x},\,S_t,\,i_t\in G_t
\right] =
\left\langle
\ve{x}-\ve{x}^*,
\nabla F_{G_t}(\ve{x})-\nabla F_{G_t}(\ve{x}^*)
\right\rangle.
\end{align*}

If \(|G_t|\ge r_{\mathrm{sc}}\), then the subset strong-convexity
assumption gives
\[
\left\langle
\ve{x}-\ve{x}^*,
\nabla F_{G_t}(\ve{x})-\nabla F_{G_t}(\ve{x}^*)
\right\rangle
\ge
\mu_{G_t}\|\ve{x}-\ve{x}^*\|^2
\ge
\underline{\mu}(|G_t|)\|\ve{x}-\ve{x}^*\|^2.
\]
If \(|G_t|<r_{\mathrm{sc}}\), then \(F_{G_t}\) is convex, since it is
an average of convex good components. Hence
\[
\left\langle
\ve{x}-\ve{x}^*,
\nabla F_{G_t}(\ve{x})-\nabla F_{G_t}(\ve{x}^*)
\right\rangle
\ge 0
=
\underline{\mu}(|G_t|)\|\ve{x}-\ve{x}^*\|^2.
\]
Taking conditional expectation over \(S_t\) and the infimum over
\(\ve{x}\in\mathcal R\) gives the claim.
\end{proof}

\begin{rem}
The subset-curvature profile \(\underline{\mu}(h)\) provides a monotone
lower bound on the curvature associated with a realized accepted good set.
Thus, Assumption~\ref{assump:small-sample-prob}.III holds with
\[
\mu_{q,k}
:=
\inf_{\ve{x}\in\mathcal R}
\mathbb{E}\!\left[
\underline{\mu}(|G_t|)
\,\middle|\,
\ve{x}_t=\ve{x},\ i_t\in G_t
\right].
\]

In the deterministic large-sample regime, when
$|G_t|\ge\tau\ge r_{\mathrm{sc}},$
and hence
$
\underline{\mu}(|G_t|)
\ge
\underline{\mu}(\tau)
=
\mu^*.$ As a result, $\mu_{q,k}\ge\mu^*,$
so we recover the large-sample curvature lower bound from the section above.  In the
general small-sample regime, \(|G_t|\) may be smaller than \(\tau\), and even smaller than
\(r_{\mathrm{sc}}\). On these iterations, Theorem~\ref{thm:small-sample-prob} guarantees no contraction even for small $\pi_{q,k}$.
\end{rem}

\subsubsection{Interpretation under large separation regime}
\label{sec:small-sample-hard-separation}

We now specialize the small-sample analysis to a simplified ``warm start and large outliers"
regime, in which every good loss lies strictly below every corrupted loss.

{\begin{assump}[Idealized Model]
    For every \(\ve{x}\in\mathcal R\),
\begin{equation}\label{eq:hard-separation}
f_i(\ve{x})<f_j(\ve{x})
\qquad
\text{for all } i\in G,\ j\in O.
\end{equation}
\label{assump:hard-separation}
\end{assump}}
{The remainder of this section discusses the implications of Assumption~\ref{assump:hard-separation} with respect to the small-sample regime. Proposition~\ref{prop:hard-pi} provides an upper bound on the probability of selecting an outlier, $\pi_{q,k}$. Corollary~\ref{cor:mu-lower-hard} relates the good-step curvature parameter to subset strong convexity. Lastly, we end this section with a discussion on the consequences of this idealized corruption setting. }

\begin{prop}[Hard-separation upper bound on the outlier-selection probability]
\label{prop:hard-pi}
Suppose Assumption~\ref{assump:hard-separation} holds. Let \({S_t =} \{J_{t,1},\dots,J_{t,k}\}\) denote the
sampled indices in iteration $t$ and define
\begin{equation}\label{eq:pg}
p_G:=\frac{|G|}{m}=\frac{m-s}{m},
\qquad
H:=|S_t\cap G|.
\end{equation}
Then, for every \(\ve{x}\in\mathcal R\) and every \(q<p_G\),
\[
\mathbb{P}(i_t\in O\mid \ve{x}_t=\ve{x})
\le
\mathbb{P}(H<qk)
\le
\exp\left(-2k(p_G-q)^2\right).
\]
In particular, Assumption~\ref{assump:small-sample-prob}.II holds with
$
\pi_{q,k}
=
\exp\left(-2k(p_G-q)^2\right).
$
\end{prop}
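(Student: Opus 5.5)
The argument has two parts. The first inequality is a deterministic event inclusion under hard separation. The second is a Hoeffding tail bound for the number of good indices in a uniform sample drawn without replacement.

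\textbf{Step 1 (event inclusion).} Fix \(\ve{x}_t=\ve{x}\in\mathcal R\) and a realization of \(S_t\) with \(H=|S_t\cap G|\ge qk\). By Assumption~\ref{assump:hard-separation}, every sampled good loss is strictly below every sampled outlier loss. Hence the \(H\) smallest order statistics of \(\{f_i(\ve{x})\}_{i\in S_t}\) are exactly the good losses. Since \(qk\le H\), the empirical quantile \(Q_q=z_{(qk)}\) is a good loss. Any outlier \(j\in S_t\) then satisfies \(f_j(\ve{x})>z_{(H)}\ge z_{(qk)}\), so \(j\notin B_t\). Therefore \(B_t\subseteq G\) and \(i_t\in G\) with conditional probability one. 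Ties among good losses are harmless, because strictness is only needed between good and outlier losses. Thus \(\{i_t\in O\}\subseteq\{H<qk\}\). Taking conditional probability given \(\ve{x}_t=\ve{x}\), and using that \(S_t\) is drawn independently of \(\ve{x}_t\), gives \(\mathbb{P}(i_t\in O\mid\ve{x}_t=\ve{x})\le\mathbb{P}(H<qk)\). The right-hand side does not depend on \(\ve{x}\).

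\textbf{Step 2 (tail bound).} Write \(H=\sum_{\ell=1}^k \mathbf{1}\{J_{t,\ell}\in G\}\). This variable is hypergeometric with \(\mathbb{E}H=kp_G\). Hoeffding's inequality holds for sampling without replacement (Hoeffding 1963, Theorem~4): a sum drawn without replacement is dominated in convex order by the corresponding sum with replacement, so the same moment-generating-function bound applies. For \(q<p_G\), setting \(\epsilon=p_G-q>0\) gives
\[
\mathbb{P}(H<qk)\le\mathbb{P}\bigl(H-kp_G\le -k(p_G-q)\bigr)\le \exp\bigl(-2k(p_G-q)^2\bigr).
\]
Combining this with Step~1 gives the claim. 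Since \(q<p_G\), the bound is strictly less than \(1\), so \(\pi_{q,k}\in[0,1)\) is a valid choice in Assumption~\ref{assump:small-sample-prob}.II.

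\textbf{Main obstacle.} The only delicate point is justifying Hoeffding's bound for sampling without replacement, because the \(J_{t,\ell}\) are not independent. I would invoke Hoeffding's reduction to the with-replacement case directly. An alternative is to note that the indicators are negatively associated, so Chernoff bounds for independent variables still apply.
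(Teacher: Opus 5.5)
Your proposal is correct and follows the same route as the paper. You first use hard separation to get the event inclusion $\{i_t\in O\}\subseteq\{H<qk\}$, then apply Hoeffding's inequality for sampling without replacement. The only difference is cosmetic: the paper bounds the upper tail of the outlier count $\sum_j Y_j = k-H$, while you bound the lower tail of $H$ directly, which is equivalent.
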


\begin{proof}
Under Assumption~\ref{assump:hard-separation}, if at least \(qk\) good
indices are sampled, then the \(qk\) smallest sampled losses all correspond
to good components. In that case, the empirical quantile threshold is the
loss of a good component.
Therefore, the algorithm can select an outlier only if \(H<qk\), and hence
\[
\mathbb{P}(i_t\in O\mid \ve{x}_t=\ve{x})
\le
\mathbb{P}(H<qk).
\]

Define the outlier indicators
\[
Y_j:=
\mathbf{1}_{\{J_{t,j}\in O\}},
\qquad j=1,\ldots,k.
\]
The variables \(Y_1,\ldots,Y_k\) form a sample without replacement from a finite
set consisting of \(s\) ones and \(m-s\) zeros, with the  mean $
\frac{s}{m}=1-p_G$,
also observe that
\[
\sum_{j=1}^k Y_j=k-H.
\]
Hoeffding's inequality for sampling without replacement (e.g., \cite[Proposition~1.2]{bardenet2015concentration}, based on the classical techniques from~\cite{hoeffding1963probability}) yields, for \(q<p_G\),
\begin{align*}
\mathbb{P}(H<qk)
&=
\mathbb{P}\left[
\frac{1}{k}\sum_{j=1}^k Y_j>1-q
\right]\\
&\le
\mathbb{P}\left[
\frac{1}{k}\sum_{j=1}^k Y_j-(1-p_G)
\ge p_G-q
\right] \le
\exp\left(-2k(p_G-q)^2\right).
\end{align*}
\end{proof}

We next show how the subset strong-convexity profile of the good components
provides an admissible choice of the good-step curvature constant
\(\mu_{q,k}\) in Assumption~\ref{assump:small-sample-prob}.III.

\begin{cor}[Hard-separation verification of good-step curvature]
\label{cor:mu-lower-hard}
Suppose Assumption~\ref{assump:hard-separation} and the hypotheses of
Proposition~\ref{prop:subset-curvature-small-sample} hold, and let \(H\)
be as defined in Proposition~\ref{prop:hard-pi}. Then, Assumption~\ref{assump:small-sample-prob}.III holds with
\[
\mu_{q,k}
:=
\underline{\mu}(qk)
\left(1-\exp\left(-2k(p_G-q)^2\right)\right),
\]
if $qk \ge r_sc$ and $q < p_G$.
\end{cor}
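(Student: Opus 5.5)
The plan is to start from Proposition~\ref{prop:subset-curvature-small-sample}, which already shows that Assumption~\ref{assump:small-sample-prob}.III holds with
\[
\mu_{q,k}=\inf_{\ve{x}\in\mathcal R}\mathbb{E}\!\left[\underline{\mu}(|G_t|)\,\middle|\,\ve{x}_t=\ve{x},\ i_t\in G\right].
\]
It therefore suffices to lower bound this conditional expectation, uniformly in \(\ve{x}\in\mathcal R\), by \(\underline{\mu}(qk)\bigl(1-\exp(-2k(p_G-q)^2)\bigr)\). Since \(\underline{\mu}\ge 0\), we may discard the contribution of any event and keep only a favorable one. The natural choice is the event \(\{H\ge qk\}\), where \(H=|S_t\cap G|\) as in Proposition~\ref{prop:hard-pi}.

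\textbf{Step 1 (on the good event, \(|G_t|\) is large).} Under Assumption~\ref{assump:hard-separation}, if \(H\ge qk\), then the \(qk\) smallest sampled losses all belong to good components. Hence the threshold \(Q_q\) equals a good loss and every index of \(B_t\) is good. Since \(|B_t|\ge qk\), we get \(|G_t|=|B_t|\ge qk\ge r_{\mathrm{sc}}\). By the monotonicity of \(\underline{\mu}\) established in Proposition~\ref{prop:subset-curvature-small-sample}, this gives \(\underline{\mu}(|G_t|)\ge\underline{\mu}(qk)\) on \(\{H\ge qk\}\). In the same situation we also have \(i_t\in G\) almost surely.

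\textbf{Step 2 (conditioning).} We write
\[
\mathbb{E}[\underline{\mu}(|G_t|)\mid i_t\in G]\ge \underline{\mu}(qk)\,\mathbb{P}(H\ge qk\mid i_t\in G).
\]
Then we use Bayes' rule:
\[
\mathbb{P}(H\ge qk\mid i_t\in G)=\frac{\mathbb{P}(H\ge qk,\ i_t\in G)}{\mathbb{P}(i_t\in G)}=\frac{\mathbb{P}(H\ge qk)}{\mathbb{P}(i_t\in G)}\ge \mathbb{P}(H\ge qk).
\]
The middle equality uses \(\{H\ge qk\}\subseteq\{i_t\in G\}\) from Step 1, and the inequality uses \(\mathbb{P}(i_t\in G)\le 1\). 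Finally, the Hoeffding bound for sampling without replacement from Proposition~\ref{prop:hard-pi} gives \(\mathbb{P}(H\ge qk)\ge 1-\exp(-2k(p_G-q)^2)\) whenever \(q<p_G\). This bound is uniform in \(\ve{x}\), since the law of \(S_t\) does not depend on \(\ve{x}\), so taking the infimum over \(\mathcal R\) finishes the argument.

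\textbf{Main obstacle.} The delicate point is the conditioning on \(i_t\in G\): a naive bound would pick up a factor of \(1/\mathbb{P}(i_t\in G)\) in the wrong direction. The inclusion \(\{H\ge qk\}\subseteq\{i_t\in G\}\) is what makes the conditioning harmless, since it only increases the probability of the good event. A second point to check is the degenerate case \(\mathbb{P}(i_t\in G)=0\). This case cannot occur, because \(\mathbb{P}(i_t\in G)\ge\mathbb{P}(H\ge qk)>0\), so the conditional expectation is well defined. Ties among losses pose no issue, because the separation in Assumption~\ref{assump:hard-separation} is strict.
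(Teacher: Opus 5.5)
Your proposal is correct and follows the paper's proof essentially step for step: restrict to the event $\{H\ge qk\}$, use hard separation to get $B_t\subseteq G$ and $|G_t|\ge qk$ there, use the inclusion $\{H\ge qk\}\subseteq\{i_t\in G_t\}$ with Bayes' rule to remove the conditioning at the cost of a factor $\mathbb{P}(i_t\in G_t\mid \ve{x}_t=\ve{x})\le 1$, and conclude with the Hoeffding bound from Proposition~\ref{prop:hard-pi}, which is uniform in $\ve{x}$. Your added check that $\mathbb{P}(i_t\in G)\ge 1-\exp(-2k(p_G-q)^2)>0$, so that the conditional expectation is well defined, is a small point the paper leaves implicit.
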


\begin{proof}
Let
\[
\mathcal{A}:=\{H\ge qk\}.
\]
Under Assumption~\ref{assump:hard-separation}, on the event \(A\) all
accepted indices are good and
$|G_t|\ge qk.$
Therefore,
\[
\underline{\mu}(|G_t|)
\ge
\underline{\mu}(qk)
\qquad\text{on }\mathcal{A}.
\]
Moreover, \(\mathcal{A}\subseteq\{i_t\in G_t\}\). Hence, for every
\(\ve{x}\in\mathcal R\),
\begin{align*}
\mathbb{E}\!\left[
\underline{\mu}(|G_t|)
\,\middle|\,
\ve{x}_t=\ve{x},\,i_t\in G_t
\right] &\ge
\underline{\mu}(qk)\,
\mathbb{P}\!\left(
\mathcal{A}
\,\middle|\,
\ve{x}_t=\ve{x},\,i_t\in G_t
\right) \\
&\qquad=
\underline{\mu}(qk)\,
\frac{
\mathbb{P}(\mathcal{A}\mid\ve{x}_t=\ve{x})
}{
\mathbb{P}(i_t\in G_t\mid\ve{x}_t=\ve{x})
} \ge
\underline{\mu}(qk)\,
\mathbb{P}(\mathcal{A}).
\end{align*}
Here the final inequality uses
\(\mathbb{P}(i_t\in G_t\mid\ve{x}_t=\ve{x})\le1\), and the fresh sample
\(S_t\) is independent of \(\ve{x}_t\). Taking the infimum over \(\ve{x}\in\mathcal R\) gives
\[
\mu_{q,k}
\ge
\underline{\mu}(qk)\,
\mathbb{P}(H\ge qk).
\]
The claim follows from Proposition~\ref{prop:hard-pi}.
\end{proof}

Proposition~\ref{prop:hard-pi} and
Corollary~\ref{cor:mu-lower-hard} immediately imply

\begin{cor}[Explicit rate and error floor under hard separation]
\label{cor:hard-separation-rate}
Suppose Assumption~\ref{assump:hard-separation} and the hypotheses of
Proposition~\ref{prop:subset-curvature-small-sample} hold. Assume
\[
q<p_G:=\frac{m-s}{m},
\qquad
qk\ge r_{\mathrm{sc}},
\]
and define
\[
\varepsilon_{q,k}
:=
\exp\left(-2k(p_G-q)^2\right).
\]
Then Theorem~\ref{thm:small-sample-prob} applies with
\[
\pi_{q,k}
=
\varepsilon_{q,k},
\qquad
\mu_{q,k}
=
\underline{\mu}(qk)(1-\varepsilon_{q,k}).
\]
Therefore,
\[
\mathbb E_t\|\ve{x}_{t+1}-\ve{x}^*\|^2
\le
\overline{\kappa}_{q,k}\|\ve{x}_t-\ve{x}^*\|^2
+
\overline b_{q,k},
\]
where
\[
\overline{\kappa}_{q,k}
=
1+\varepsilon_{q,k}
-
(1-\varepsilon_{q,k})^2
\bigl(2\eta-\eta^2L_G^*\bigr)
\underline{\mu}(qk),
\]
and
\[
\overline b_{q,k}
=
2\eta^2G_O^2\varepsilon_{q,k}.
\]
If \(\overline{\kappa}_{q,k}<1\), then
\[
\limsup_{t\to\infty}
\mathbb E\|\ve{x}_t-\ve{x}^*\|^2
\le
\frac{\overline b_{q,k}}
{1-\overline{\kappa}_{q,k}}.
\]
\end{cor}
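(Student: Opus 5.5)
The plan is to obtain Corollary~\ref{cor:hard-separation-rate} by substitution: verify the hypotheses of Theorem~\ref{thm:small-sample-prob} with the explicit constants from Proposition~\ref{prop:hard-pi} and Corollary~\ref{cor:mu-lower-hard}, and then simplify the resulting contraction factor and additive term.

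First I would check Assumption~\ref{assump:small-sample-prob}. Parts I and IV are standing hypotheses of the small-sample setting, which I take as given; $G_O$ enters only through $b_{q,k}$. For part II, the hypotheses $q<p_G$ and Assumption~\ref{assump:hard-separation} let Proposition~\ref{prop:hard-pi} apply, so $\pi_{q,k}=\varepsilon_{q,k}$ is admissible, and $\varepsilon_{q,k}<1$ because $k\ge1$ and $p_G>q$. For part III, the extra hypothesis $qk\ge r_{\mathrm{sc}}$, together with those of Proposition~\ref{prop:subset-curvature-small-sample}, is exactly what Corollary~\ref{cor:mu-lower-hard} requires. It yields $\mu_{q,k}=\underline{\mu}(qk)(1-\varepsilon_{q,k})$.

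Next I would insert these constants into Theorem~\ref{thm:small-sample-prob}:
\[
\kappa_{q,k}=1+\varepsilon_{q,k}-(1-\varepsilon_{q,k})\bigl(2\eta-\eta^2L_G^*\bigr)\underline{\mu}(qk)(1-\varepsilon_{q,k}),
\]
which is $\overline{\kappa}_{q,k}$, and $b_{q,k}=2\varepsilon_{q,k}\eta^2G_O^2=\overline b_{q,k}$. Both the one-step bound and the unrolled recursion are already provided by the theorem. The only nontrivial point is monotonicity. The theorem allows any upper bound $\pi\ge\mathbb{P}(i_t\in O\mid\ve{x}_t)$ and any lower bound $\mu\le$ (the expected curvature). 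In its proof the bound is increasing in $p_t$ and decreasing in $\mu$, because $2\eta-\eta^2L_G^*\ge0$ when $\eta\le 2/L_G^*$. So substituting these admissible values is legitimate. The step-size condition $0<\eta\le 2/L_G^*$ is inherited from the theorem, and I would state it implicitly.

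Finally, if $\overline{\kappa}_{q,k}<1$, iterating and taking total expectation gives
\[
\mathbb{E}\|\ve{x}_t-\ve{x}^*\|^2\le\overline{\kappa}_{q,k}^t\|\ve{x}_0-\ve{x}^*\|^2+\frac{1-\overline{\kappa}_{q,k}^t}{1-\overline{\kappa}_{q,k}}\overline b_{q,k}.
\]
Taking $\limsup$ as $t\to\infty$, with $\overline{\kappa}_{q,k}^t\to0$, gives the stated error floor. The main obstacle is small: the iterates must remain in $\mathcal R$ so that the assumptions hold at every step. I would treat this as part of the standing hypothesis of Assumption~\ref{assump:small-sample-prob} rather than prove invariance of $\mathcal R$.
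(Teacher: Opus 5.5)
Your proposal is correct and follows the paper's argument: the paper also gets the corollary by substituting $\pi_{q,k}=\varepsilon_{q,k}$ from Proposition~\ref{prop:hard-pi} and $\mu_{q,k}=\underline{\mu}(qk)(1-\varepsilon_{q,k})$ from Corollary~\ref{cor:mu-lower-hard} into Theorem~\ref{thm:small-sample-prob}, which gives $\overline{\kappa}_{q,k}$, $\overline b_{q,k}$, and the error floor. Your additional checks fill in details the paper leaves implicit: that $\varepsilon_{q,k}<1$, that using admissible (non-sharp) bounds is legitimate, and that invariance of $\mathcal R$, the step-size condition, and Assumption~\ref{assump:small-sample-prob}.IV are standing hypotheses.
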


Choosing \(q=1/k\) recovers min-\(k\)-loss SGD
\citep{shah2020choosing}. This prior work derives a state-dependent one-step
recursion for this rule, in which the rank-dependent sampling probabilities
control both the good-step contraction and the outlier contribution. That
analysis observes that minimum-loss selection may reduce the outlier term
while also slowing contraction. Corollary~\ref{cor:hard-separation-rate} gives a different type of
guarantee: under hard separation, it replaces these state-dependent
quantities by explicit, uniform bounds in terms of
\(\varepsilon_{q,k}\) and \(\underline{\mu}(qk)\). This yields a
multi-iteration convergence bound and an explicit limiting-error bound for
any choice of quantile, while making the tradeoff between outlier avoidance
and good-step progress directly interpretable.

Corollary~\ref{cor:hard-separation-rate} illustrates the dependence of the convergence of Algorithm~\ref{alg:qklsgd} on the basic sample size $k$ and quantile $q$ parameters:

\paragraph{Effect of increasing \(k\) at fixed \(q<p_G\).}
As \(k\) increases, \(\varepsilon_{q,k}\) decreases exponentially, while
\(\underline{\mu}(qk)\) is nondecreasing. Hence
\(\overline{\kappa}_{q,k}\) and \(\overline b_{q,k}\) are nonincreasing in
\(k\). Whenever \(\overline{\kappa}_{q,k}<1\), the corresponding
limiting-error bound is also nonincreasing. This illustrates that larger samples improve the
guaranteed reliability and convergence behavior, at the expense of more loss
evaluations per iteration.

\paragraph{Effect of varying \(q\) at fixed \(k\).}
Decreasing \(q\) lowers the outlier-selection and additive-error bounds, but
may also lower the curvature factor \(\underline{\mu}(qk)\). Therefore, its
effect on the multiplicative rate and limiting-error bound is governed by the
balance between robustness and good-step progress. This illustrates that Smaller quantiles produce
more robust loss selection, but may also result in slower progress on selected good
steps.

\section{Experiments}\label{sec:numerics}

{In this section, we present numerical experiments} across several objective classes to demonstrate how quantile-based robust SGD remains effective across increasingly general
finite-sum optimization problems. The most basic instance of such problem classes is the corrupted linear system setting, which was studied in earlier work on quantile-based Kaczmarz methods, including~\cite{haddock2019randomized,haddock2023subsampled}. Here we focus on finite-sum objectives that move beyond the linear setting. Code for all numerical experiments are publicly available at \href{https://github.com/jamiehadd/QuantileSGD}{https://github.com/jamiehadd/QuantileSGD}.

\begin{table}[H]
\caption{The table records model properties, as per Assumption~\ref{assump:sgd}: I = good-component regularity, II = outlier nonnegativity, III = component smoothness, and IV = subset strong convexity on good accepted sets. A checkmark in column IV indicates the
existence of positive subset curvature. For the nonsmooth hinge loss, strong convexity is understood in the subgradient sense.}\label{tab:assumptions}\centering \small \setlength{\tabcolsep}{6pt} \renewcommand{\arraystretch}{1.15}\begin{tabularx}{\textwidth}{@{}
>{\raggedright\arraybackslash}p{0.26\textwidth}
>{\raggedright\arraybackslash}X
*{4}{>{\centering\arraybackslash}p{0.055\textwidth}}
@{}}
\toprule
\multirow{2}{*}{Section}
& \multirow{2}{*}{Model}
& \multicolumn{4}{c}{Assumption~\ref{assump:sgd}} \\
\cmidrule(l){3-6}
& & I & II & III & IV \\
\midrule
Subsection~\ref{subsec:noiseless_poly_regression}
& Noiseless polynomial regression
& \cmark & \cmark & \cmark & \cmark \\

Subsection~\ref{subsec:poly_regression}
& Noisy polynomial regression
& \xmark & \cmark & \cmark & \cmark \\

Subsection~\ref{subsec:logistic_regression}
& Regularized logistic regression
& \xmark & \cmark & \cmark & \cmark \\

Subsection~\ref{subsec:hinge_loss}
& Regularized hinge loss
& \xmark & \cmark & \xmark & \cmark \\
\bottomrule
\end{tabularx}
\end{table}

We begin with polynomial regression, first in a noiseless interpolating regime that is closest to the assumptions of Theorem~\ref{thm:main}, and then consider noisy observations. We next consider regularized logistic regression, which has a smooth nonlinear loss, and regularized hinge loss, which provides a nonsmooth stress test.

Table~\ref{tab:assumptions} reports model-level properties relevant to our
analyses. Note that a checkmark for subset strong convexity means that
sufficiently large good subsets admit a positive strong-convexity; it
does not verify that it is large enough to satisfy the quantitative
contraction conditions of Theorem~\ref{thm:main} (and in most experimental settings it is not). Indeed, except for the
noiseless polynomial-regression model, the experiments intentionally consider
settings in which one or more assumptions of the deterministic theory fail,
to stress-test Q\(k\)L-SGD on increasingly challenging
problems. The probabilistic assumptions in
Assumption~\ref{assump:small-sample-prob} are not included in the table,
because they depend on \(q\), \(k\), the corruption model, and the resulting
algorithmic trajectory rather than on the component-loss model alone.

For each objective class, we compare QkL-SGD with standard SGD and
min-$k$-loss SGD (MkL-SGD), where MkL-SGD denotes the rule that selects the
sampled component with the smallest loss (see, e.g., \cite{shah2020choosing}). We present first the {large-sample} experiments in Sections~\ref{subsec:noiseless_poly_regression}--\ref{subsec:hinge_loss}, where we
take $k=m$ for both QkL-SGD and MkL-SGD. These comparisons isolate the effect
of the loss-based selection rule, and they are plotted per iteration.

Then, in {Section~\ref{subsec:vary_k}}, we
study the effect of the sample size and compare the methods as
functions of the total number of component-loss evaluations {across all iterations}. {In the fixed-quantile experiments, we use the oracle choice \(q=1-\beta\)
to isolate the behavior of the loss-filtering rule. The varying-\(q\)
experiments then assess sensitivity to misspecification of this choice.}

\subsection{Noiseless Polynomial Regression}\label{subsec:noiseless_poly_regression}

We begin by considering a noiseless polynomial regression task, a least-squares problem with zero optimal loss.  We aim to illustrate the robustness of QkL-SGD to a small fraction $\beta$ of corrupted regression response values.  In each experiment, we generate ideal noiseless polynomial regression sample data and introduce significant corruption into a small number of outlier response values.  We note that the typical polynomial regression task is not noiseless and thus does not have zero optimal uncorrupted loss; we consider such a task in the next subsection, Subsection~\ref{subsec:poly_regression}.  However, we begin with the noiseless case since this task satisfies all {model-level} assumptions {in Table~\ref{tab:assumptions} and } Assumption~\ref{assump:sgd}.

We consider the problem of fitting a polynomial of degree $3$ to data $\{(x_i, y_i)\}_{i=1}^m$, where the data is generated according to
\[
y_i = p(x_i), \quad \text{with } p(x) = 1 + 0.5x - 0.2x^2 + 0.05x^3,
\]
the $x_i$ are uniformly distributed over the interval $[-3,3)$, and $m=200$.
We will call $p(x)$ the `clean polynomial' and note that $p(x) = \ve{d}^\top \ve{x}^\star_{\rm clean}$ where $\ve{x}^\star_{\rm clean} = [1, 0.5, -0.2, 0.05]^\top$ and $\ve{d} = [1, x, x^2, x^3]^\top$. In the plots below, we visualize $p(x)$ and label this ``Clean".
To introduce outliers, we randomly select a fraction $\beta$ of the response variables $y_i$ and add large corruptions sampled from $\mathcal{N}(0,100)$; this set is the outlier set $O$ and the remaining uncorrupted set is $G$.

The optimal regression coefficients $\ve{x}^\star_{\rm clean} \in \mathbb{R}^{4}$ corresponding to clean polynomial $p(x)$ are estimated by minimizing the empirical least-squares objective
\begin{equation}
F(\ve{x}) = \frac{1}{m} \sum_{i=1}^m f_i(\ve{x}), \qquad f_i(\ve{x}) = \tfrac{1}{2}(y_i - \ve{d}_i^\top \ve{x})^2, \label{eq:poly_regression}
\end{equation}
with respect to the parameter vector $\ve{x}$, where $\ve{d}_i = [1, x_i, x_i^2, x_i^3]^\top$ denotes the feature vector corresponding to sample $x_i$. Any coefficient parameter vector $\ve{x} \in \mathbb{R}^4$ defines a polynomial as $\ve{x}^\top \ve{d}$ where $\ve{d} = [1, x, x^2, x^3]^\top.$  We may visualize any such univariate polynomial by plotting it alongside the clean polynomial $p(x)$ and the samples; we do so in the right plot of the experiments below.

The objective in each experiment is to learn a parameter vector $\ve{x} \in \mathbb{R}^4$ that defines a polynomial near to clean polynomial $p(x)$, and thus fitting the samples whose response values have not been corrupted.  In all experiments, SGD, QkL-SGD, and MkL-SGD are applied to objective $F(x)$ using step size $\eta = 0.001$.

\paragraph{Assumptions for noiseless polynomial regression loss}\label{subsubsec:noiselesspolyregression_assumptions}

The polynomial regression components~\eqref{eq:poly_regression} are nonnegative, convex, and smooth with $$L_i \le \|\ve{d}_i\|_2^2.$$  Here, for $r_{\mathrm{sc}} = 4$ and every subset $S \subseteq G$ with $|S| \ge r_{\mathrm{sc}} = 4$, we have that the least-squares objective $F_S(x)$ is strongly convex, since in our case where each $x_i$ is distinct, the design matrix $D_S$ of subset $S$ has full column rank.  Because the uncorrupted data is assumed to be noiseless, we have $f_i(\ve{x}^\star_{\rm clean}) = 0$ and
     $\nabla f_i(\ve{x}^\star_{\rm clean}) = 0$ for $i \in G$.  Thus, Assumptions~\ref{assump:sgd}.\ref{ass:good_set_regularity}, \ref{assump:sgd}.\ref{ass:outlier_nonnegativity}, \ref{assump:sgd}.\ref{ass:component_smoothness}, and~\ref{assump:sgd}.\ref{ass:subset_strong_convexity} all hold for this model.

\subsubsection{Various outlier settings}

In this section, we illustrate the behavior of QkL-SGD on noiseless polynomial regression models under a small to moderate fraction of outliers $\beta$.  Here we apply QkL-SGD with $q = 1 - \beta$.

\paragraph{Few outliers (\texorpdfstring{$\beta = 0.05$, $q = 0.95$, $k = m$}{beta = 0.05, q = 0.95, k = m})}

In our first experiment, whose results are presented in Figure~\ref{fig:figure-01a}, 5\% of the sample response values $y_i$ are corrupted by large additive corruption sampled from $\mathcal{N}(0,100)$. We compare QkL-SGD with $q =0.95$ and $k = m$ to SGD and MkL-SGD with $k = m$.  We see QkL-SGD converges towards the zero optimal relative loss
and parameter vector faster and to a lower final relative error
than SGD.  MkL-SGD stalls nearly immediately with respect to both loss and error, as the algorithmic selection of the component with minimum loss begins near zero and thus the update is nearly negligible. We plot the polynomials learned by SGD, QkL-SGD, and MkL-SGD; see the right plot. The learned polynomial from QkL-SGD closely matches the true function even in regions influenced by outliers, while the SGD fit is visibly distorted by the corrupted points.

\paragraph{Moderate outliers (\texorpdfstring{$\beta = 0.1$, $q = 0.9$, $k = m$}{beta = 0.1, q = 0.9, k = m})}

In our next experiment, presented in Figure~\ref{fig:figure-01b}, 10\% of the sample response values $y_i$ are corrupted by large additive corruption sampled from $\mathcal{N}(0,100)$.  We see that QkL-SGD with $q = 0.9$ and $k = m$ again outperforms SGD with respect to both relative loss
and relative error,
and MkL-SGD again stagnates early.  We visualize the learned polynomials in the right plot; the polynomial learned by QkL-SGD is nearly identical to the true polynomial $p(x)$, while the SGD-learned polynomial suffers due to the corrupted samples.

\begin{figure}
    \begin{subfigure}{\textwidth}
    \centering
    \includegraphics[width=0.3\textwidth]{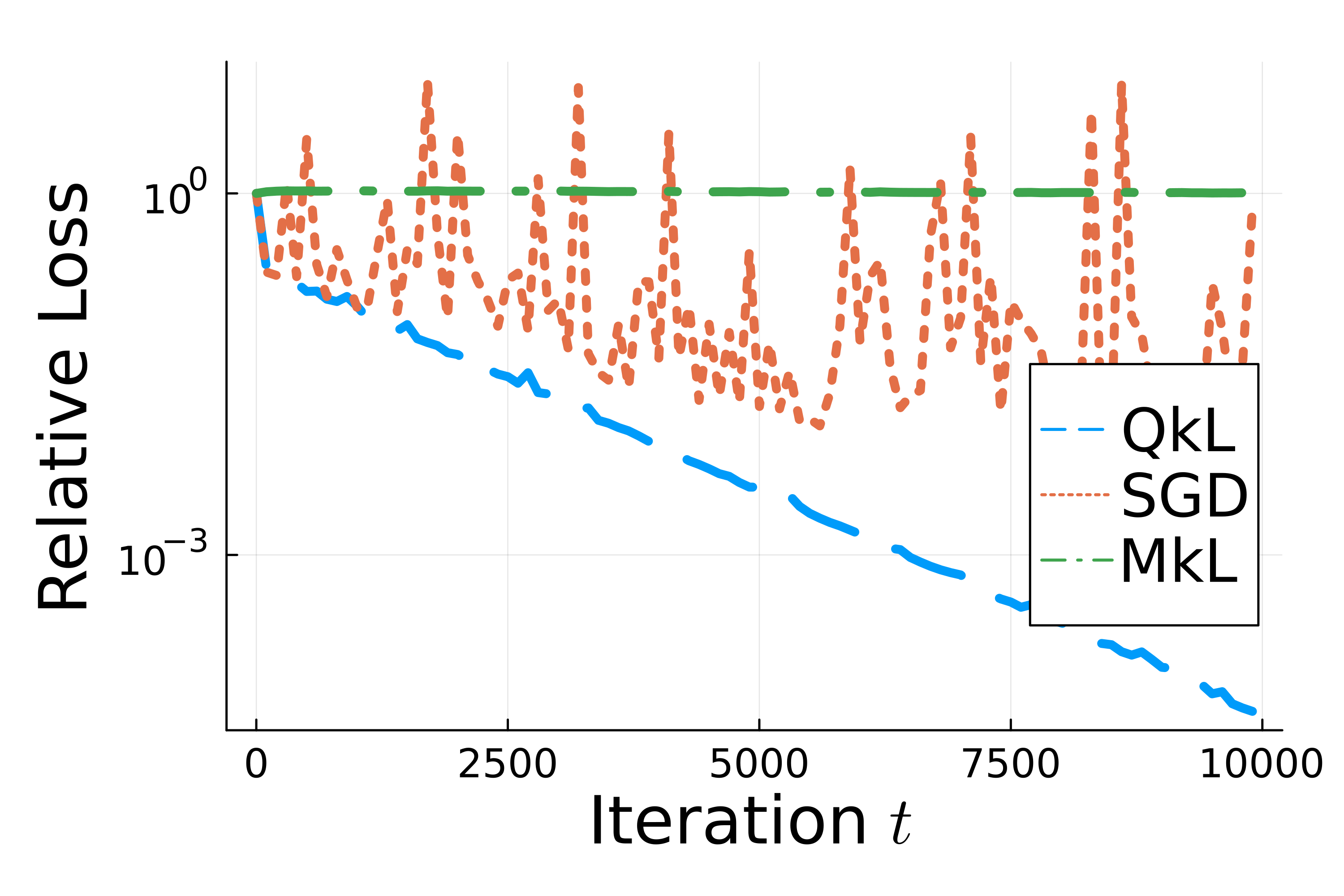}
\quad\includegraphics[width=0.3\textwidth]{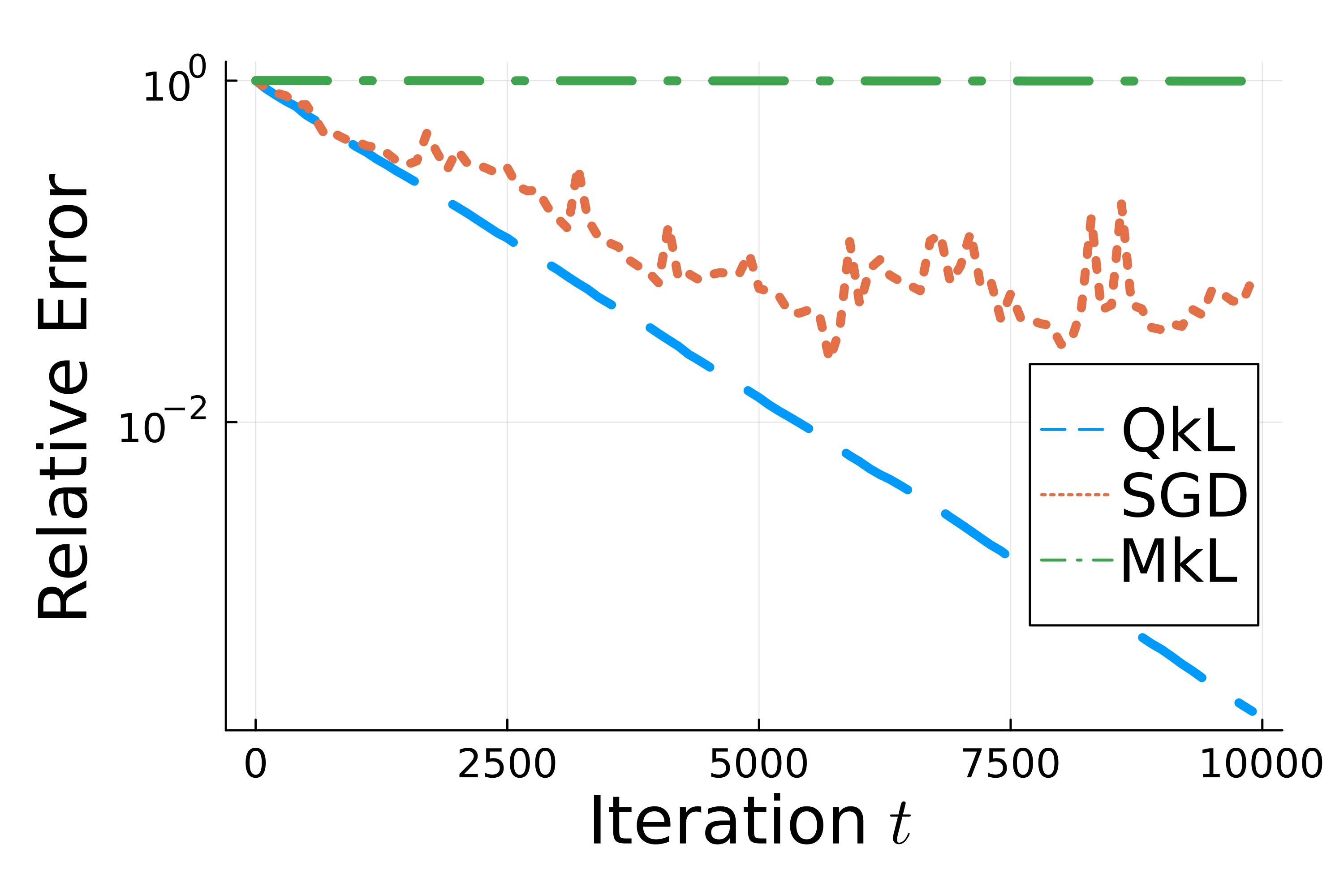}
\hfill\includegraphics[width=0.32\textwidth]{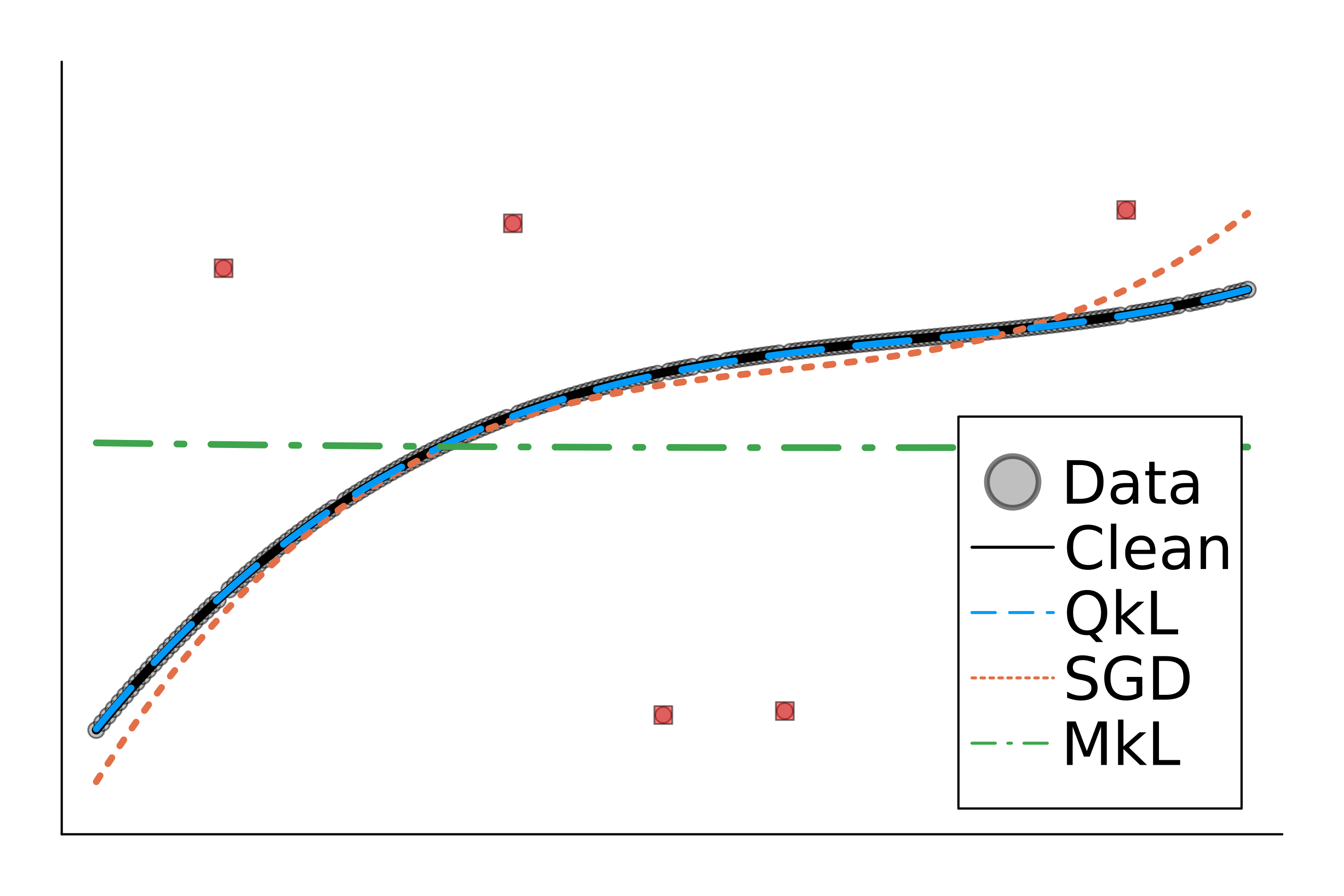}
    \caption{$\beta = 0.05$ response corruption.}
    \label{fig:figure-01a}
\end{subfigure}

\begin{subfigure}{\textwidth}
    \centering
    \includegraphics[width=0.3\textwidth]{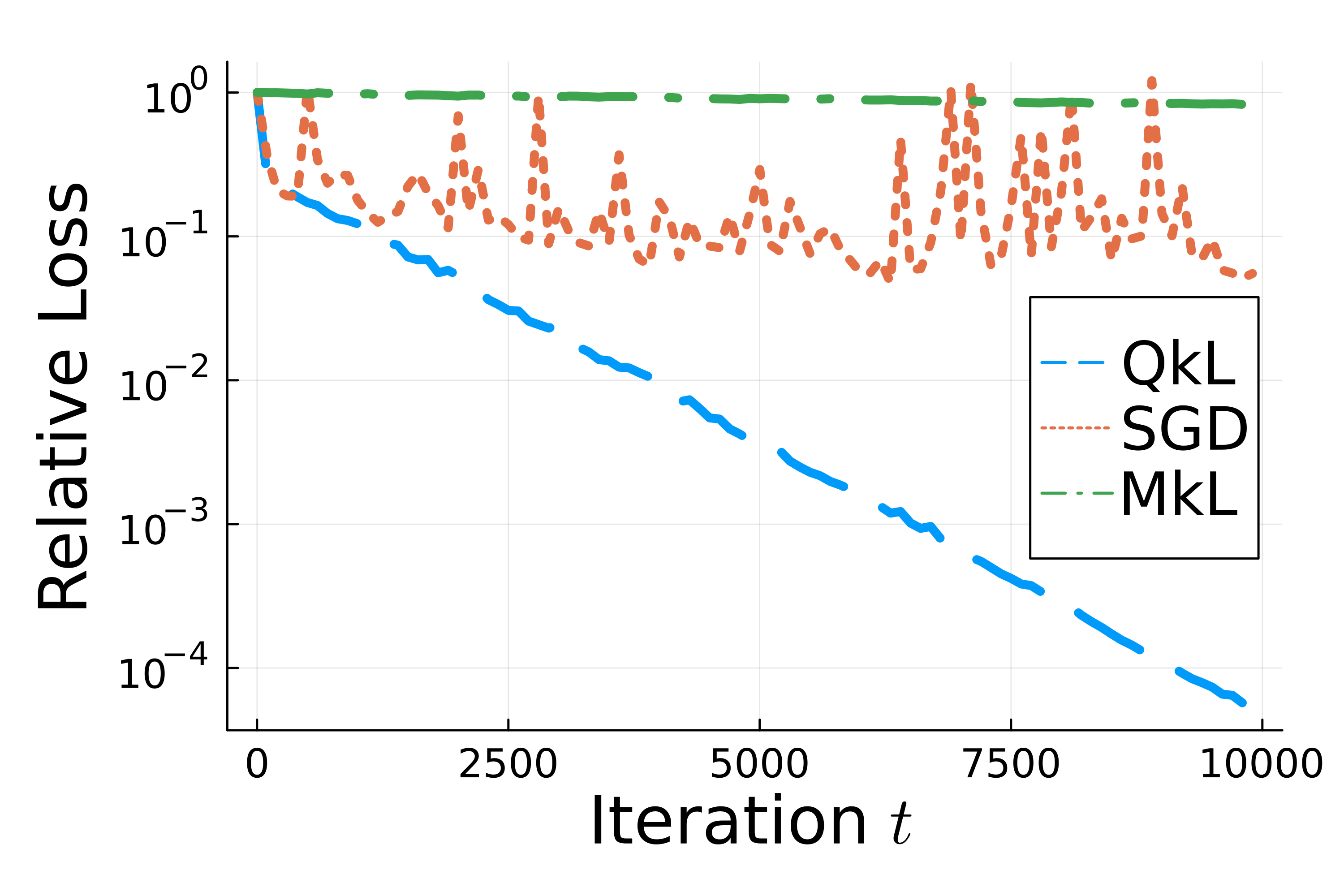}
\quad\includegraphics[width=0.3\textwidth]{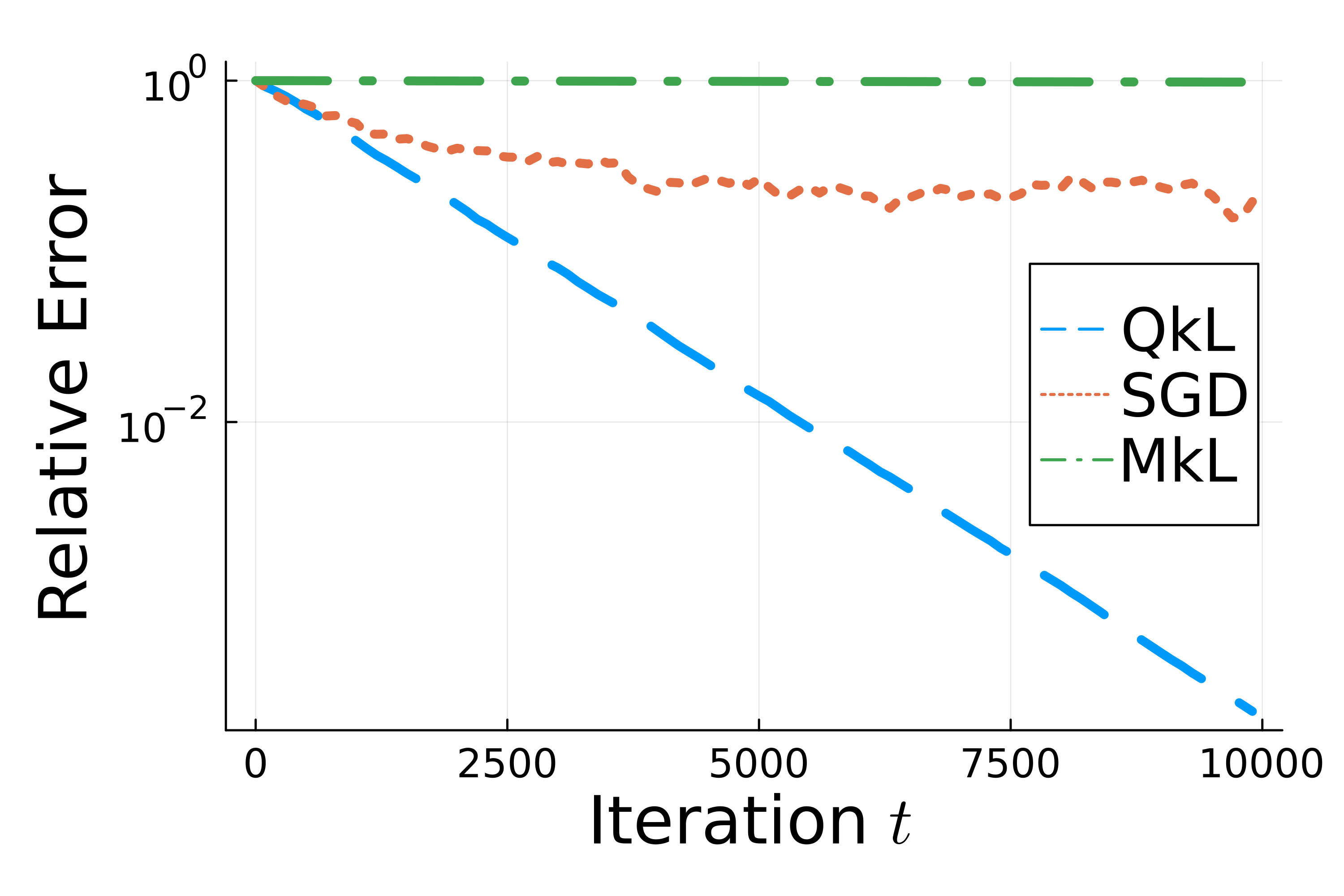}
\hfill\includegraphics[width=0.32\textwidth]{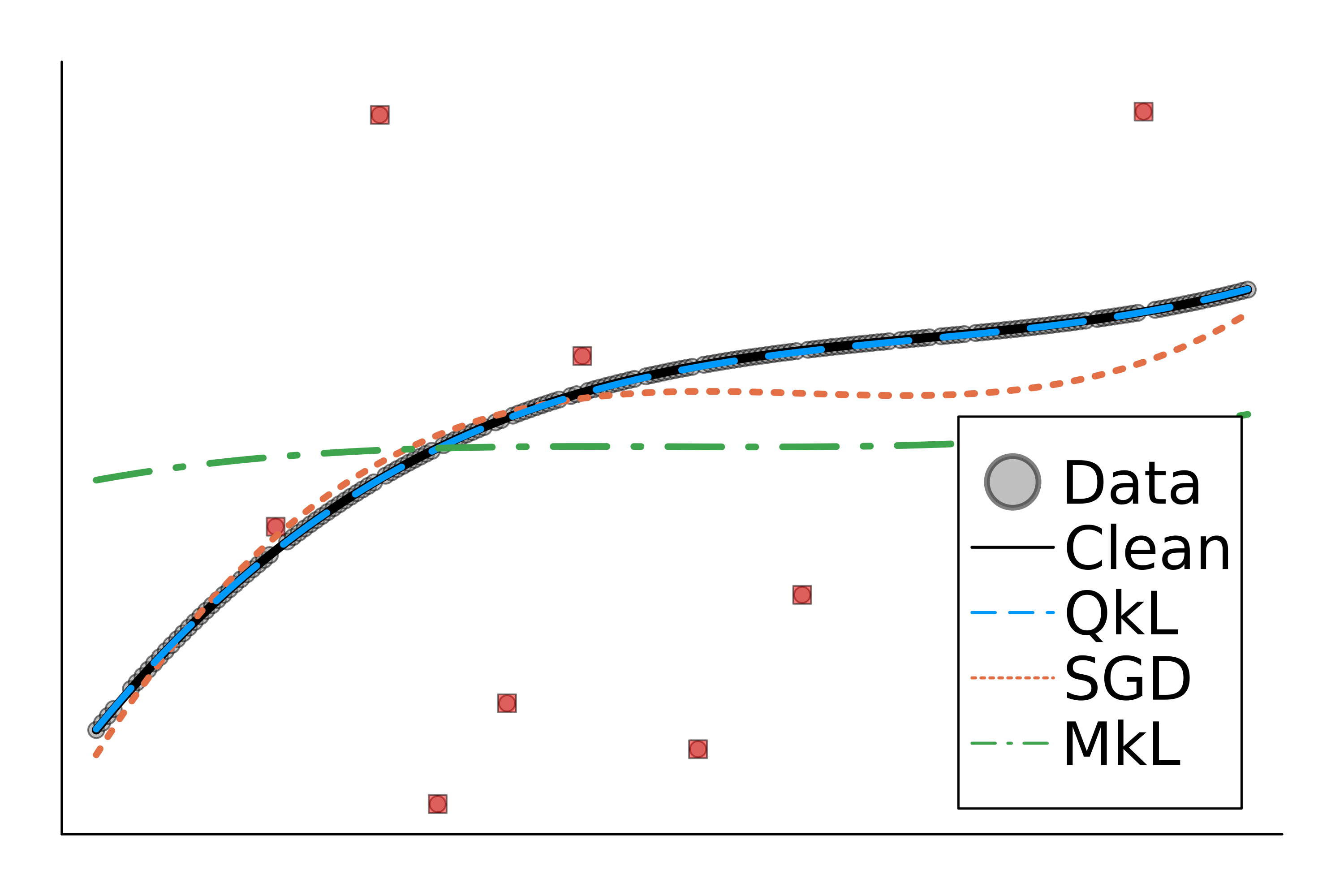}
    \caption{$\beta = 0.1$ response corruption.
    }
    \label{fig:figure-01b}
\end{subfigure}

    \caption{QkL-SGD with $(k,q)=(m,1-\beta)$ and MkL-SGD with $k=m$ applied to polynomial regression loss~\eqref{eq:poly_regression}. (Left) Relative good-set loss $F_G(\ve x_t)/F_G(\ve x_0)$ per iteration. (Center) Relative squared parameter error $\|\ve x_t-\ve x^\star_{\rm clean}\|_2^2/\|\ve x_0-\ve x^\star_{\rm clean}\|_2^2$. (Right) Learned polynomials at the final iterate, together with the clean polynomial $p(x)$. Uncorrupted data samples, visualized as circles, are noiseless; corrupted data samples are visualized as squares.
    }
\end{figure}

\subsubsection{Varying $q$ with fixed corruption fraction $\beta$}

In our final experiment with the noiseless polynomial regression model, we explore how QkL-SGD behaves with different choices of quantile $q$ under a fixed corruption fraction $\beta$.  In Figure~\ref{fig:figure-02}, we plot the final relative loss (left plot) and final relative error (right plot) after 10000 iterations of QkL-SGD with $q \in \{0.05, 0.1, \cdots, 1.0\}$ and $k = m$.  We apply QkL-SGD to the noiseless polynomial regression loss~\eqref{eq:poly_regression} applied to the data model described above where $\beta \in \{0.1, 0.2, 0.3, 0.4\}$ fraction of data is corrupted. Relative loss and relative errors are averaged over 100 independent trials of 10000 iterations of QkL-SGD.  Here, we see that the quantile value $q$ that achieves the lowest relative loss and relative error is near $1 - \beta$.  For larger corruption fractions $\beta$, quantiles $q$ very slightly larger than $1-\beta$ can be optimal. Lowest values of relative error and loss are similar for each fraction $\beta$, illustrating that QkL-SGD provides robustness to even significant amounts of data corruption.

\begin{figure}
    \includegraphics[width=0.45\textwidth]{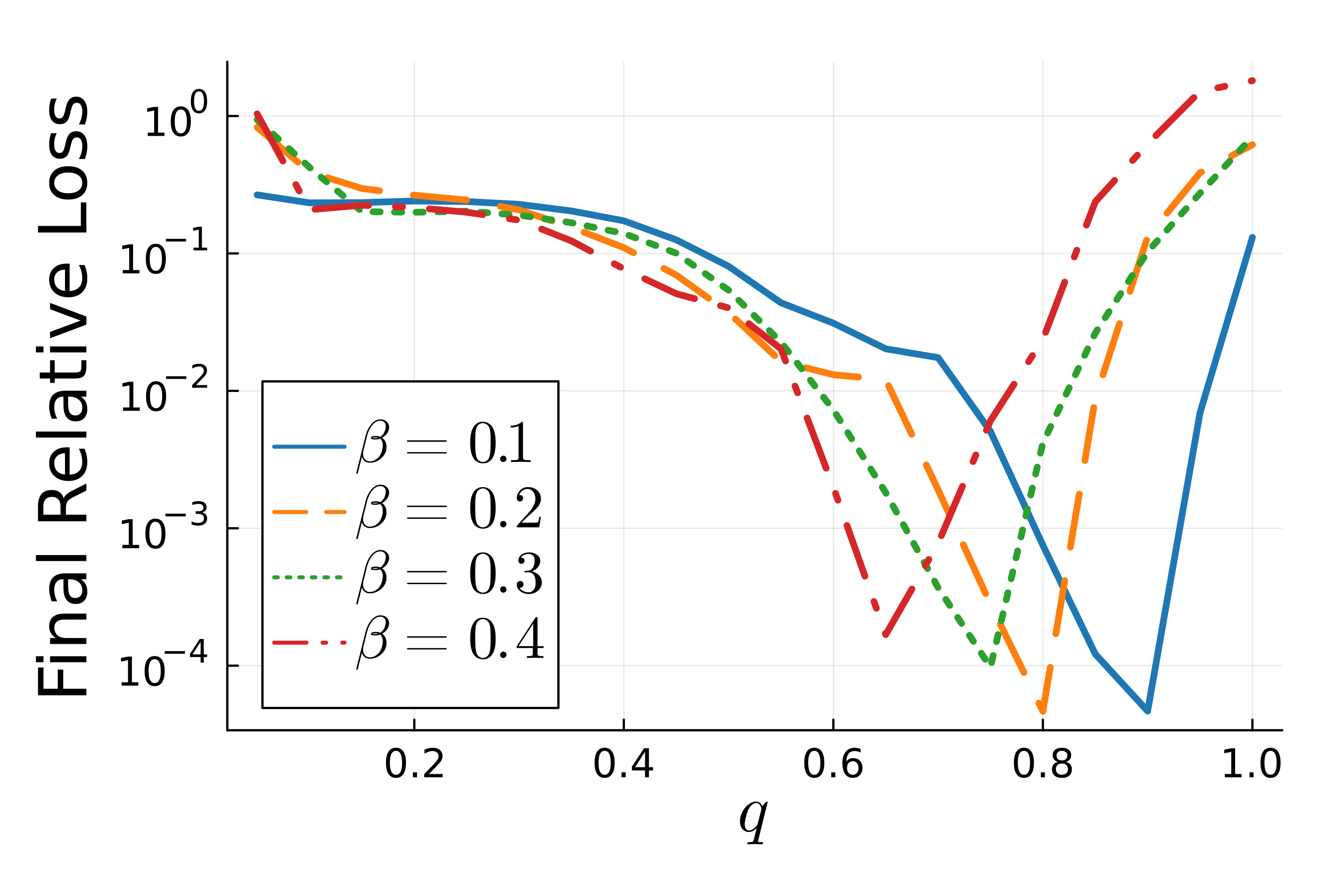}\hfill\includegraphics[width=0.45\textwidth]{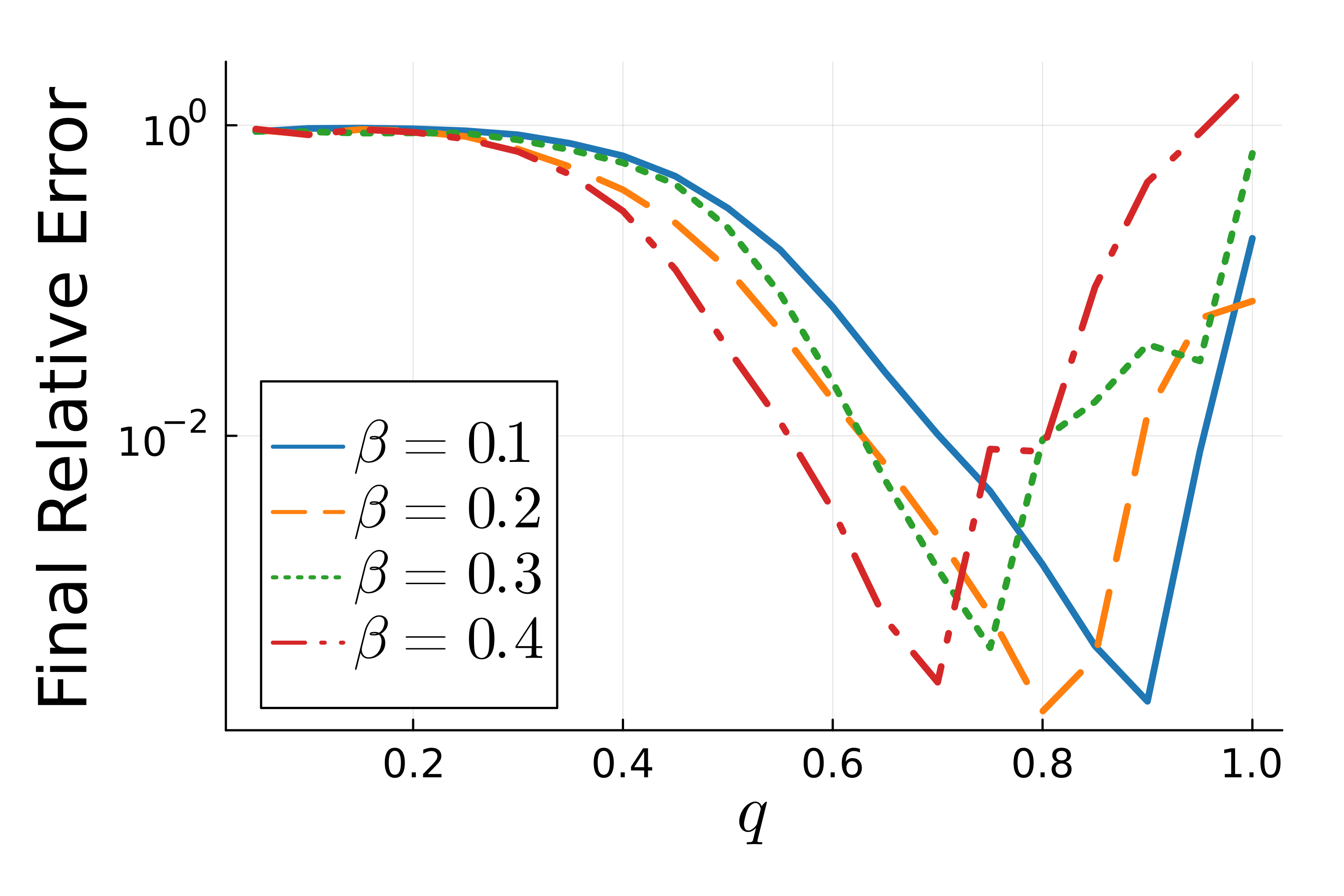}
    \caption{(Left) Final relative loss $F_G(\mathbf{x}_{10000})/F_G(\mathbf{x}_0)$ for QkL-SGD with $q \in (0,1]$ applied to the noiseless polynomial regression loss~\eqref{eq:poly_regression} with fraction of corruption $\beta \in \{0.1, 0.2, 0.3, 0.4\}$; (right) final relative squared error $\|\mathbf{x}_{10000} - \ve{x}^\star_{\rm clean}\|^2/\|\mathbf{x}_0 - \ve{x}^\star_{\rm clean}\|^2$ for QkL-SGD with $q \in (0,1]$ (log scale). }\label{fig:figure-02}
\end{figure}

\subsection{Polynomial Regression}\label{subsec:poly_regression}

We next turn to the typical polynomial regression task, where the data samples are all assumed to be affected by a small amount of noise in the response values.  This change causes Assumption~\ref{assump:sgd}.\ref{ass:good_set_regularity} to fail.  As in the experiments in the previous section, Subsection~\ref{subsec:noiseless_poly_regression}, our goal is to assess the robustness of QkL-SGD to a small fraction of samples with response values affected by significant corruption.

The data setting is identical to Subsection~\ref{subsec:noiseless_poly_regression}, with the exception that the data $\{(x_i, y_i)\}_{i=1}^m$ is generated according to
\[
y_i = p(x_i) + \epsilon_i, \quad \text{with } p(x) = 1 + 0.5x - 0.2x^2 + 0.05x^3,
\]
where noise values $\epsilon_i$ are sampled from $\mathcal{N}(0,0.3^2)$, and as before, the $x_i$ are uniformly distributed over the interval $[-3,3)$ and $m=200$. The outlier set and corruptions are generated as before in Subsection~\ref{subsec:noiseless_poly_regression}.

The model~\eqref{eq:poly_regression} and the experimental setup is the same as in Subsection~\ref{subsec:noiseless_poly_regression}.

\paragraph{Assumptions for polynomial regression loss}

As before for the noiseless polynomial regression model, Assumptions~\ref{assump:sgd}.\ref{ass:outlier_nonnegativity}, \ref{assump:sgd}.\ref{ass:component_smoothness}, and~\ref{assump:sgd}.\ref{ass:subset_strong_convexity} all hold for the noisy polynomial regression model with the same parameters.  However, due to the noise in the uncorrupted response variables, we no longer have $f_i(\ve{x}^\star_{\rm clean}) = 0$ and
     $\nabla f_i(\ve{x}^\star_{\rm clean}) = 0$ for $i \in [G]$, and thus Assumption~\ref{assump:sgd}.\ref{ass:good_set_regularity} fails.

\subsubsection{Various outlier settings}

We now turn to QkL-SGD for the noisy polynomial regression model under a variety of small to moderate corruption fractions $\beta$.  We take the ideal quantile value $q = 1 - \beta$.

\paragraph{Few outliers (\texorpdfstring{$\beta = 0.05$, $q = 0.95$, $k = m$}{beta = 0.05, q = 0.95, k = m})}

In the noisy polynomial regression experiments shown in Figure~\ref{fig:figure-03a}, we again corrupt 5\% of the sample response values $y_i$ with large additive corruptions drawn from $\mathcal{N}(0,100)$. We compare QkL-SGD with $q = 0.95$ and $k = m$ against SGD and MkL-SGD with $k = m$. Consistent with the noiseless setting, QkL-SGD achieves faster convergence and lower final relative loss
and relative squared error
than SGD. In contrast, MkL-SGD exhibits little progress, as selecting the minimum-loss component yields updates of negligible magnitude early in training. The learned polynomials (right plot) further illustrate the robustness of QkL-SGD: its fit remains close to the underlying target function despite the presence of corrupted observations, whereas the polynomial obtained with SGD is noticeably skewed by the outliers.

\paragraph{Moderate outliers (\texorpdfstring{$\beta = 0.1$, $q = 0.9$, $k = m$}{beta = 0.1, q = 0.9, k = m})}

Figure~\ref{fig:figure-03b} reports results for a more heavily corrupted setting in which 10\% of the response values $y_i$ are perturbed by additive corruption drawn from $\mathcal{N}(0,100)$. In this regime, QkL-SGD with $q = 0.9$ and $k = m$ continues to substantially outperform SGD in both relative loss
and relative error,
while MkL-SGD once again stagnates after only limited progress. The polynomial visualization (right plot) highlights the robustness of QkL-SGD under corruption: the recovered polynomial remains closely aligned with the clean polynomial $p(x)$, whereas the polynomial obtained by SGD deviates significantly due to sensitivity to the corrupted observations.

\begin{figure}
    \begin{subfigure}{\textwidth}
    \centering
    \includegraphics[width=0.3\textwidth]{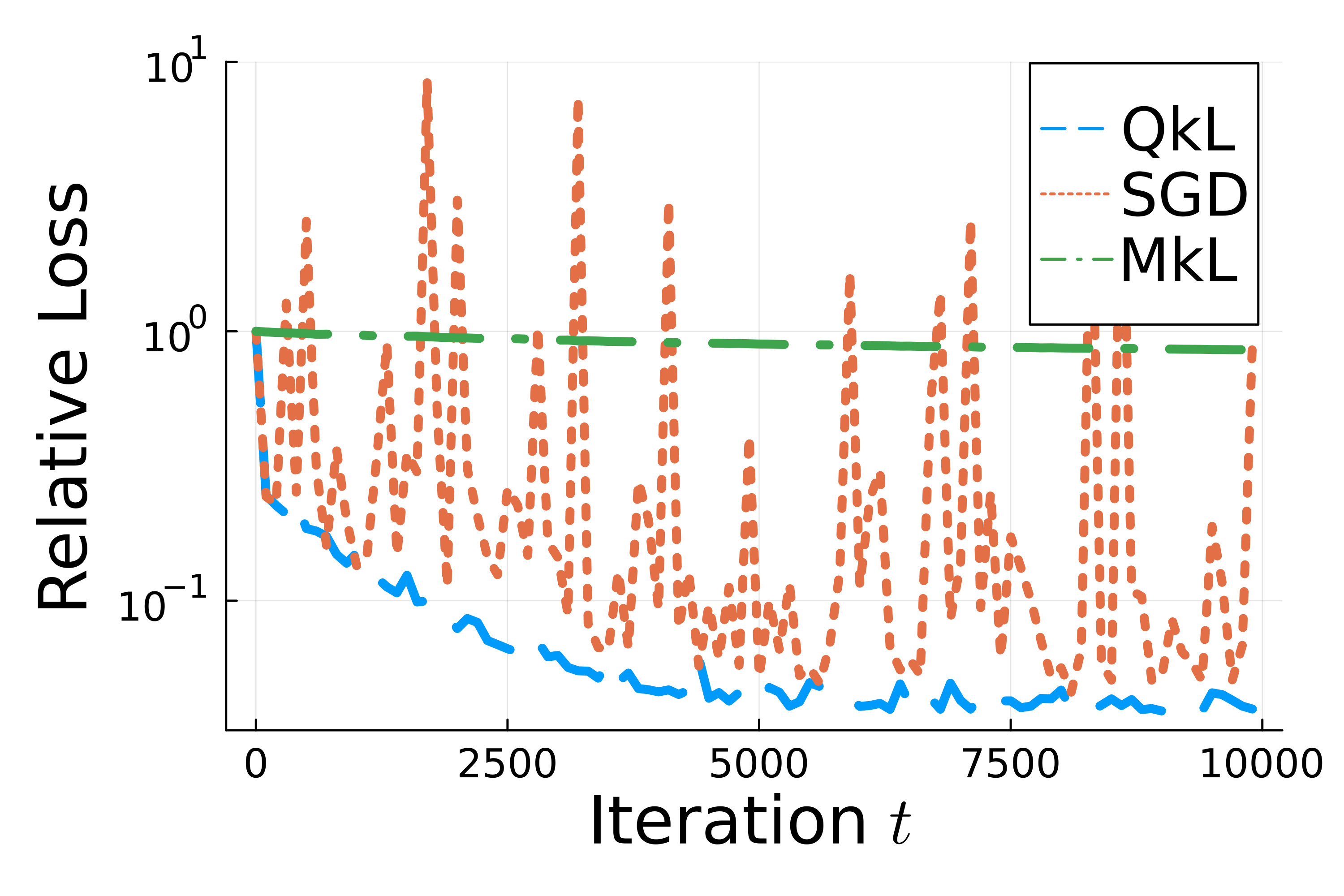}
\quad\includegraphics[width=0.3\textwidth]{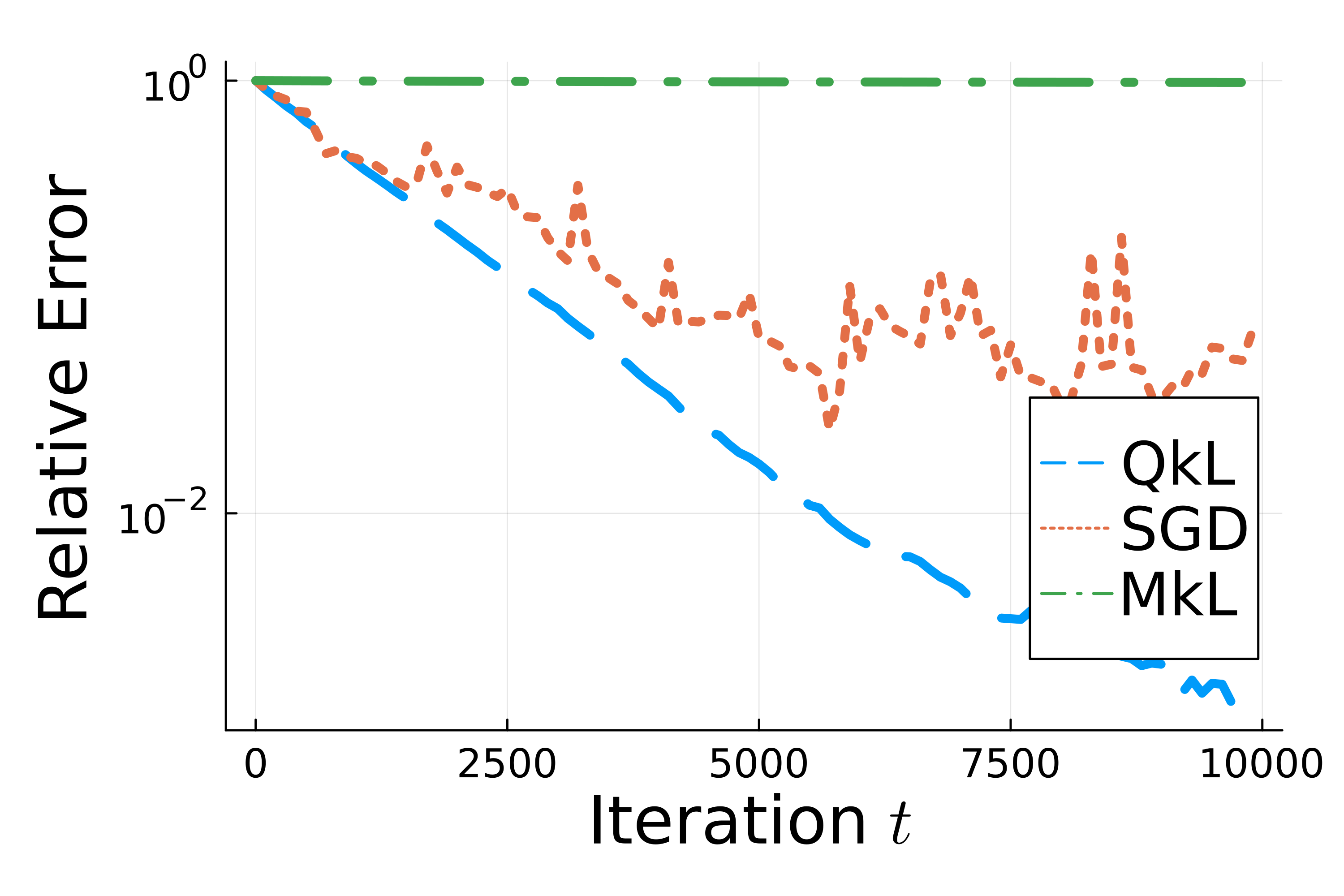}
\hfill\includegraphics[width=0.32\textwidth]{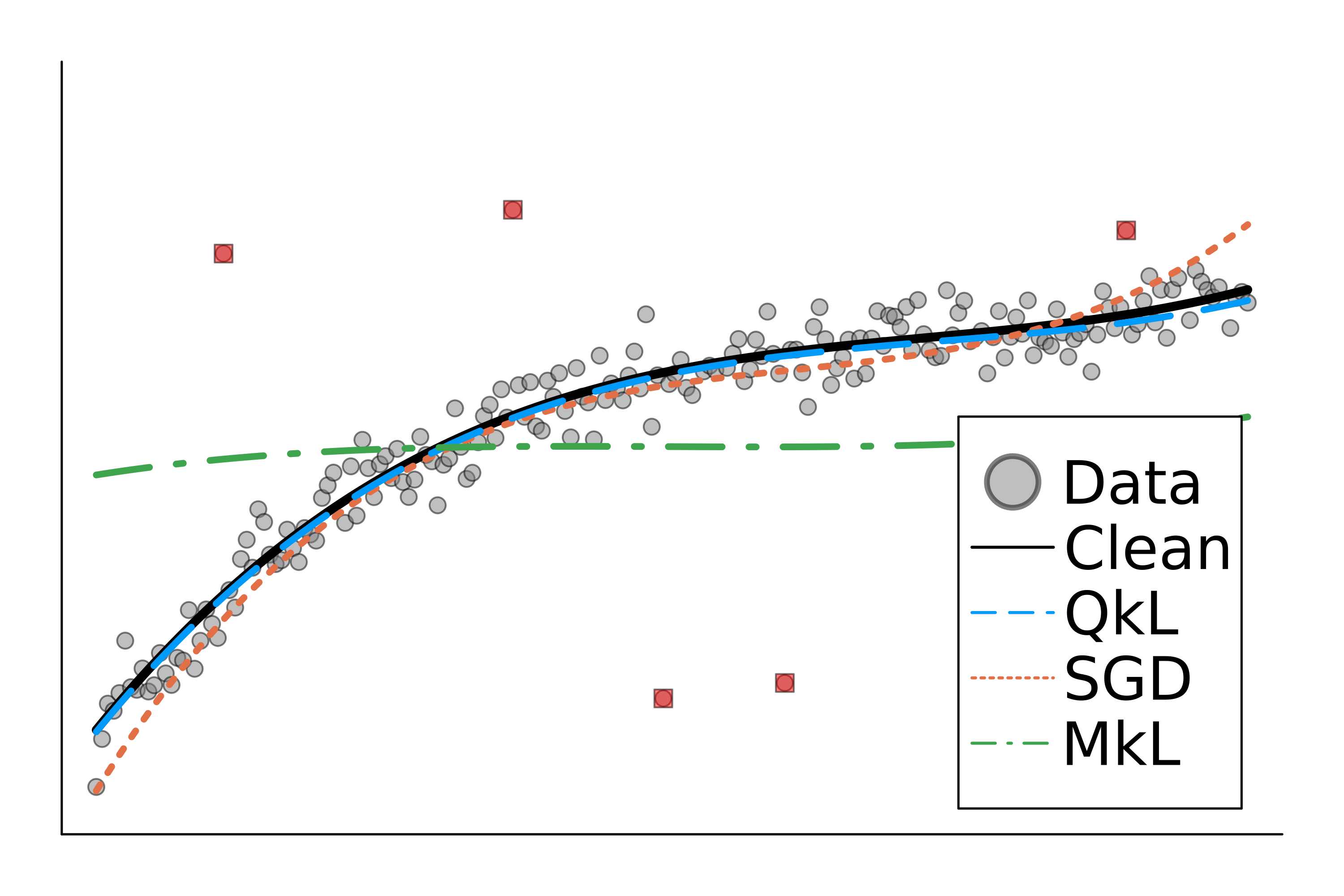}
    \caption{$\beta = 0.05$ response corruption.}
    \label{fig:figure-03a}
\end{subfigure}

\begin{subfigure}{\textwidth}
    \centering
    \includegraphics[width=0.3\textwidth]{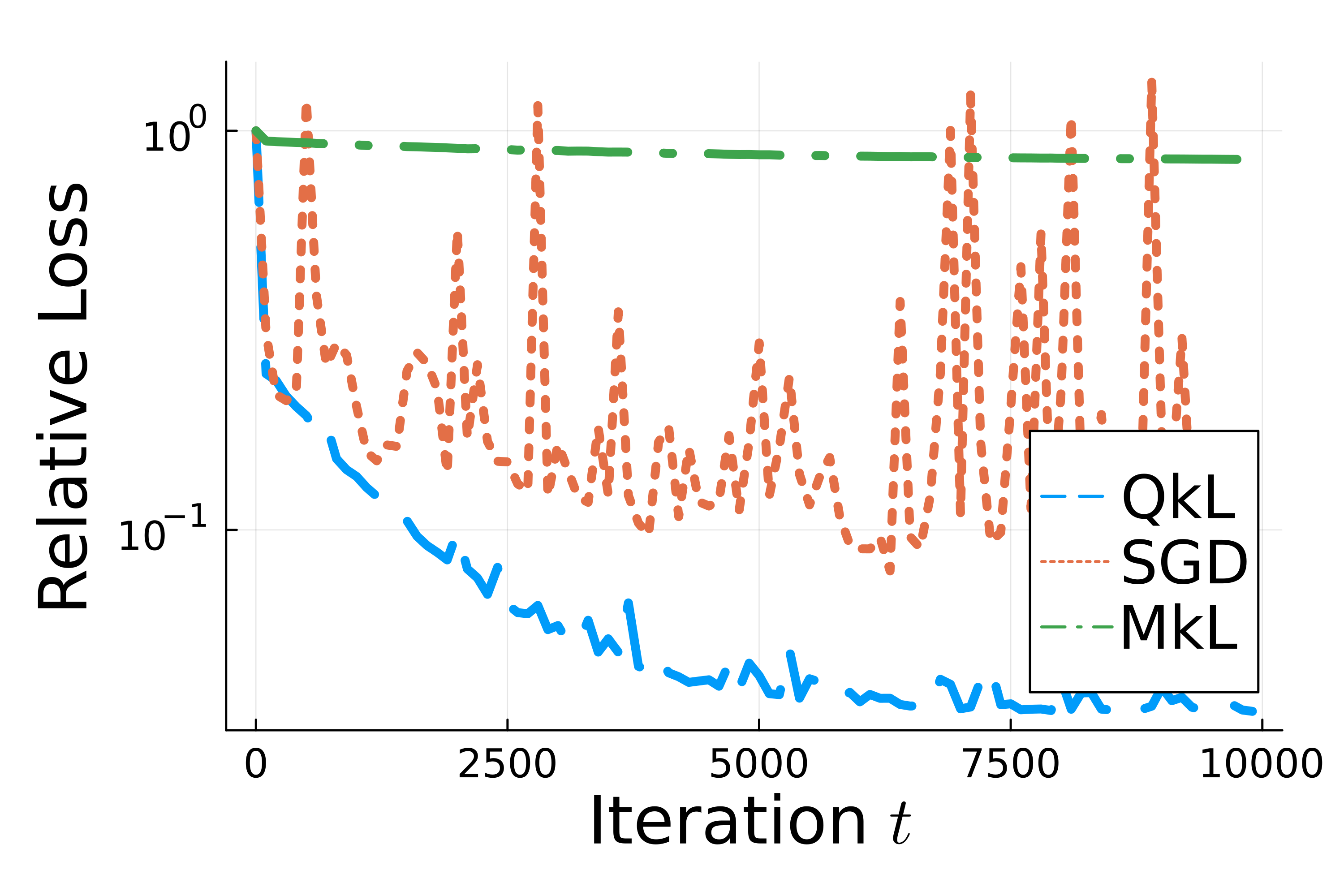}
\quad\includegraphics[width=0.3\textwidth]{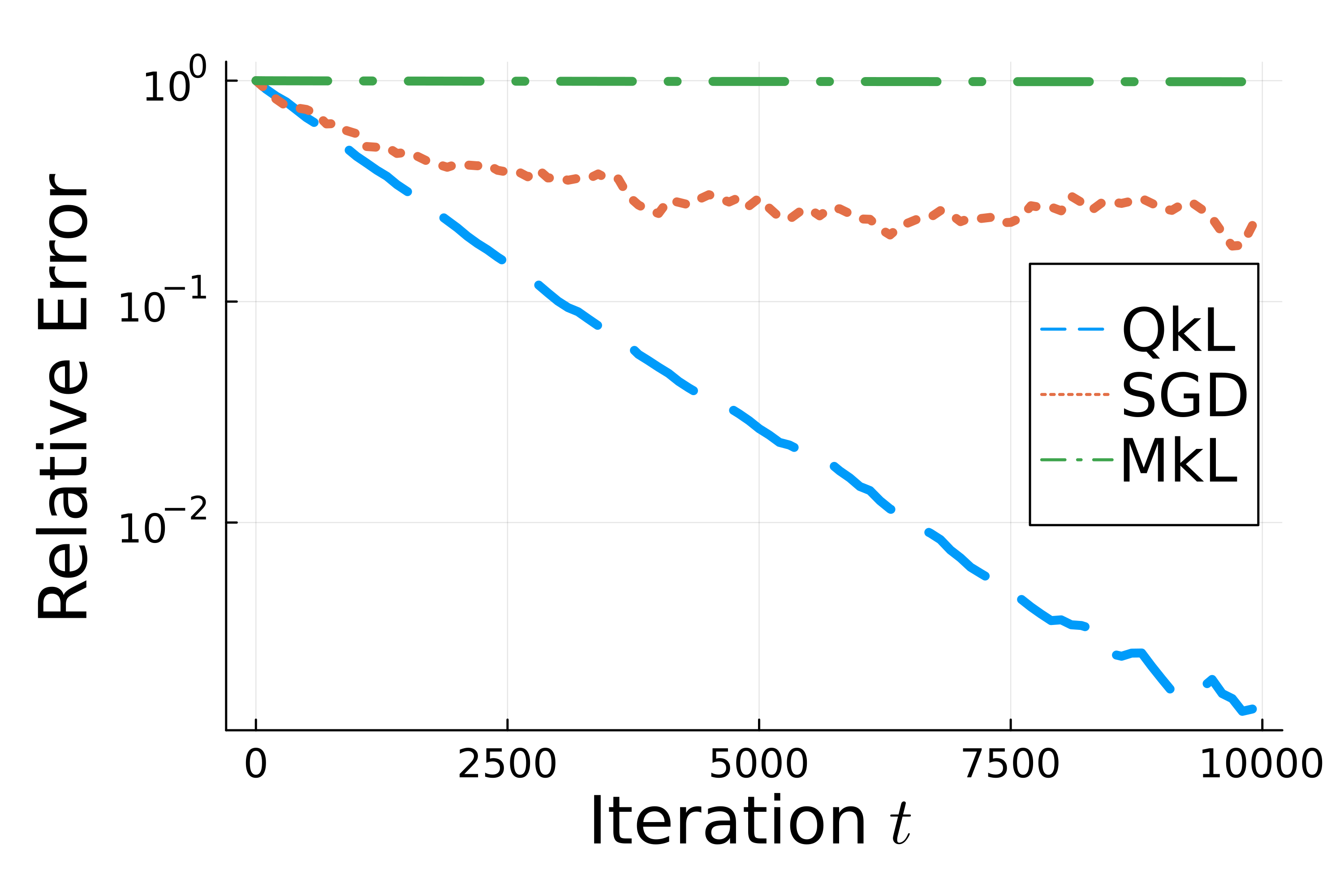}
\hfill\includegraphics[width=0.32\textwidth]{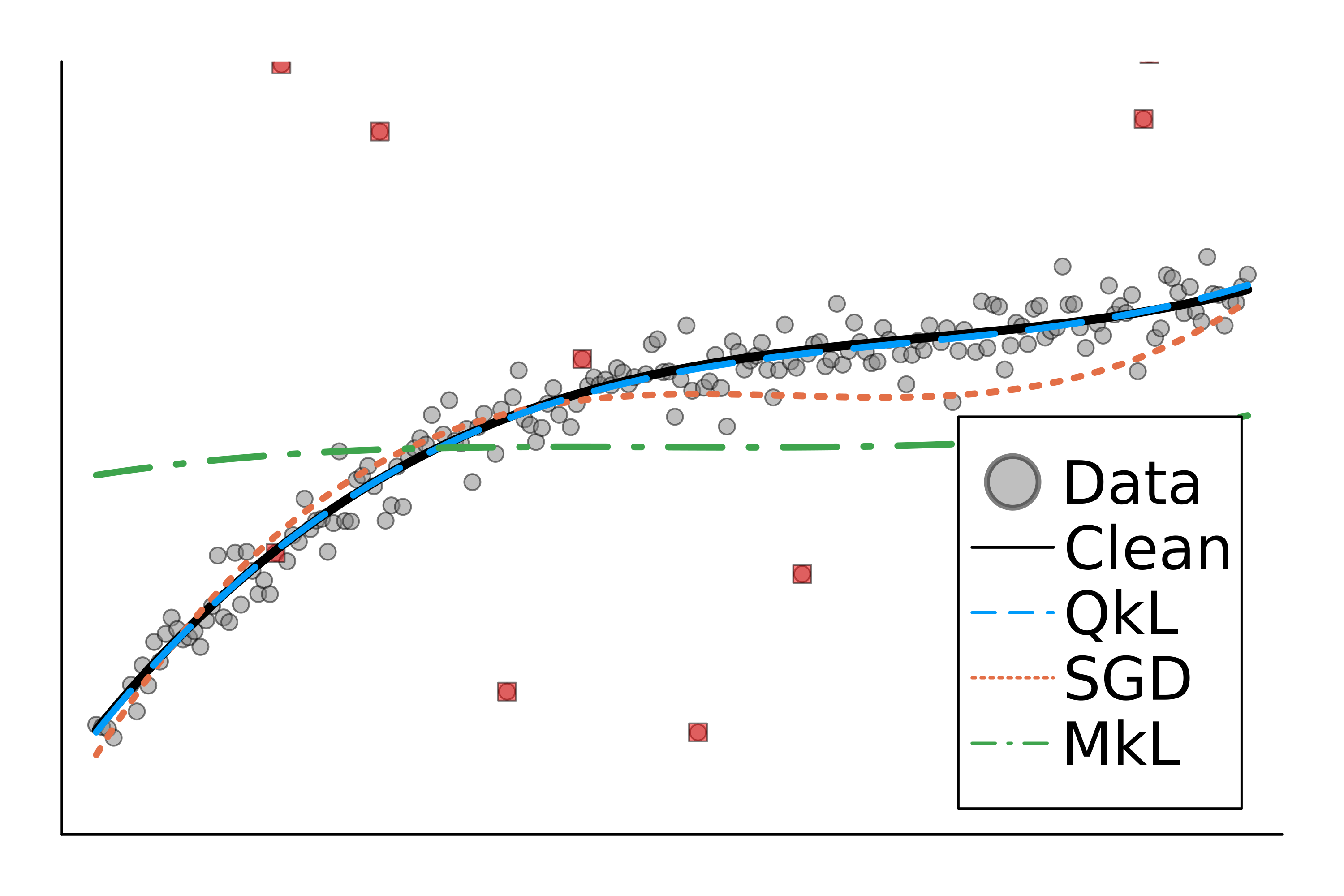}
    \caption{$\beta = 0.1$ response corruption.
    }
    \label{fig:figure-03b}
\end{subfigure}

\caption{QkL-SGD with $(k,q)=(m,1-\beta)$ and MkL-SGD with $k=m$ applied to polynomial regression loss~\eqref{eq:poly_regression}. (Left) Relative good-set loss $F_G(\ve x_t)/F_G(\ve x_0)$ per iteration. (Center) Relative squared parameter error $\|\ve x_t-\ve x^\star_{\rm clean}\|_2^2/\|\ve x_0-\ve x^\star_{\rm clean}\|_2^2$. (Right) Learned polynomials at the final iterate, together with the clean polynomial $p(x)$. Uncorrupted data samples, visualized as circles, are noisy; corrupted data samples are visualized as squares.}
\end{figure}

\subsubsection{Varying $q$ with fixed corruption fraction $\beta$}

Figure~\ref{fig:figure-04} illustrates the effect of varying the quantile parameter $q \in \{0.05, 0.1, \cdots, 1.0\}$ on the final relative loss (left) and relative error (right) after 10,000 iterations of QkL-SGD with $k = m$. The method is applied to the noisy polynomial regression problem~\eqref{eq:poly_regression} using the data model described above with corruption fractions $\beta \in \{0.1, 0.2, 0.3, 0.4\}.$  Relative loss and relative errors are averaged over 100 independent trials of 10000 iterations of QkL-SGD. Like in the noiseless polynomial regression case, we see that the quantile $q$ producing lowest relative error and loss is approximately $1 - \beta$.  However, in this case, the relative loss values are very similar for quantiles nearby to the optimal, but the relative error illustrates the sensitivity of QkL-SGD to the quantile if an accurate solution is desired.

\begin{figure}
    \includegraphics[width=0.45\textwidth]{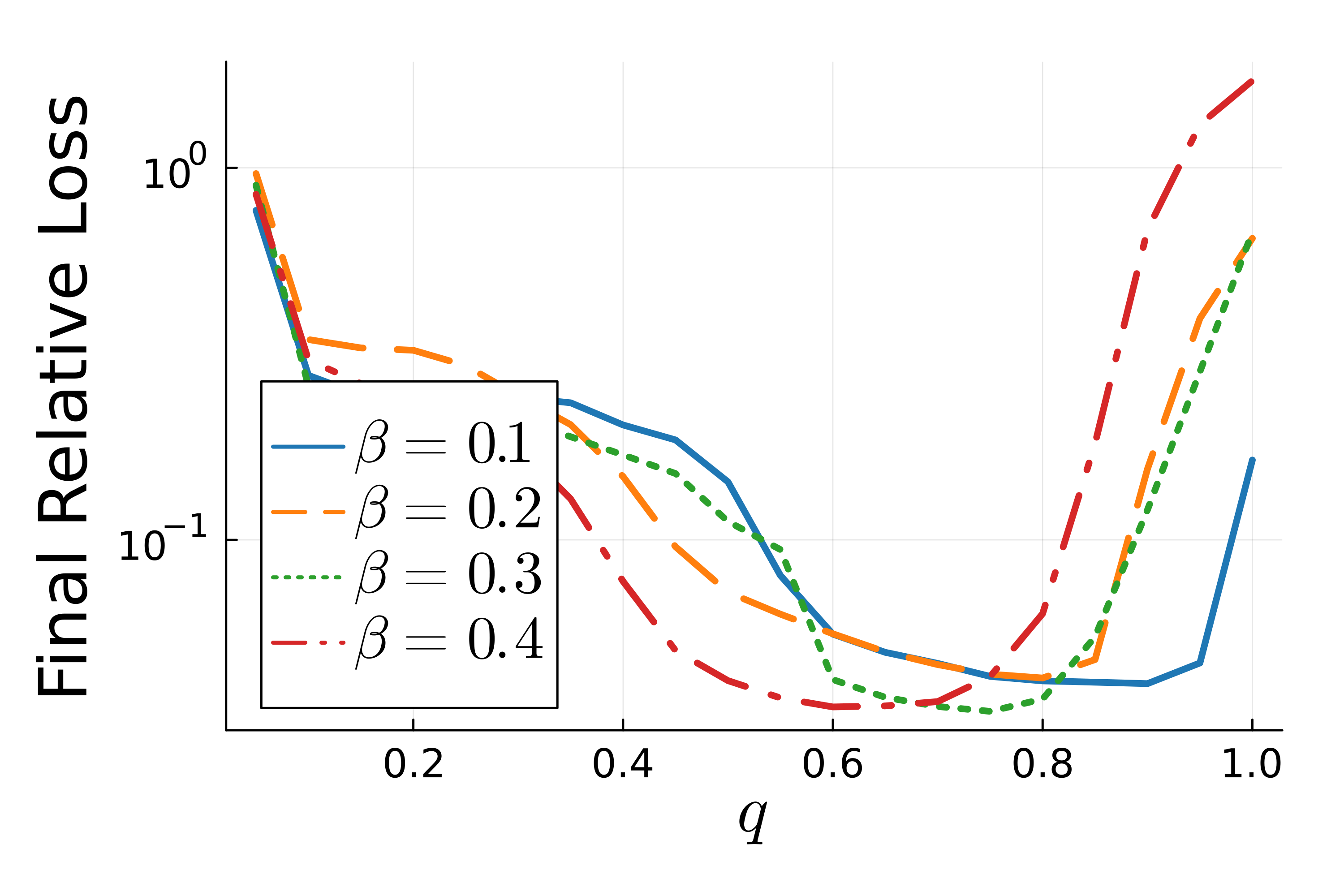}\hfill\includegraphics[width=0.45\textwidth]{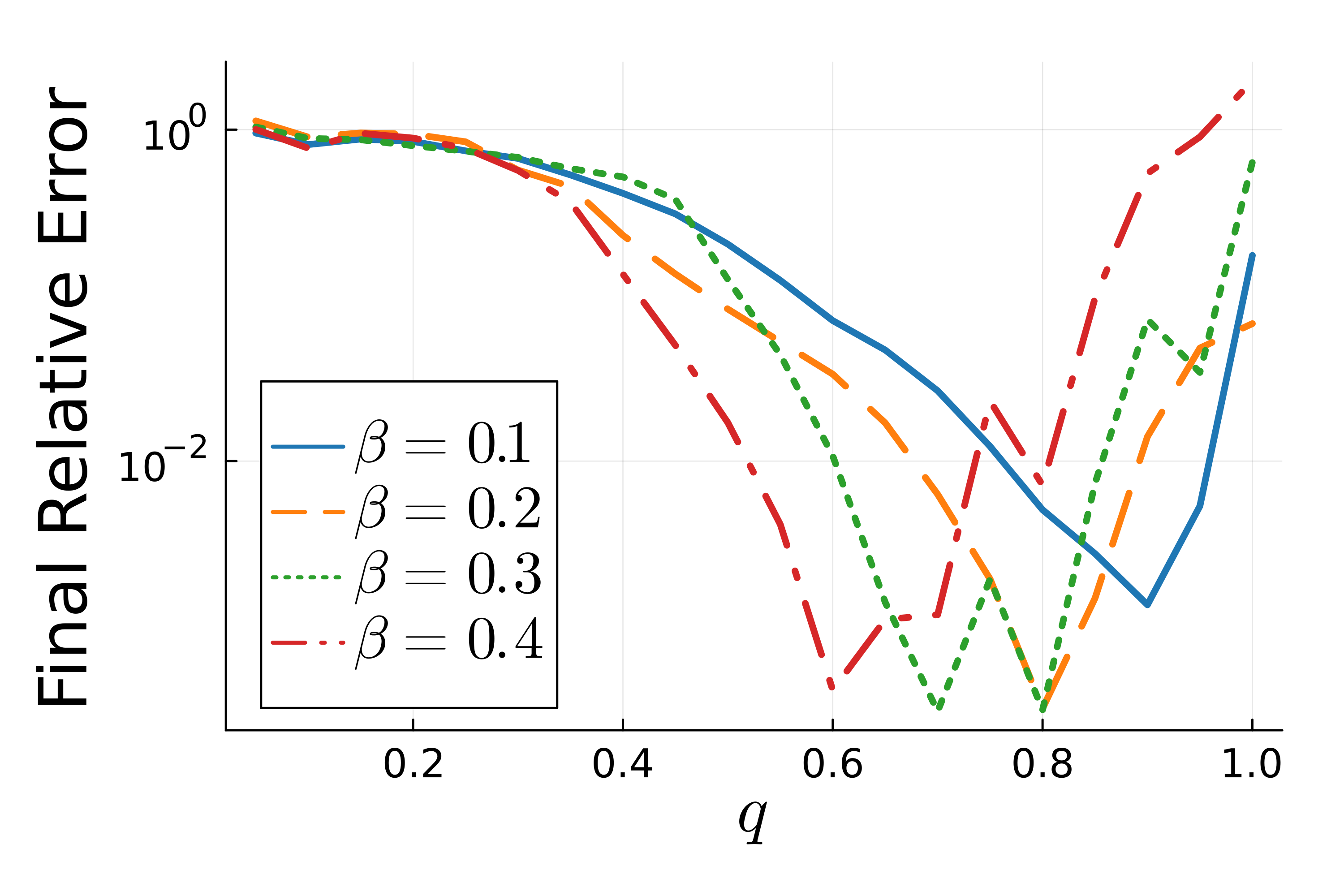}
    \caption{(Left) Final relative loss $F_G(\mathbf{x}_{10000})/F_G(\mathbf{x}_0)$ for QkL-SGD with $q \in (0,1]$ applied to the noisy polynomial regression loss~\eqref{eq:poly_regression} with fraction of corruption $\beta \in \{0.1, 0.2, 0.3, 0.4\}$; (right) final relative squared error $\|\mathbf{x}_{10000} - \ve{x}^\star_{\rm clean}\|^2/\|\mathbf{x}_0 - \ve{x}^\star_{\rm clean}\|^2$ for QkL-SGD with $q \in (0,1]$ (log scale). }\label{fig:figure-04}
\end{figure}

\subsection{Regularized Logistic Regression}\label{subsec:logistic_regression}

We next consider $\ell_2$-regularized logistic regression as a smooth nonlinear classification problem with corrupted labels. This experiment keeps the smoothness present in the polynomial-regression examples, but moves from quadratic least squares to a nonquadratic classification loss. As in noisy polynomial regression, the interpolation part of Assumption~\ref{assump:sgd} fails: the individual good losses and gradients need not vanish at the clean-data reference solution. In this section, we generate synthetic, linearly separable two-dimensional data and examine the impact of introducing label corruption on optimization performance.  The experiments are designed to assess robustness to outliers, which are introduced by randomly flipping (corrupting) a fraction $\beta$ of the labels.

For a parameter vector $\ve{x}$, the logistic model predicts the probability of the positive class for data point $\ve d_i$ via the sigmoid function as
\[
\widehat y_i=\sigma(\ve d_i^\top \ve x)
=\frac{1}{1+\exp(-\ve d_i^\top \ve x)}.
\]
Given binary labels $l_i\in\{0,1\}$, we use the regularized component losses
\begin{equation}
f_i(\ve x)
=
-\left[
l_i\log(\widehat y_i)
+(1-l_i)\log(1-\widehat y_i)
\right]
+\frac{\lambda}{2m}\|\ve x\|_2^2,
\label{eq:logistic_regression_component}
\end{equation}
and the finite-sum objective
\begin{equation}
F(\ve x)=\frac1m\sum_{i=1}^m f_i(\ve x).
\label{eq:logistic_regression}
\end{equation}

In all experiments, we generate $m=200$ augmented data points $\ve d_i=(1,u_i,v_i)^\top\in\mathbb R^3$. The first component of each data point is fixed to be $(\ve{d}_i)_1 = 1$ to allow for an affine bias term in the logistic regression model.

Half of the points belong to the positive class, with $(u_i,v_i)$ sampled i.i.d.\ from a Gaussian distribution centered at $(2,2)$ with identity covariance and clean label $l_i^{\rm clean}=1$. The remaining half belong to the negative class, with $(u_i,v_i)$ sampled i.i.d.\ from a Gaussian distribution centered at $(-2,-2)$ with identity covariance and clean label $l_i^{\rm clean}=0$. Note that before corruption, the data are linearly separable with high probability. To create outliers, we select a fraction $\beta$ of the data points and flip their labels. Thus, for $i\in G$ the observed label agrees with the clean label, while for $i\in O$ the observed label is $l_i=1-l_i^{\rm clean}$. The algorithms use only the observed labels. In the visualizations below, we plot the last two coordinates $(u_i,v_i)$ together with the learned decision boundary
\[
\{\,\ve y\in\mathbb R^2: x_1+\ve y^\top \ve x_{2:}=0\,\}.
\]

The objective is to learn a parameter vector $\mathbf{x} \in \mathbb{R}^3$ that correctly classifies the data points whose labels have not been corrupted, and ideally generalizes to correctly classify even data points whose labels have been corrupted and no longer represents to which class the data truly belongs.
Aligned with that, we run SGD on the
clean dataset before any labels are flipped. We denote the resulting parameter
after $10000$ iterations by $\ve{x}^\star_{\rm clean}$ and plot the
corresponding clean-data reference boundary
\[
\mathcal B_{\rm clean}
=
\{\,\ve y\in \mathbb R^2 :
(\ve{x}^\star_{\rm clean})_1
+
\ve y^\top(\ve{x}^\star_{\rm clean})_{2:}=0
\,\}.
\]
This boundary is used only as a visual reference and is labeled
``Clean'' in the plots.

In all logistic-regression experiments, SGD, QkL-SGD, and min-$k$-loss SGD (MkL-SGD) are run with stepsize $\eta=0.01$ and regularization parameter $\lambda=1$. The reported objective values are evaluated on the good set $G$, and the reported parameter errors are measured relative to $\ve x^\star_{\rm clean}$.

\paragraph{Assumptions for regularized logistic regression}
The regularized logistic regression components in \eqref{eq:logistic_regression_component} are nonnegative, convex, and smooth, with
\[
L_i \le \frac14\|\ve d_i\|_2^2+\frac{\lambda}{m}.
\]
Moreover, because of the quadratic regularizer, $F_S$ is $\lambda/m$-strongly convex for every nonempty subset $S\subseteq G$. Thus Assumptions~\ref{assump:sgd}.\ref{ass:outlier_nonnegativity}, \ref{assump:sgd}.\ref{ass:component_smoothness}, and \ref{assump:sgd}.\ref{ass:subset_strong_convexity} hold at the model level. However, Assumption~\ref{assump:sgd}.\ref{ass:good_set_regularity} fails: although the good component losses are convex, they do not satisfy $f_i(\ve x^\star_{\rm clean})=0$ and $\nabla f_i(\ve x^\star_{\rm clean})=0$ individually.

\subsubsection{Various outlier settings}
In this section, we explore the convergence of QkL-SGD in a variety of outlier settings.  Here we let quantile $q = 1 - \beta$.

\paragraph{Few outliers from a single class
(\texorpdfstring{$\beta=0.01$, $q=0.99$, $k=m$}
{beta=0.01, q=0.99, k=m})}
In the first logistic-regression experiment, shown in Figure~\ref{fig:figure-05a}, we corrupt $1\%$ of the data points, all chosen from the positive class, by flipping their labels. We compare standard SGD, QkL-SGD with $(k,q)=(m,0.99)$, and MkL-SGD with $k=m$. In this full-sample setting, QkL-SGD and MkL-SGD both screen all component losses at each iteration, so the comparison isolates the effect of selecting from the lower empirical quantile rather than selecting the minimum-loss component. QkL-SGD decreases the good-set objective $F_G(\ve x_t)$ faster than SGD and tracks the clean-data reference parameter more closely. By contrast, MkL-SGD stalls early: selecting the minimum-loss component can repeatedly choose nearly solved components and produce updates of negligible size. The decision-boundary plot shows the same qualitative behavior: QkL-SGD remains close to the clean-data reference boundary, while the SGD boundary is more visibly affected by the flipped labels.

\paragraph{Moderate outliers from both classes
(\texorpdfstring{$\beta=0.1$, $q=0.9$, $k=m$}
{beta=0.1, q=0.9, k=m})}
In the experiment shown in Figure~\ref{fig:figure-05b},
$10\%$ of the data points are selected uniformly from both classes and their
labels are flipped. We compare standard SGD, QkL-SGD with screening size
$k=m$ and quantile parameter $q=0.9$, and MkL-SGD with $k=m$. Thus, both
QkL-SGD and MkL-SGD evaluate all component losses before selecting an update;
the comparison between them isolates the effect of selecting from the lower
empirical $q$-quantile rather than selecting the minimum-loss component.

QkL-SGD achieves smaller good-set loss $F_G(\ve x_t)$ and smaller parameter
error relative to the clean-data reference than standard SGD, while full-sample
MkL-SGD stalls early. The decision-boundary visualization shows the same
qualitative behavior: the QkL-SGD boundary remains closer to the clean-data
reference boundary, whereas the SGD boundary is influenced by the
flipped labels: a single data point with a non-corrupted label in the positive class is misclassified.

\paragraph{Moderate outliers from a single class
(\texorpdfstring{$\beta=0.1$, $q=0.9$, $k=m$}
{beta=0.1, q=0.9, k=m})}
We next consider a more structured corruption pattern, where $10\%$ of the
labels are flipped but all corrupted points are chosen from the positive
class. This asymmetric corruption concentrates the outlier effect in one
region of the data. We again compare standard SGD, QkL-SGD with screening
size $k=m$ and quantile parameter $q=0.9$, and MkL-SGD with $k=m$.

As shown in Figure~\ref{fig:figure-05c}, QkL-SGD remains more
robust to the corrupted labels than standard SGD, achieving lower good-set loss and smaller parameter error relative to the clean-data reference. We see that QkL-SGD learns a decision boundary that correctly classifies all correctly labeled data points.  Additionally, the extreme imbalance of the mislabeled data causes the decision boundary learned by SGD to misclassify additional correctly labeled data in the positive class. Finally, full-sample MkL-SGD again makes
little progress, consistent with the conservative behavior of minimum-loss selection.

\begin{figure}
    \centering
    \begin{subfigure}{\textwidth}
    \centering
    \includegraphics[width=0.3\textwidth]{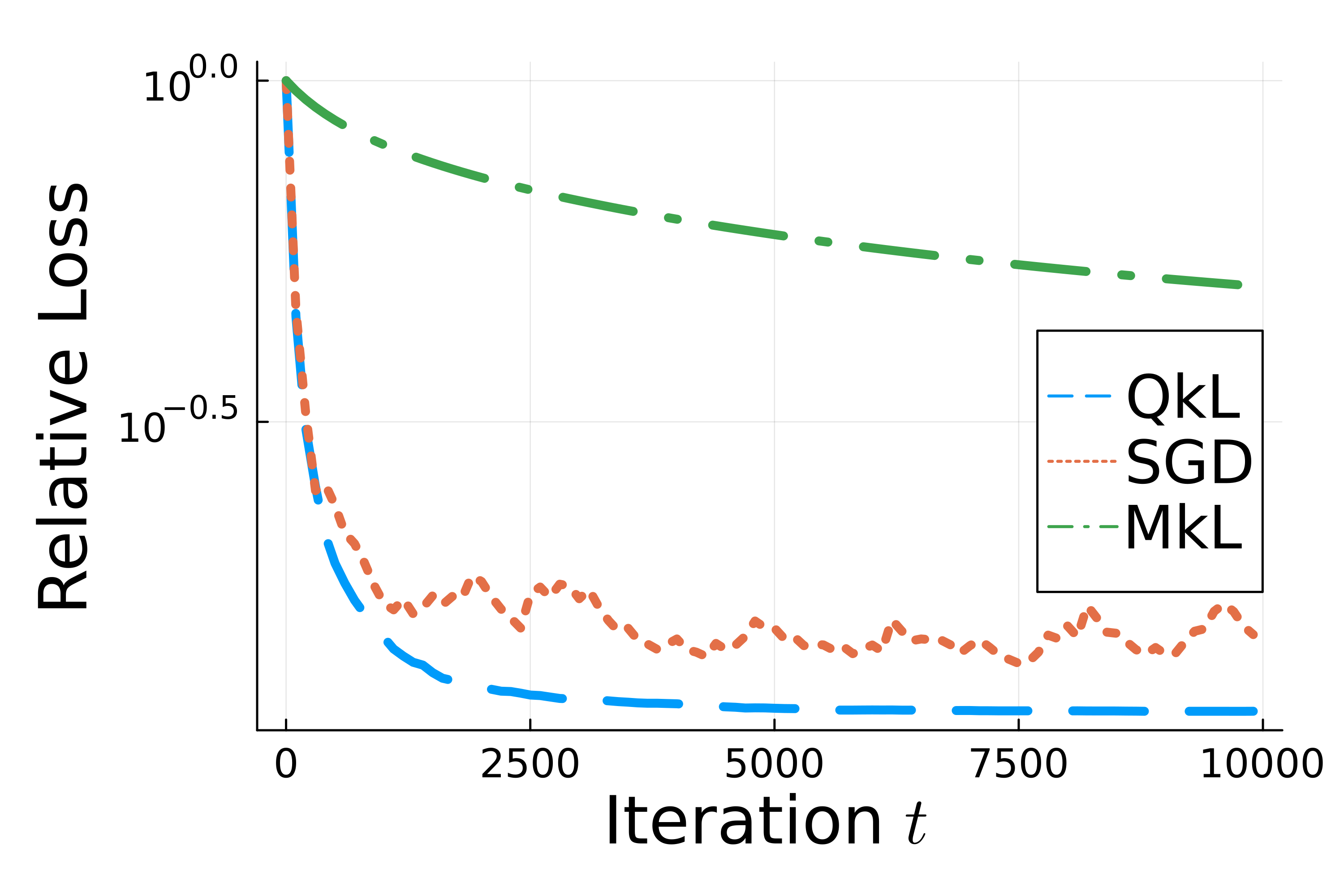}
\quad\includegraphics[width=0.3\textwidth]{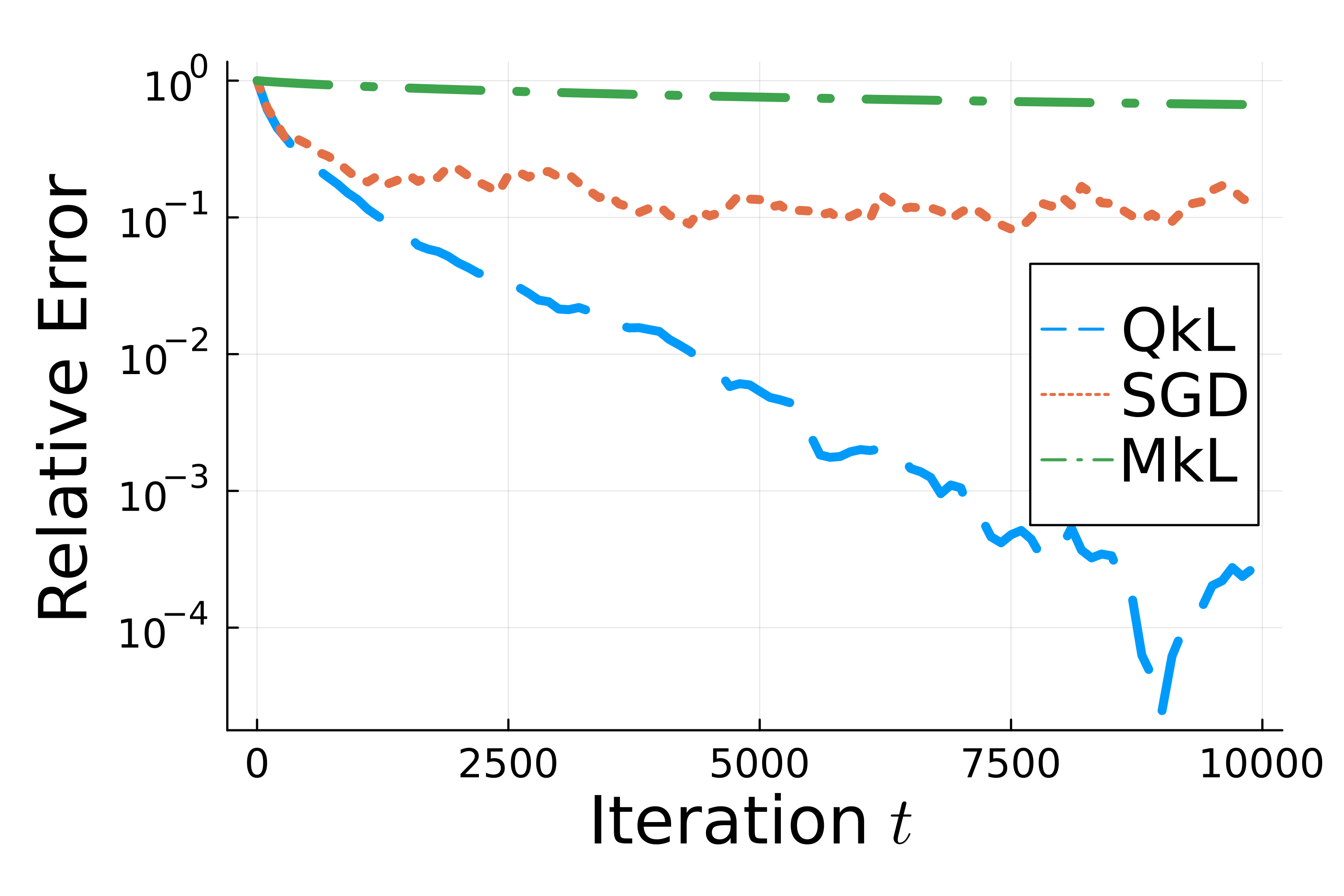}
\hfill\includegraphics[width=0.3\textwidth]{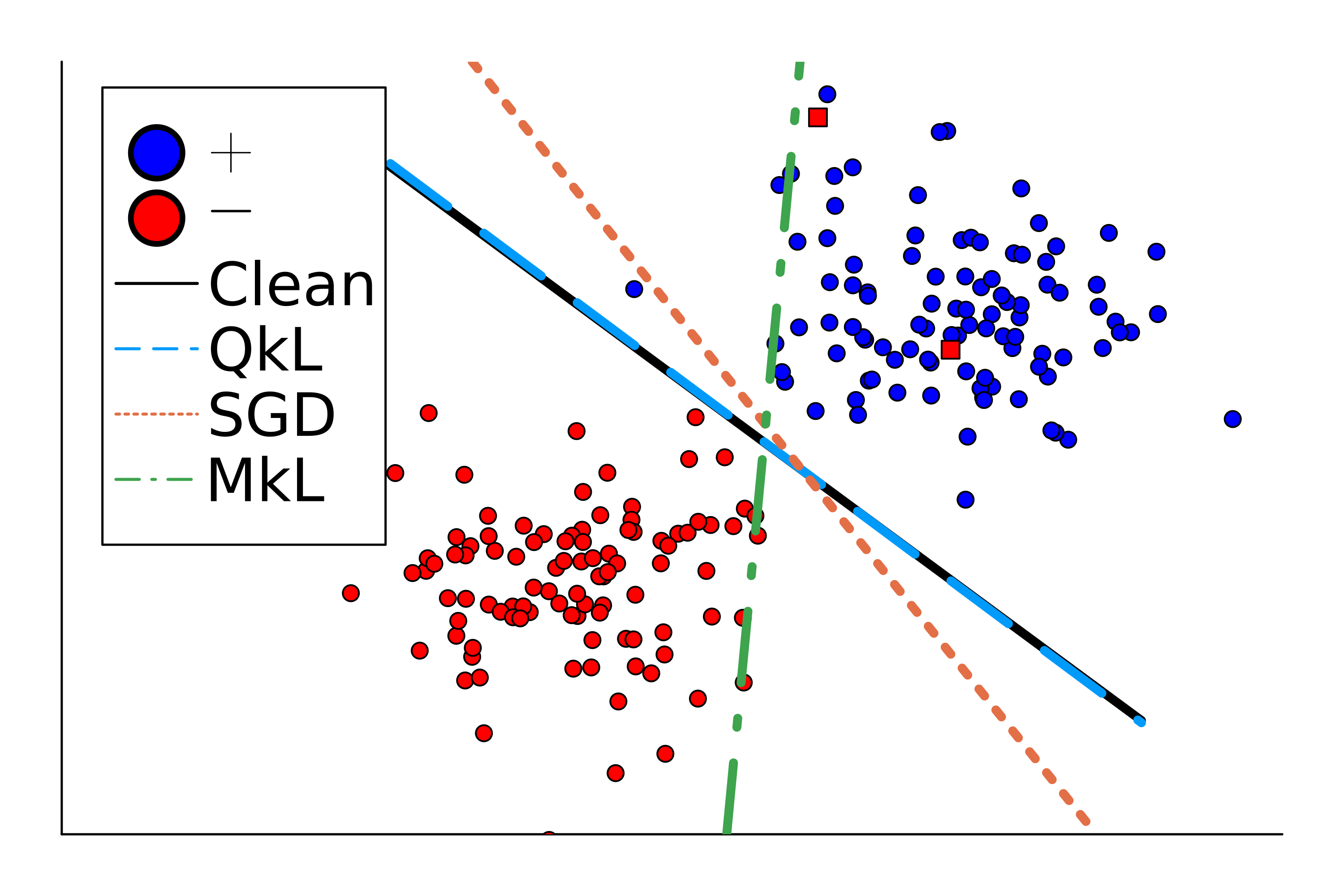}
    \caption{$\beta=0.01$ label corruption, with outliers sampled from the positive class.  }

\label{fig:figure-05a}
\end{subfigure}

\begin{subfigure}{\textwidth}
    \centering
    \includegraphics[width=0.3\textwidth]{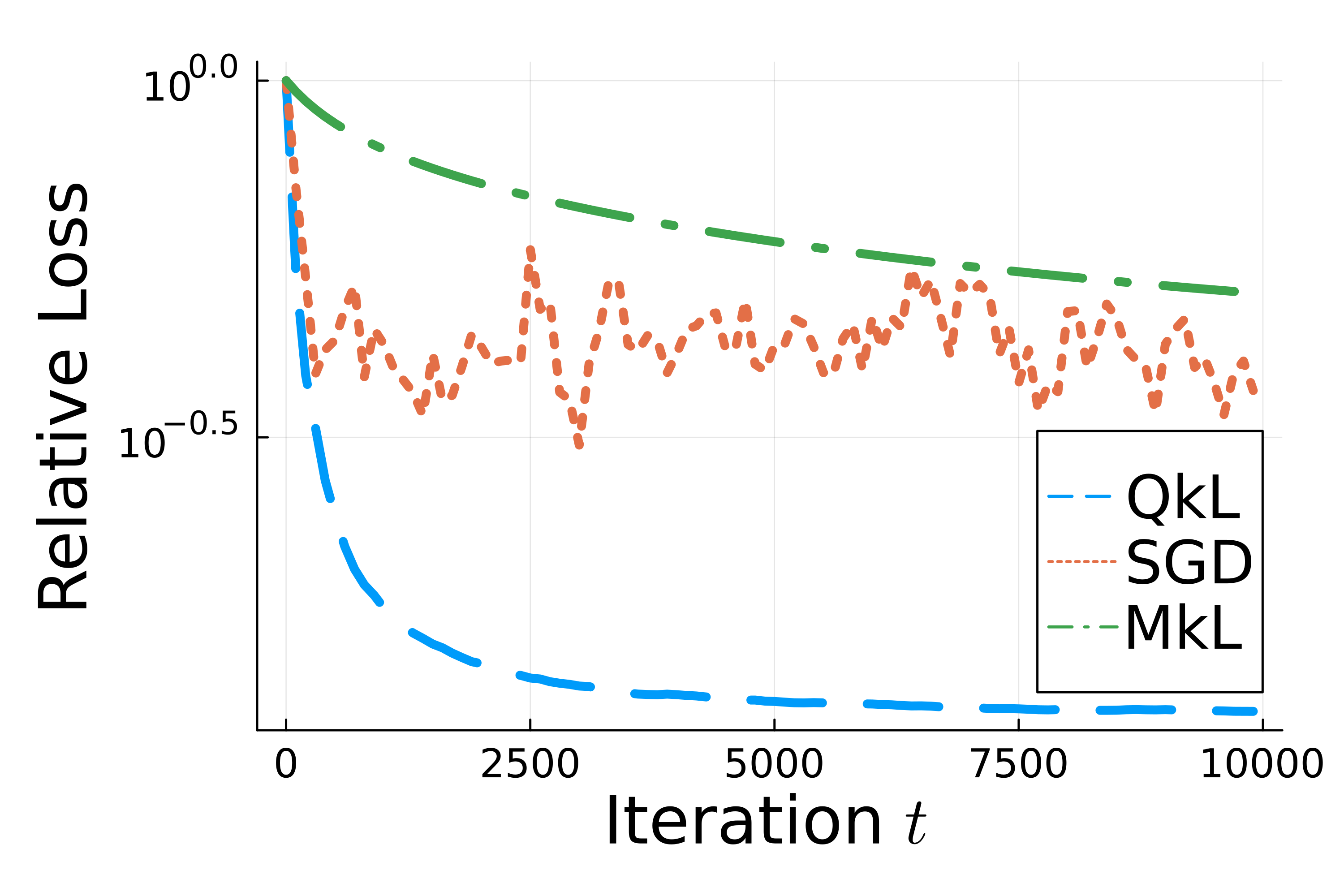}
    \quad\includegraphics[width=0.3\textwidth]{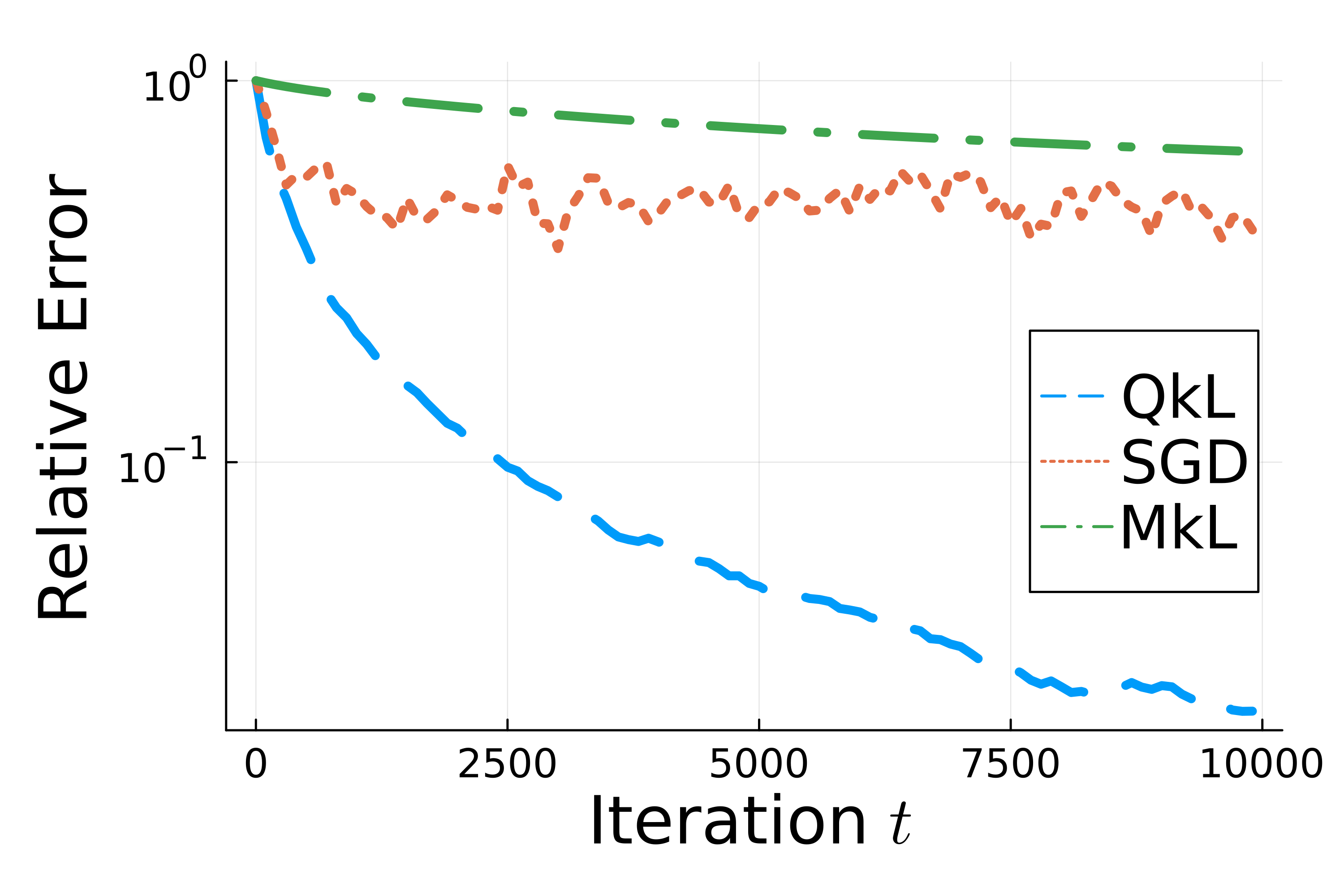}
    \hfill\includegraphics[width=0.32\textwidth]{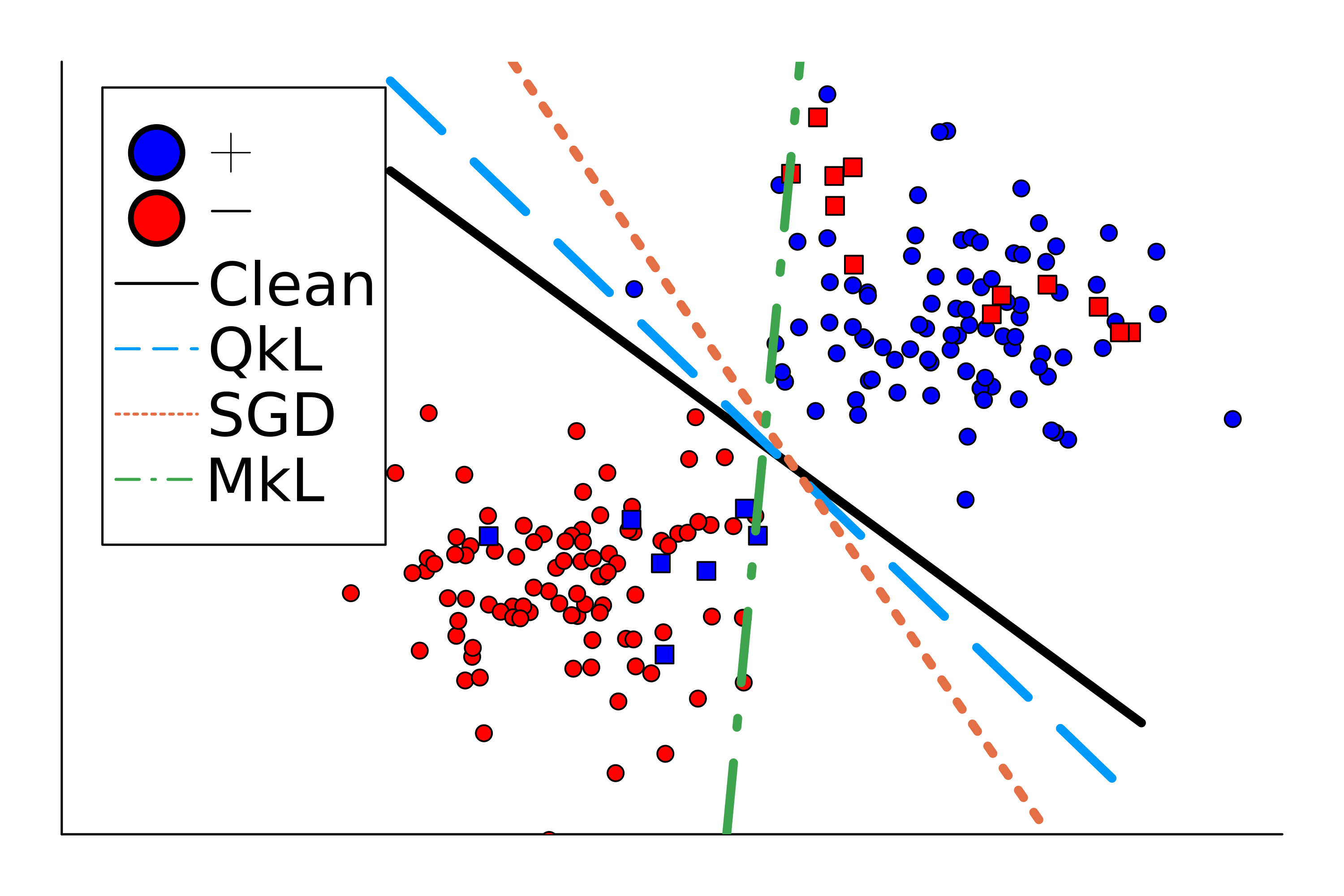}
    \caption{$\beta=0.1$ label corruption, with corrupted points sampled from the two classes. }
    \label{fig:figure-05b}
\end{subfigure}

\begin{subfigure}{\textwidth}
    \centering
    \includegraphics[width=0.3\textwidth]{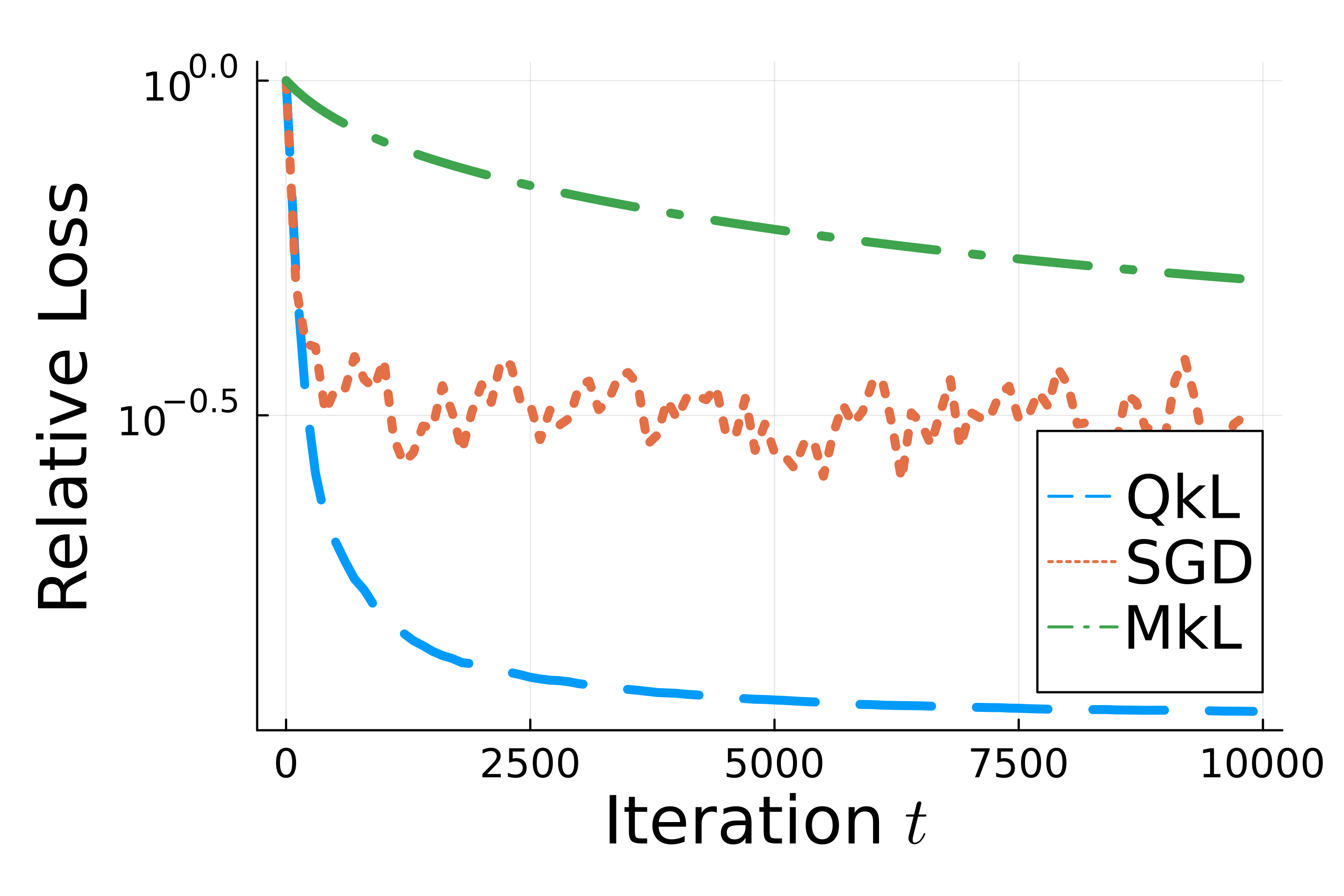}
\quad\includegraphics[width=0.3\textwidth]{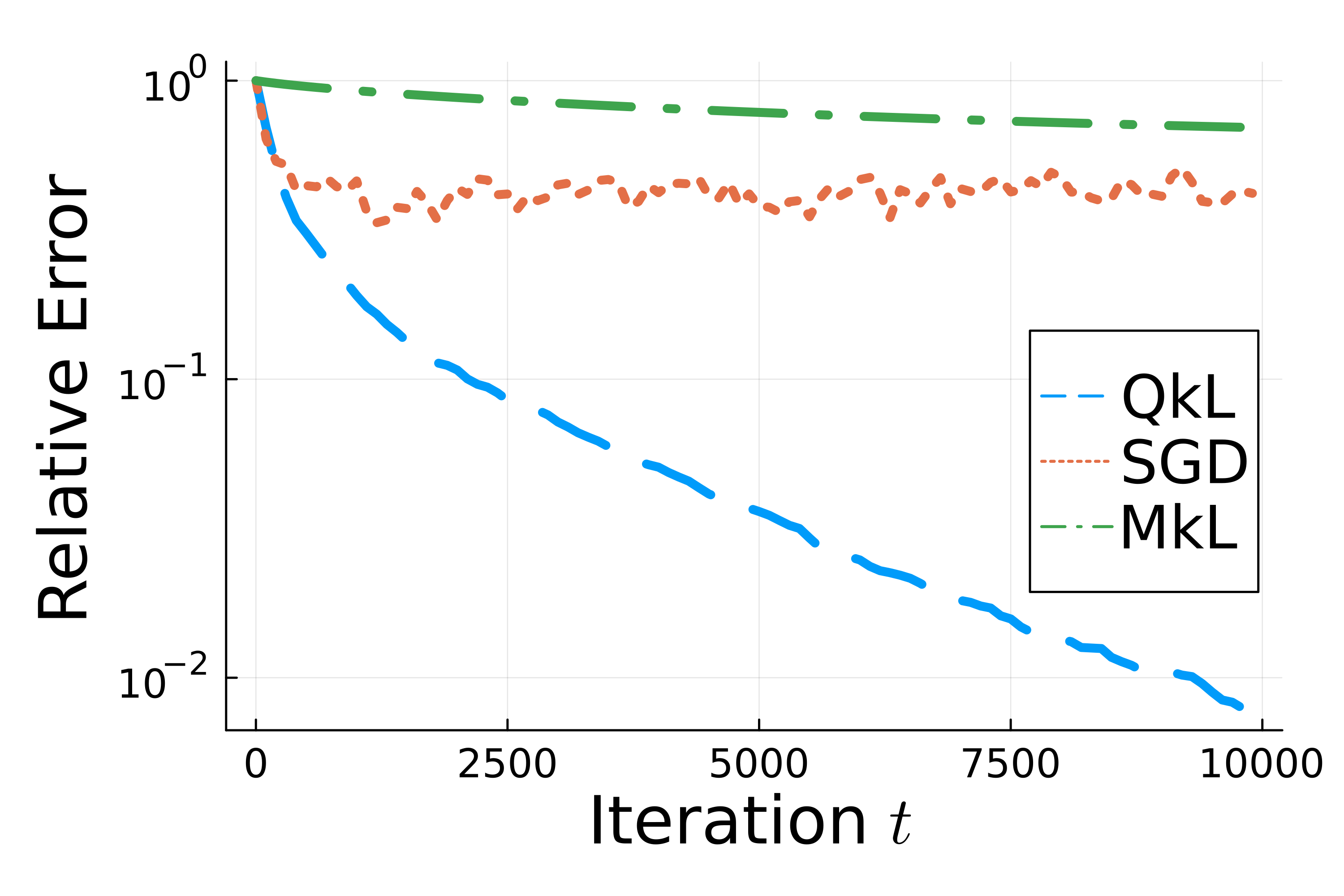}
\hfill\includegraphics[width=0.32\textwidth]{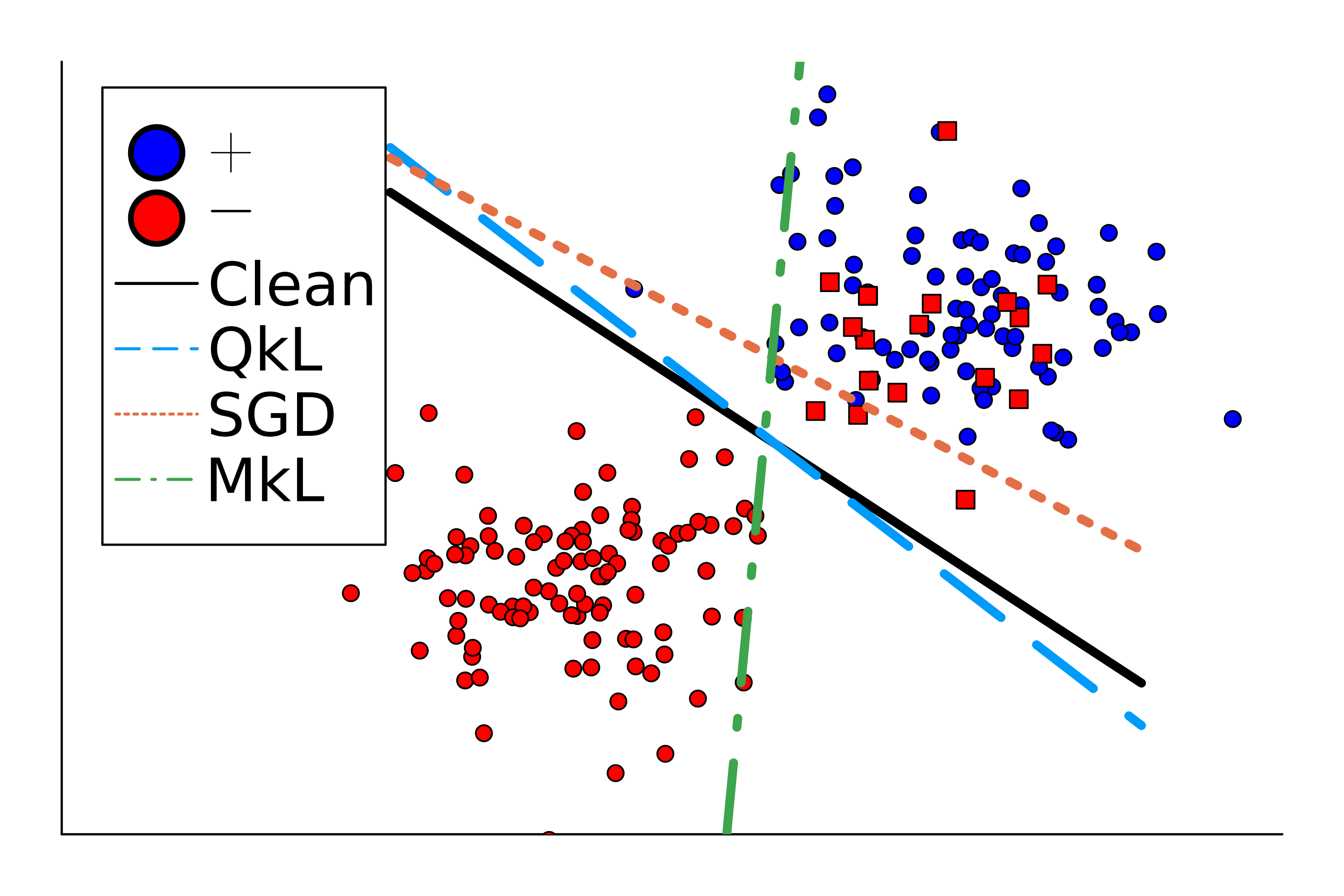}
    \caption{$\beta=0.1$ label corruption, with outliers sampled from the positive class. }
    \label{fig:figure-05c}
\end{subfigure}
\caption{QkL-SGD with $(k,q)=(m,1-\beta)$ and MkL-SGD with $k=m$ applied to regularized logistic regression~\eqref{eq:logistic_regression}. (Left) Relative good-set loss $F_G(\ve x_t)/F_G(\ve x_0)$ per iteration. (Center) Relative squared parameter error $\|\ve x_t-\ve x^\star_{\rm clean}\|_2^2/\|\ve x_0-\ve x^\star_{\rm clean}\|_2^2$. (Right) Learned decision boundaries at the final iterate, together with the clean-data reference boundary $\mathcal B_{\rm clean}$. Uncorrupted data samples are visualized as circles; corrupted data samples are visualized as squares.}
\end{figure}

\subsubsection{Varying $q$ with fixed corruption fraction $\beta$}

Our final experiment in this section examines the sensitivity of QkL-SGD to the choice of quantile parameter $q$ for the regularized logistic regression problem. For each corruption level $\beta \in \{0.1, 0.2, 0.3, 0.4\}$, we run QkL-SGD for 10,000 iterations with $q \in \{0.05, 0.1, \ldots, 1.0\}$ and record the resulting final relative loss and relative error. The underlying optimization problem is the regularized logistic regression objective~\eqref{eq:logistic_regression} using the data model described above, where a fraction $\beta$ of the samples, selected uniformly across the two classes, are corrupted. The results are presented in Figure~\ref{fig:figure-06}, with the left panel showing the final relative loss and the right panel showing the final relative error.  Relative loss and relative errors are averaged over 100 independent trials of 10000 iterations of QkL-SGD. As in polynomial regression applications, we see that the quantile $q$ that minimizes relative error and loss is approximately $1-\beta$.  In this case, we see that using a quantile $q$ less than this optimal choice produces better results than even slight overestimates of the optimal quantile.

\begin{figure}
    \includegraphics[width=0.45\textwidth]{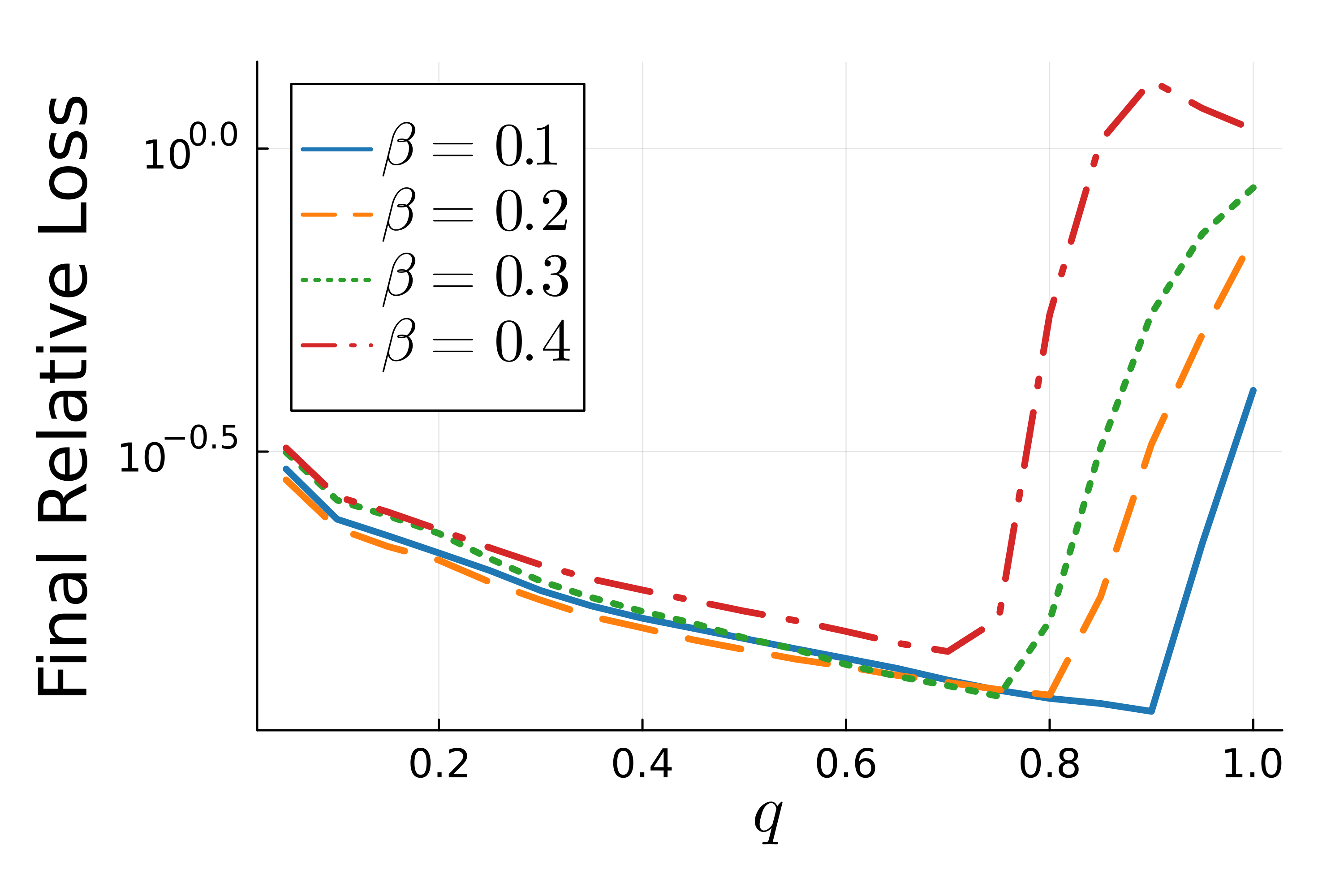}\hfill\includegraphics[width=0.45\textwidth]{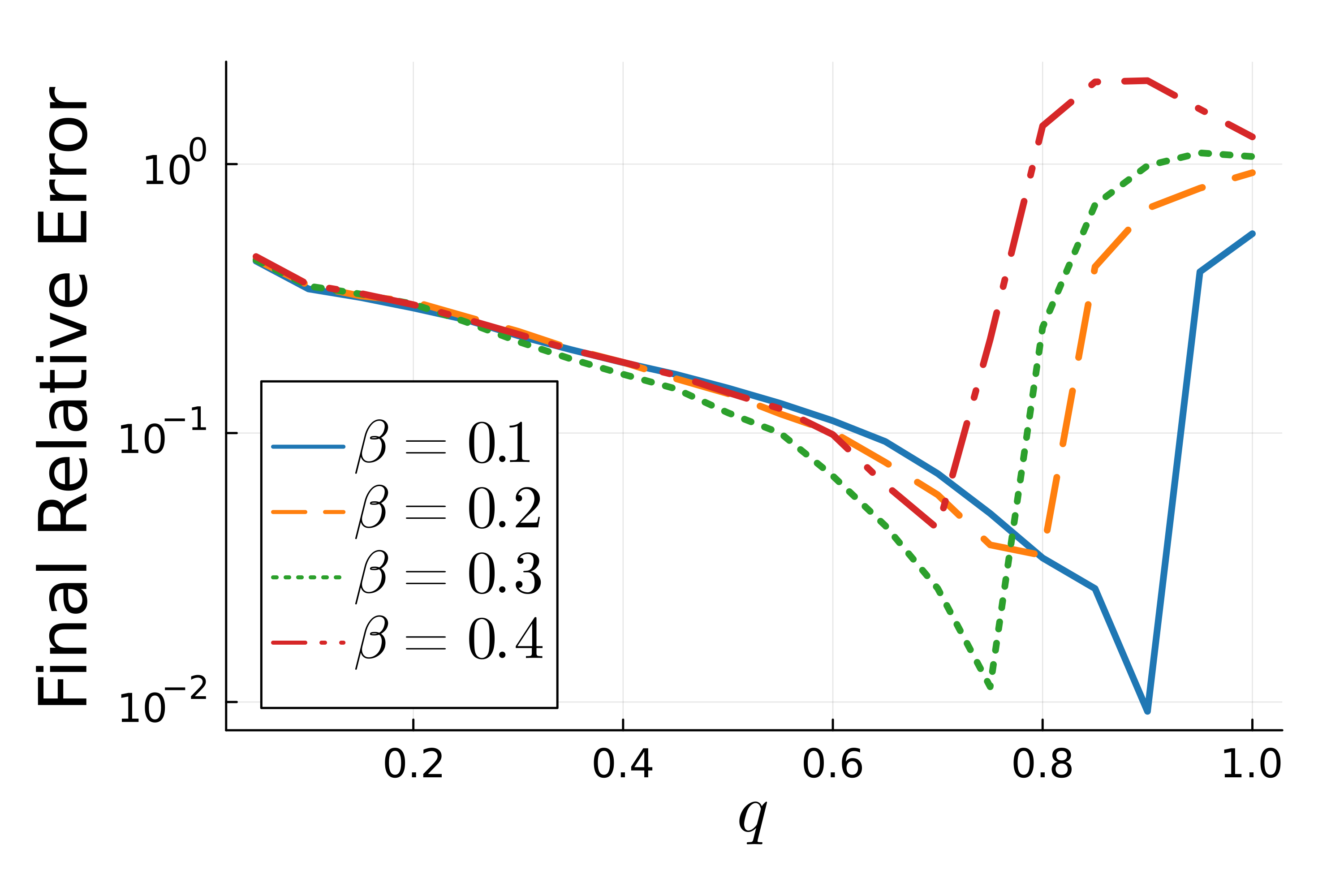}
    \caption{(Left) Final relative loss $F_G(\mathbf{x}_{10000})/F_G(\mathbf{x}_0)$ for QkL-SGD with $q \in (0,1]$ applied to the regularized logistic regression loss~\eqref{eq:logistic_regression} with fraction of corruption $\beta \in \{0.1, 0.2, 0.3, 0.4\}$; (right) final relative squared error $\|\mathbf{x}_{10000} - x^\star_{\rm clean}\|^2/\|\mathbf{x}_0 - x^\star_{\rm clean}\|^2$ for QkL-SGD with $q \in (0,1]$ (log scale). }\label{fig:figure-06}
\end{figure}

\subsection{Regularized Hinge Loss}\label{subsec:hinge_loss}

Next, we turn to the $\ell_2$-regularized hinge loss objective, which is the standard loss used in support vector machines. This loss is nonsmooth so Assumption~\ref{assump:sgd}.\ref{ass:component_smoothness} fails, unlike the other models considered.  Here, like with the noisy polynomial regression and logistic regression models, Assumption~\ref{assump:sgd}.\ref{ass:good_set_regularity} fails.
Like in Subsection~\ref{subsec:logistic_regression}, the experiments are designed to assess robustness to outliers, which are introduced by randomly flipping (corrupting) a fraction $\beta$ of the labels.

The $\ell_2$- regularized hinge loss model is typically employed as a supervised binary classification model where data points $\ve{d}_i \in \mathbb{R}^n$ with labels $l_i \in \{-1, +1\}$ are available.  The regularized hinge loss component objective for parameter vector $\ve{x} \in \mathbb{R}^n$ associated to a single data point and corresponding label $(\ve{d}_i, l_i)$ is
\begin{equation}
f_i(\ve{x}) = \max\{0, 1 - l_i \ve{d}_i^\top \ve{x}\} + \frac{\lambda}{m}\|\ve{x}\|_2^2, \label{eq:hingeloss_component}
\end{equation}

and the regularized hinge loss over all data points is
\begin{equation}
F(\ve{x}) = \frac1m \sum_{i=1}^m f_i(\ve{x}). \label{eq:hinge_loss}
\end{equation}

Here, the data generation is the same as in Subsection~\ref{subsec:logistic_regression}, with the exception that the labels $\ve{l} \in \{-1, +1\}^m$.  The positive class data points have corresponding binary label $l_i^{\rm clean} = +1$ and the negative class data points have corresponding binary label $l_i^{\rm clean} = -1$. This generation of data and labels again ensures that the data is linearly separable before label corruption, which follows the same process as in Subsection~\ref{subsec:logistic_regression} with the exception that for $i \in O$, the observed label is $l_i = -l_i^{\rm clean}$.

In the bottom plots below, we again visualize this data in two dimensions, taking the last two components of the data $(\ve{d}_i)_{2:}$.  We additionally plot the decision boundary of the learned classification models with parameter vector $\ve{x} \in \mathbb{R}^3$, $\{\ve{y} \in \mathbb{R}^2 : x_1 + \ve{y}^\top \ve{x}_{2:} = 0\}$.

As in Subsection~\ref{subsec:logistic_regression}, the aim in each experiment is to learn a parameter vector $\ve{x} \in \mathbb{R}^3$ that correctly classifies the data points with uncorrupted labels.  In all experiments, SGD, QkL-SGD, and MkL-SGD were run with step size parameter $\eta = 0.01$ and regularization parameter $\lambda = 1$.  We again let $\ve{x}^\star_{\rm clean}$ represent the iterate $\ve{x}_{10000}$ learned by 10000 iterations of SGD on the linearly separable dataset before any labels are flipped.  We plot the boundary corresponding to this parameter vector, $$\mathcal B_{\rm clean}
=
\{\,\ve y\in \mathbb R^2 :
(\ve{x}^\star_{\rm clean})_1
+
\ve y^\top(\ve{x}^\star_{\rm clean})_{2:}=0
\,\}.$$  This boundary is labeled ``Clean" in the following plots.

\paragraph{Assumptions for regularized hinge loss}

The components of the regularized hinge loss~\eqref{eq:hingeloss_component} are convex and nonnegative.  However, unlike the other model settings, they are not smooth.  Due to the regularization, by the definition of strongly convex functions in terms of subgradients, for every nonempty subset $S \subseteq G$, we have that $F_S$ is $\lambda/m$-strongly convex.  Thus, Assumptions~\ref{assump:sgd}.\ref{ass:outlier_nonnegativity} and~\ref{assump:sgd}.\ref{ass:subset_strong_convexity} hold for this model, but Assumptions~\ref{assump:sgd}.\ref{ass:good_set_regularity} and~\ref{assump:sgd}.\ref{ass:component_smoothness} fail.

\subsubsection{Various outlier settings}

We now explore the behavior of QkL-SGD on the hinge loss~\eqref{eq:hinge_loss} with $q = 1 - \beta$ in a variety of outlier settings.

\paragraph{Few outliers from a single class (\texorpdfstring{$\beta = 0.01$, $q=0.99$, $k = m$}{beta = 0.01, q = 0.99, k = m})}

Our first experiment, with results presented in Figure~\ref{fig:figure-07a}, explores the setting where 1\% of the labels, all from the same class, are corrupted (flipped). We compare QkL-SGD with $q = 0.99$ and $k = m$ to SGD and MkL-SGD with $k = m$.  We note that although QkL-SGD only slightly outperforms SGD in terms of relative loss $F_G(\ve{x}_t)/F_G(\ve{x}_0)$ (left plot), it yields a more significantly lower relative error $\|\ve{x}_t - \ve{x}^\star_{\rm clean}\|_2^2/\|\ve{x}_0 - \ve{x}^\star_{\rm clean}\|_2^2$ (center plot) and a cleaner decision boundary (right plot). MkL-SGD has increasing relative loss and stagnating relative error due to difficulties in separating the effects of the good and outlier sets.

\paragraph{Moderate outliers from both classes, (\texorpdfstring{$\beta = 0.05$, $q=0.95$, $k = m$}{beta = 0.05, q = 0.95, k = m})}

In our next experiment, presented in Figure~\ref{fig:figure-07b}, 5\% of the labels are flipped, selected uniformly from both classes. We see that QkL-SGD with $q=0.95$ and $k = m$ substantially outperforms SGD and MkL-SGD with $k = m$ in terms of both relative loss (left plot) and relative error (center plot) convergence.  In this case, the decision boundaries learned by SGD and QkL-SGD (right plot) are not significantly visually different, due to possible larger deviations between the bias terms in the learned parameter vectors which are more difficult to differentiate in this visualization.  Again, MkL-SGD has increasing relative loss and stagnating relative error.

\paragraph{Moderate outliers from a single class (\texorpdfstring{$\beta = 0.05$, $q=0.95$, $k = m$}{beta = 0.05, q = 0.95, k = m})}

We now consider a more structured form of label corruption where corruptions occur in only the positive class.  In the experiment presented in Figure~\ref{fig:figure-07c}, we corrupt 5\% of the labels, but only from the positive class. QkL-SGD with $q = 0.95$ and $k = m$ again demonstrates robustness, yielding lower relative loss (left plot) and smaller relative error (center plot) than SGD and MkL-SGD.  In the right plot, the QkL-SGD decision boundary is not visibly significantly better than that of SGD, again likely due to differences in the bias term. Again, MkL-SGD has increasing relative loss and stagnating relative error.

\begin{figure}
\begin{subfigure}{\textwidth}
    \centering
    \includegraphics[width=0.3\textwidth]{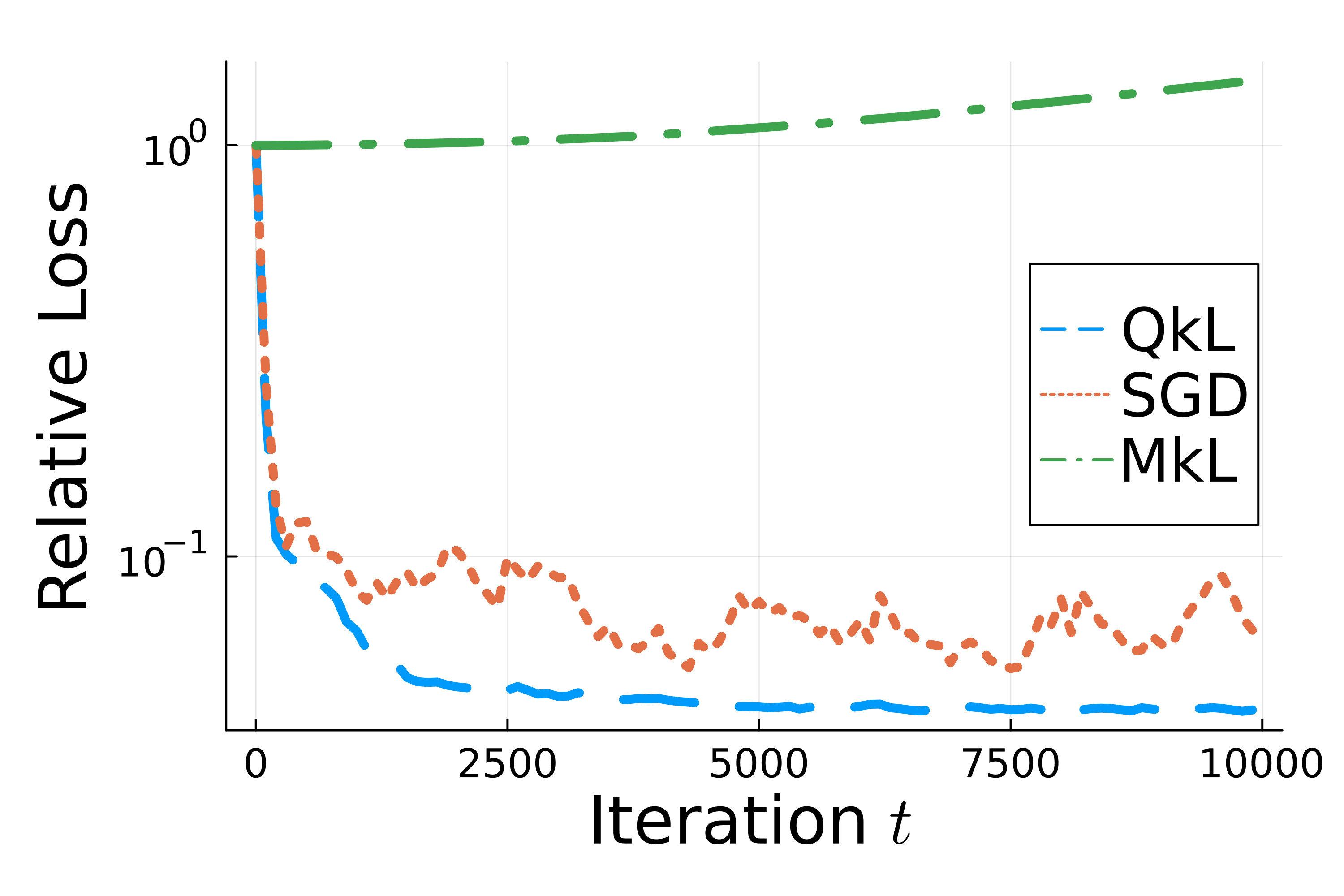}
\quad\includegraphics[width=0.3\textwidth]{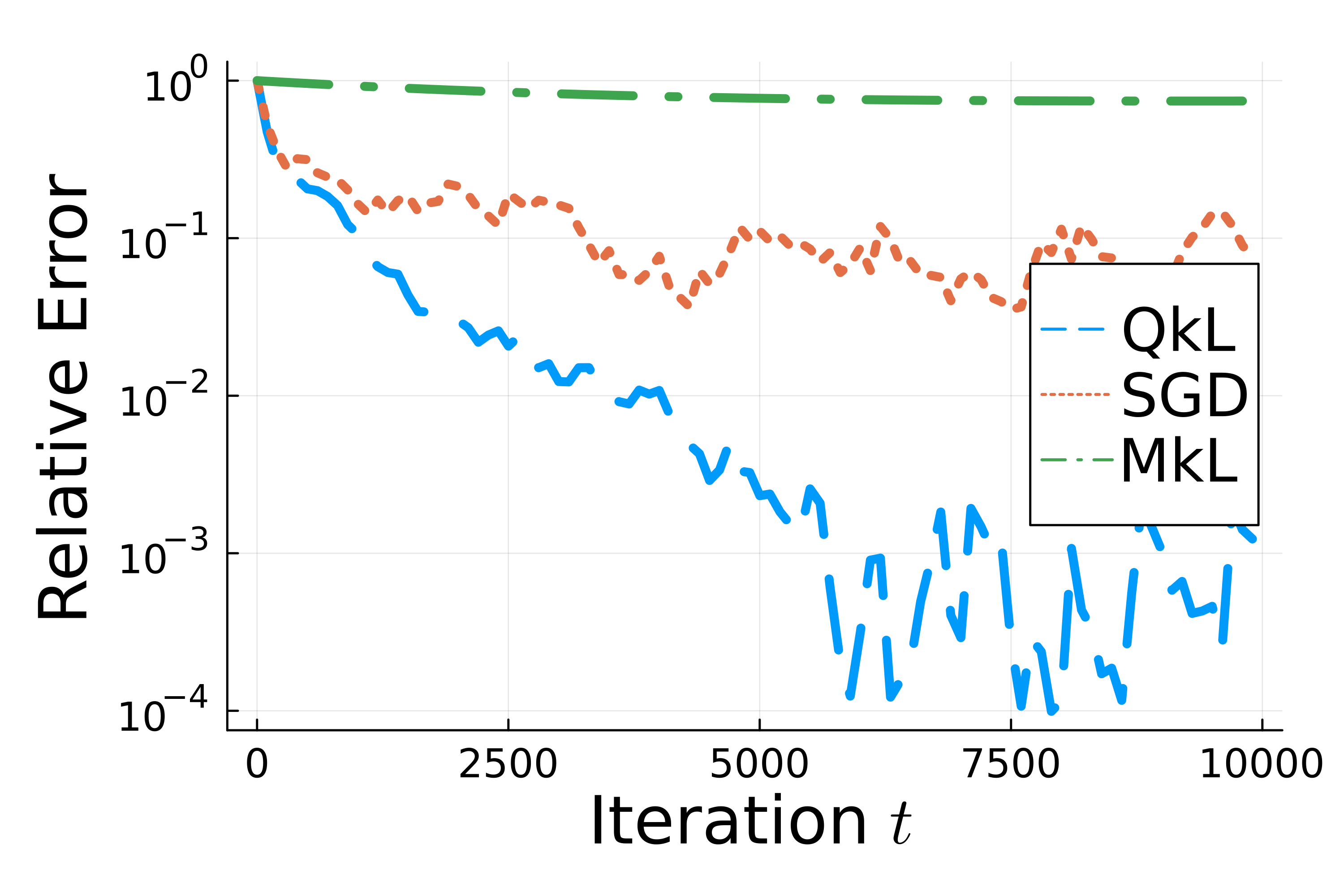}\hfill\includegraphics[width=0.32\textwidth]{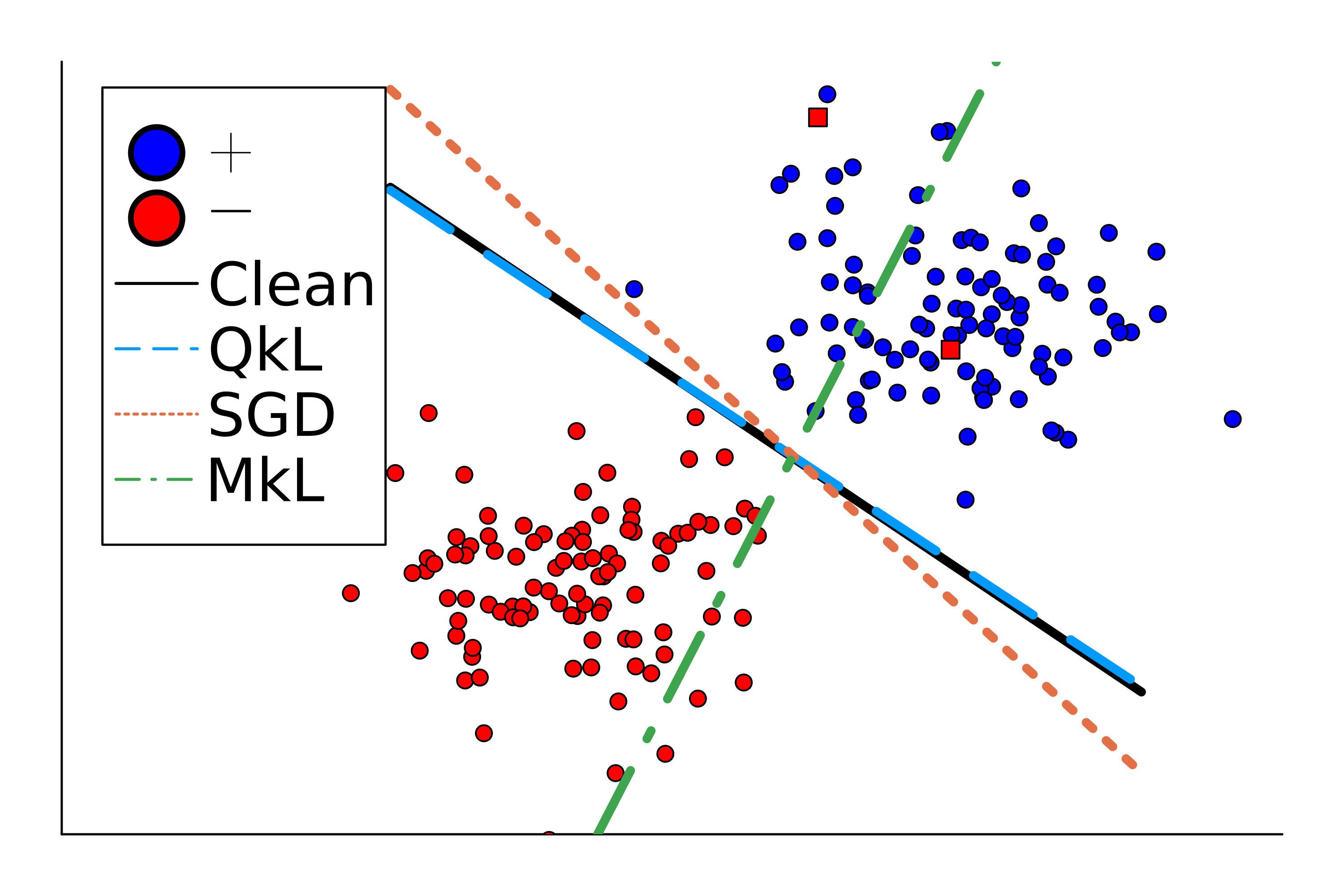}
    \caption{$\beta=0.01$ label corruption, with outliers sampled from the positive class.}
    \label{fig:figure-07a}
\end{subfigure}

\begin{subfigure}{\textwidth}
    \centering
    \includegraphics[width=0.3\textwidth]{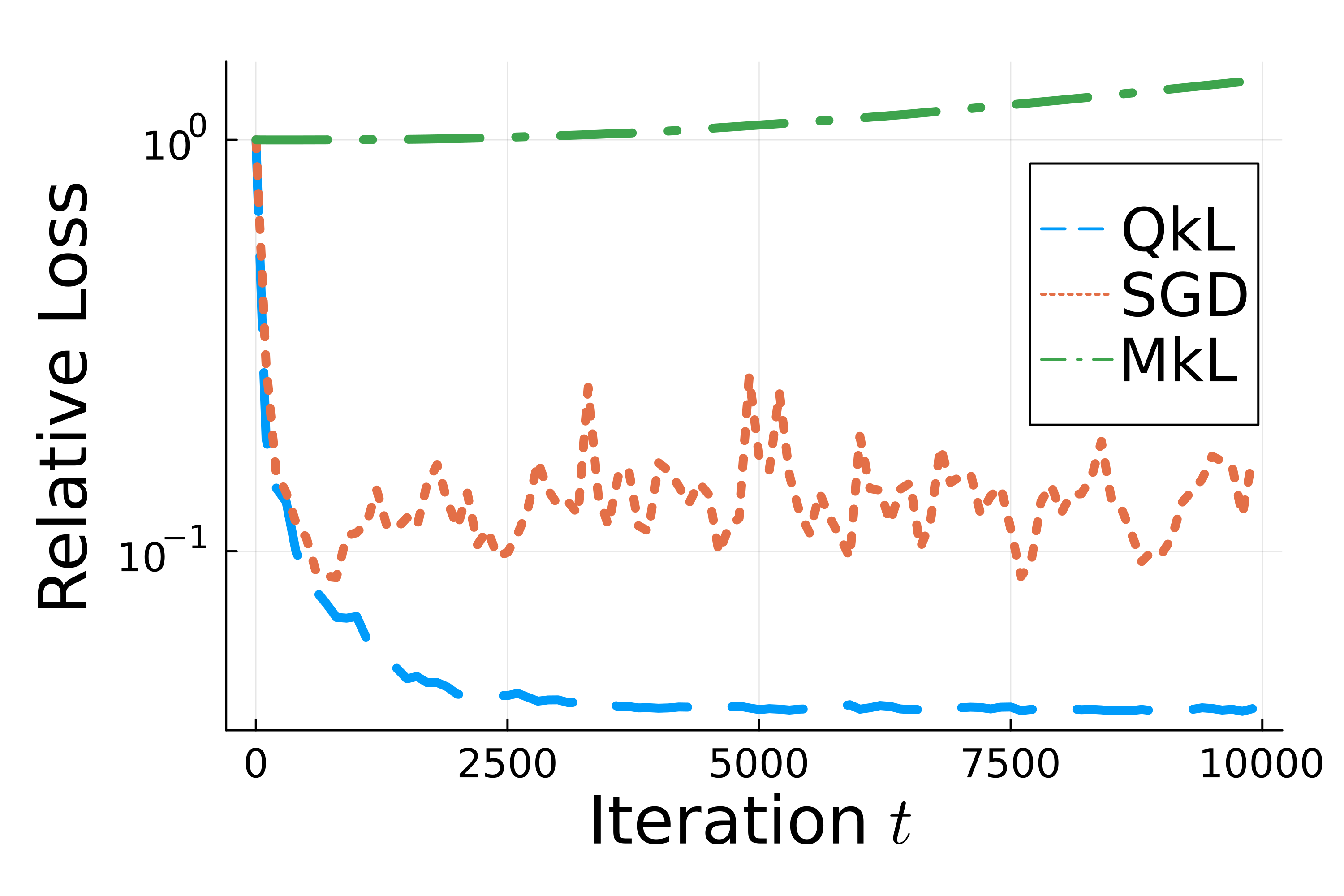}
\quad\includegraphics[width=0.3\textwidth]{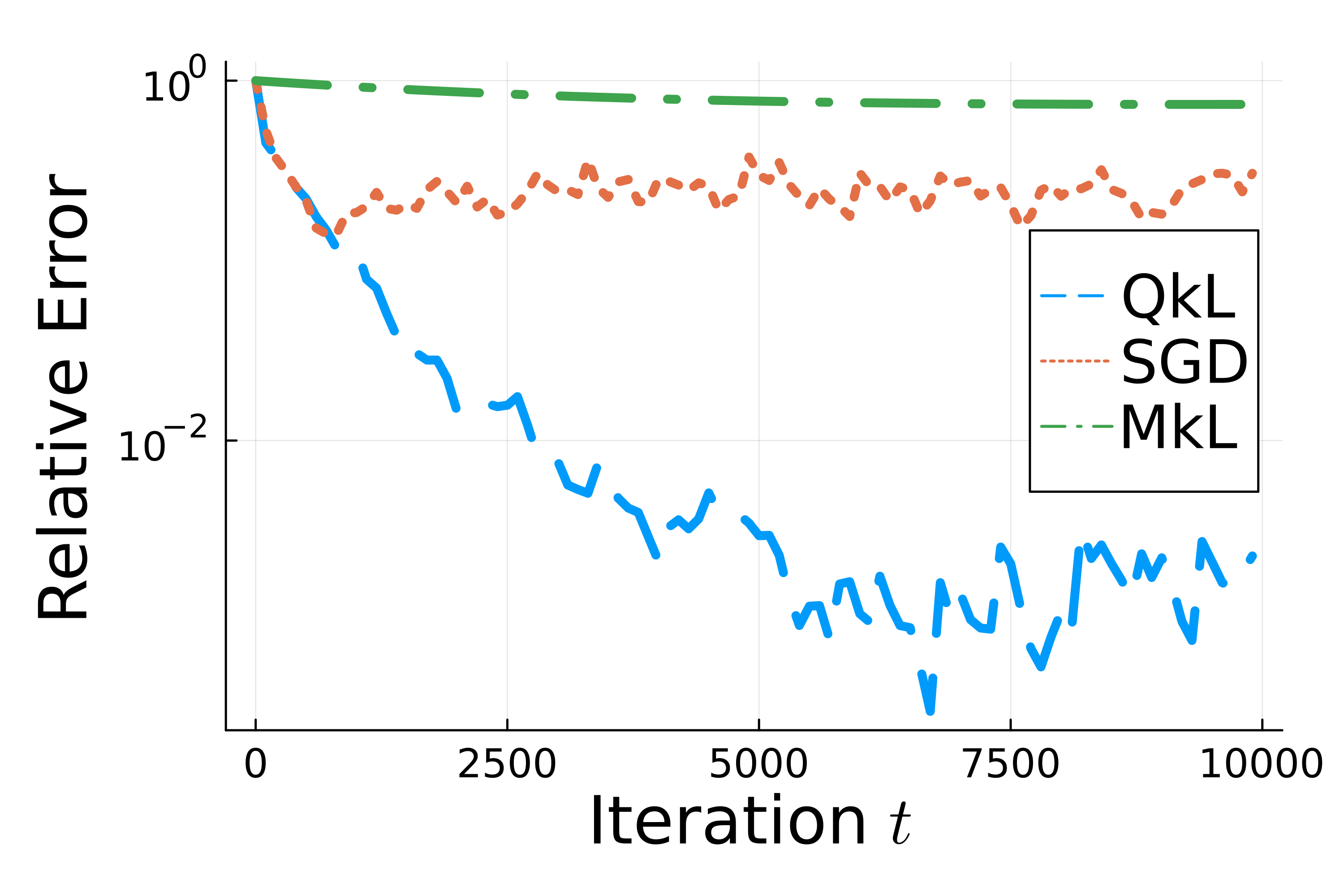}
\hfill\includegraphics[width=0.32\textwidth]{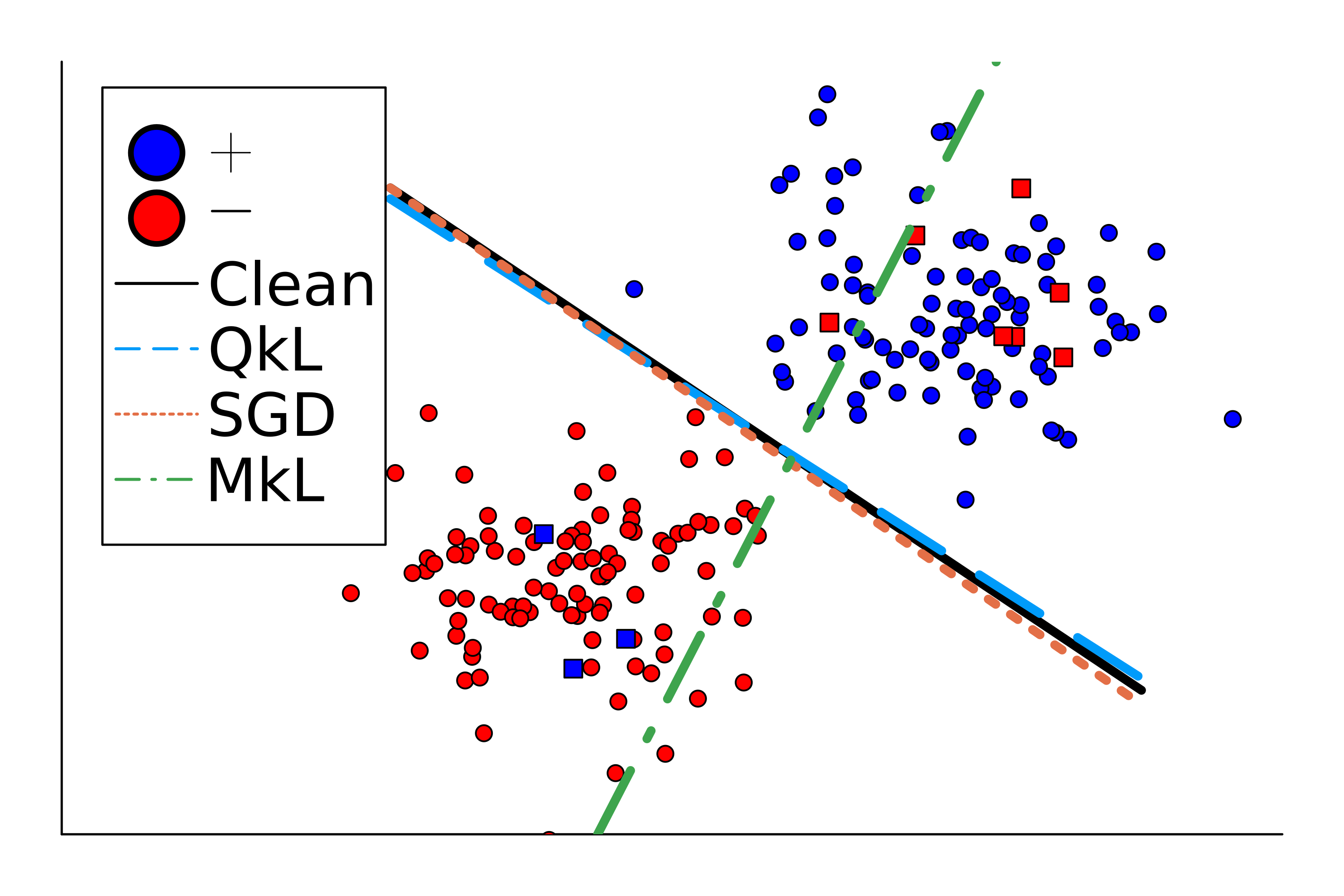}
    \caption{$\beta=0.05$ label corruption, with points sampled from both classes.}
    \label{fig:figure-07b}
\end{subfigure}

\begin{subfigure}{\textwidth}
    \centering
    \includegraphics[width=0.3\textwidth]{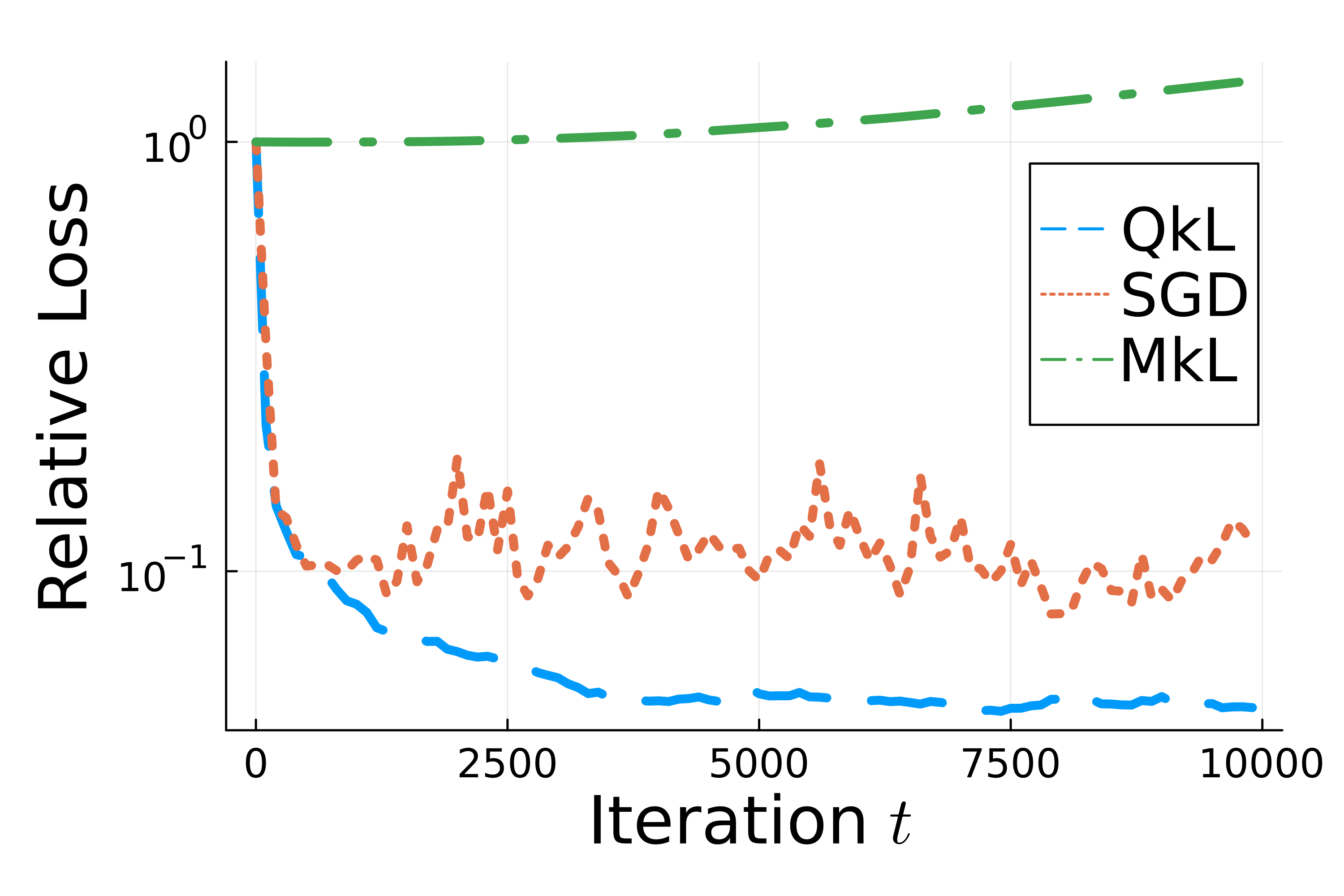}
\quad\includegraphics[width=0.3\textwidth]{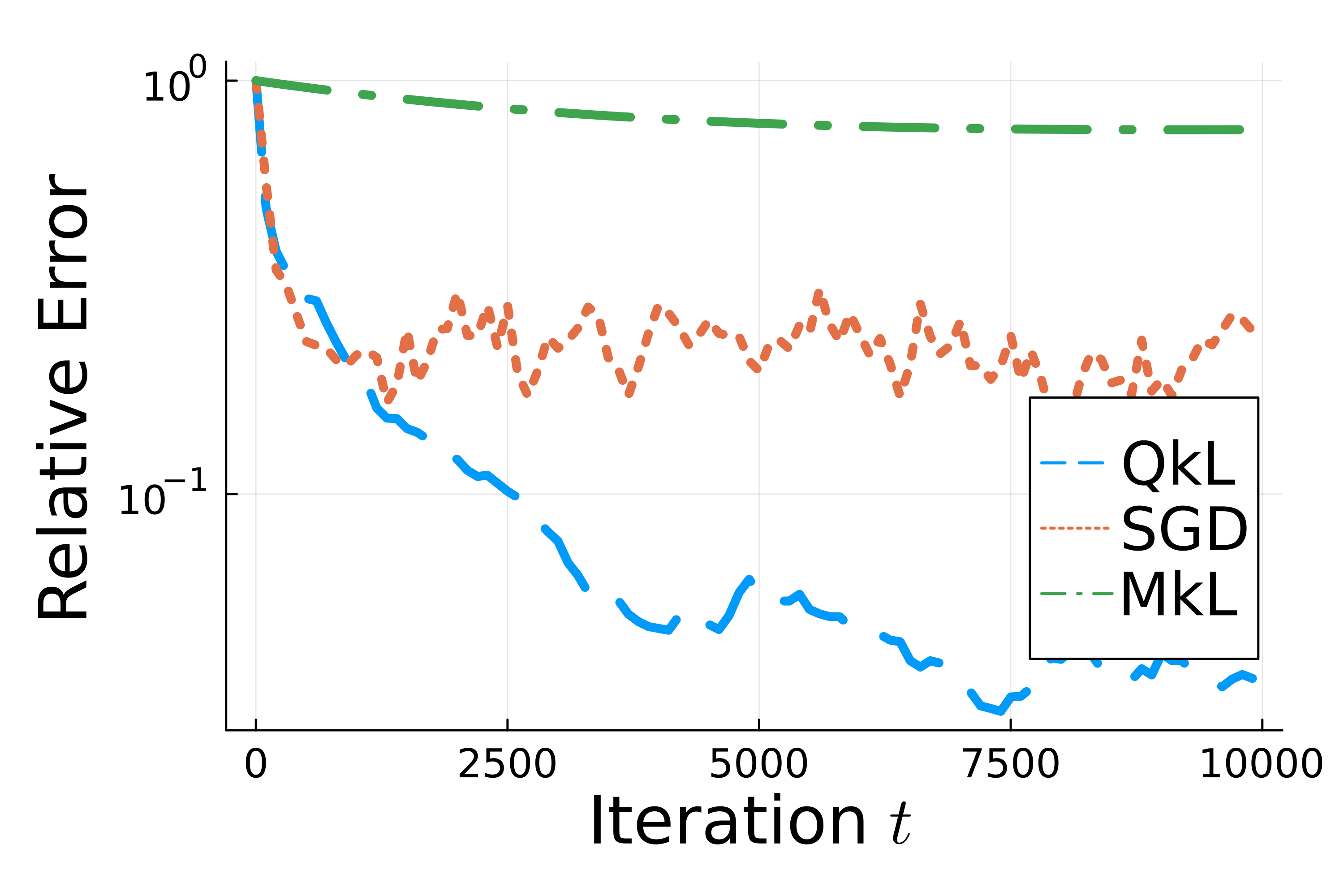} \hfill\includegraphics[width=0.32\textwidth]{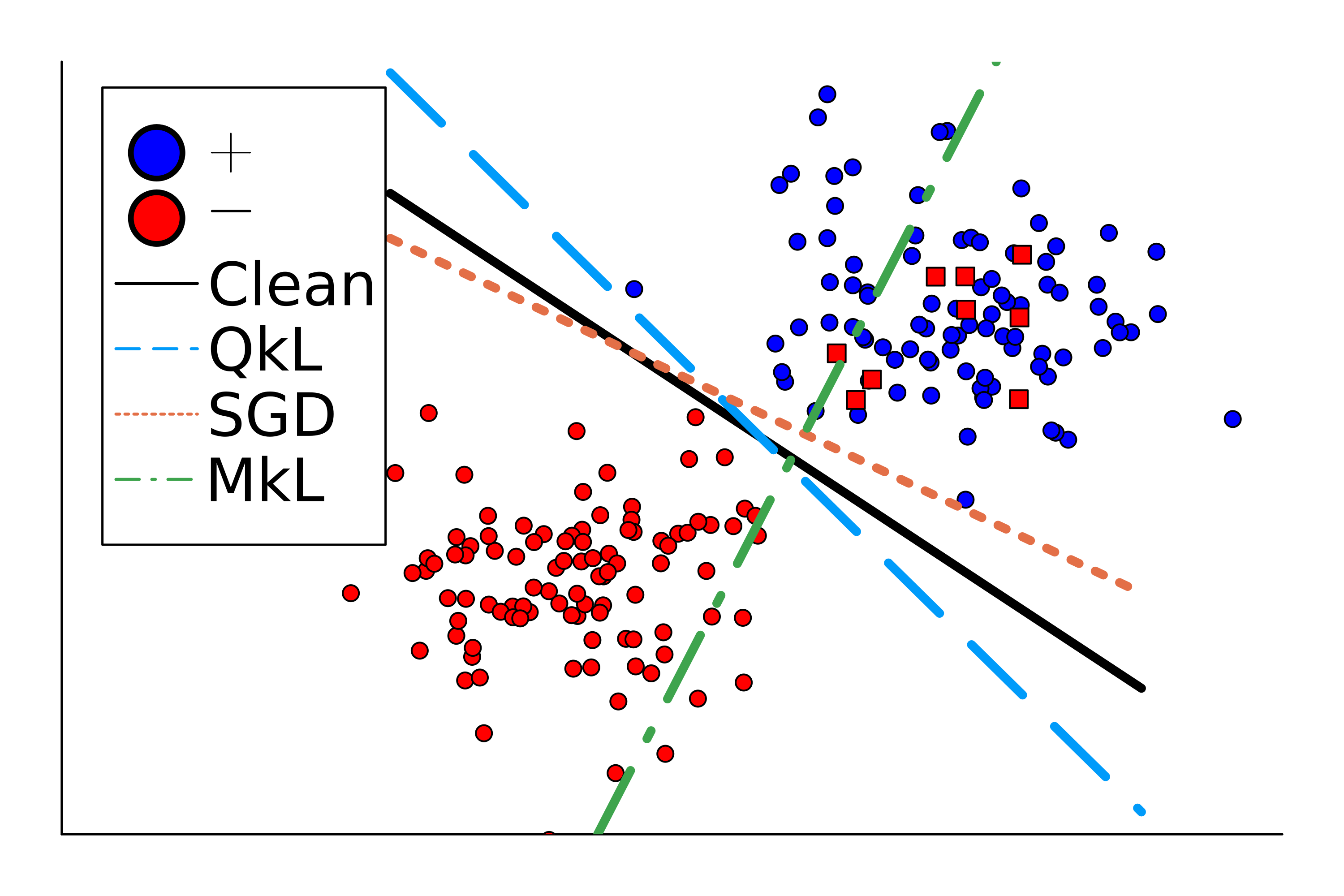}
    \caption{$\beta=0.05$ label corruption, with points sampled from positive class.}
    \label{fig:figure-07c}
\end{subfigure}

   \caption{QkL-SGD with $(k,q)=(m,1-\beta)$ and MkL-SGD with $k=m$ applied to regularized hinge loss~\eqref{eq:hinge_loss}. (Left) Relative good-set loss $F_G(\ve x_t)/F_G(\ve x_0)$ per iteration. (Center) Relative squared parameter error $\|\ve x_t-\ve x^\star_{\rm clean}\|_2^2/\|\ve x_0-\ve x^\star_{\rm clean}\|_2^2$. (Right) Learned decision boundaries at the final iterate, together with the clean-data reference boundary $\mathcal B_{\rm clean}$. Uncorrupted data samples are visualized as circles; corrupted data samples are visualized as squares.}
\end{figure}

\subsubsection{Varying $q$ with fixed corruption fraction $\beta$}

Figure~\ref{fig:figure-08}  explores how different quantile values $q$ affect QkL-SGD under a fixed corruption fraction $\beta$.  We apply QkL-SGD with $k = m$ to the regularized hinge loss~\eqref{eq:hinge_loss} applied to the data model described above where $\beta \in \{0.1, 0.2, 0.3, 0.4\}$ fraction of data sampled uniformly across both classes are corrupted, and plot the final relative loss (left plot) and final relative error (right plot) after 10000 iterations of QkL-SGD with $q \in \{0.05, 0.1, \cdots, 1.0\}.$  Relative loss and relative errors are averaged over 100 independent trials of 10000 iterations of QkL-SGD.  Again, in this case, we see that the quantile $q$ producing the lowest relative error and loss values is approximately $1-\beta$.  However, in this case, we see that QkL-SGD produces lower final relative loss and error when the corruption fraction $\beta$ is smaller.

\begin{figure}
    \includegraphics[width=0.45\textwidth]{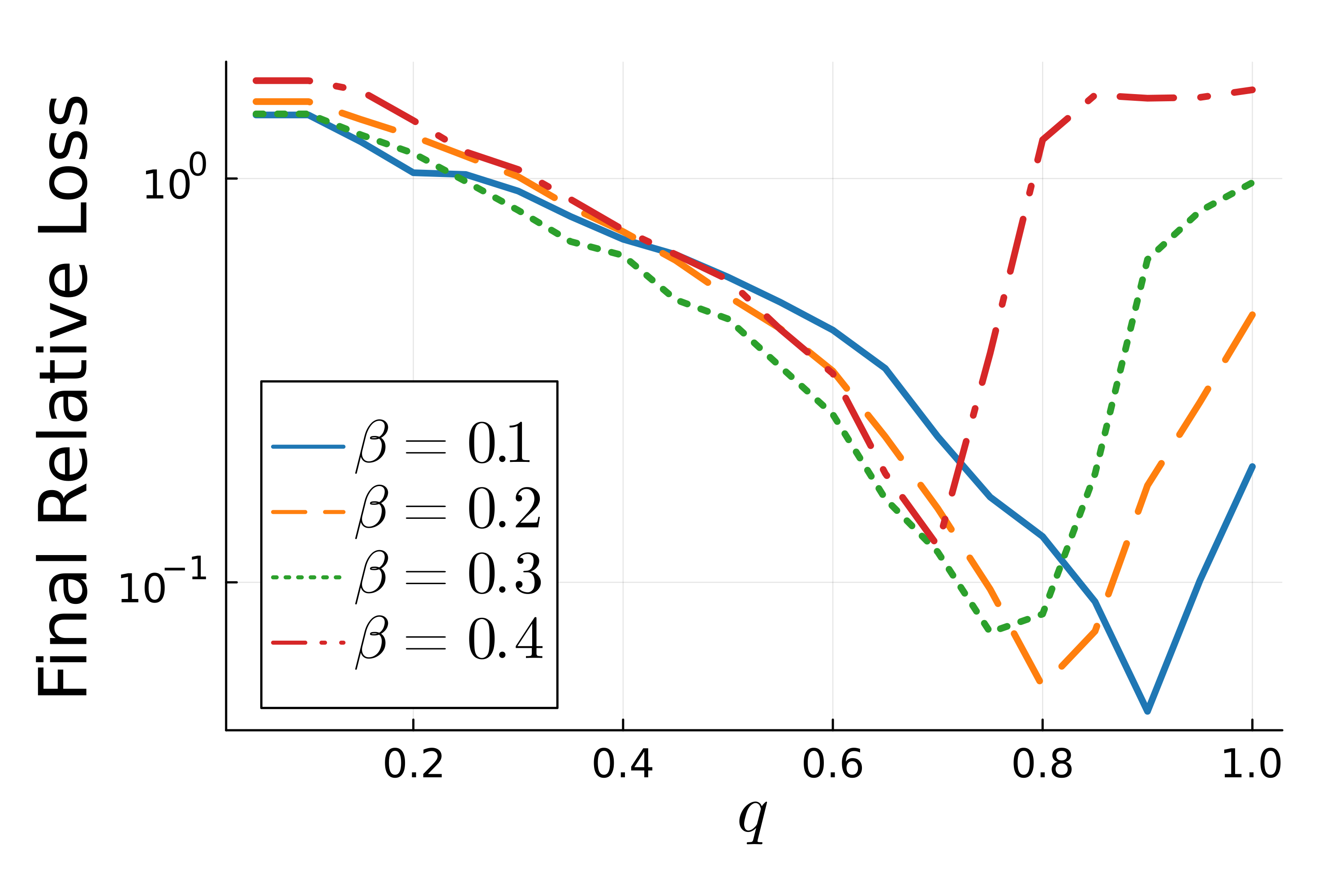}\hfill\includegraphics[width=0.45\textwidth]{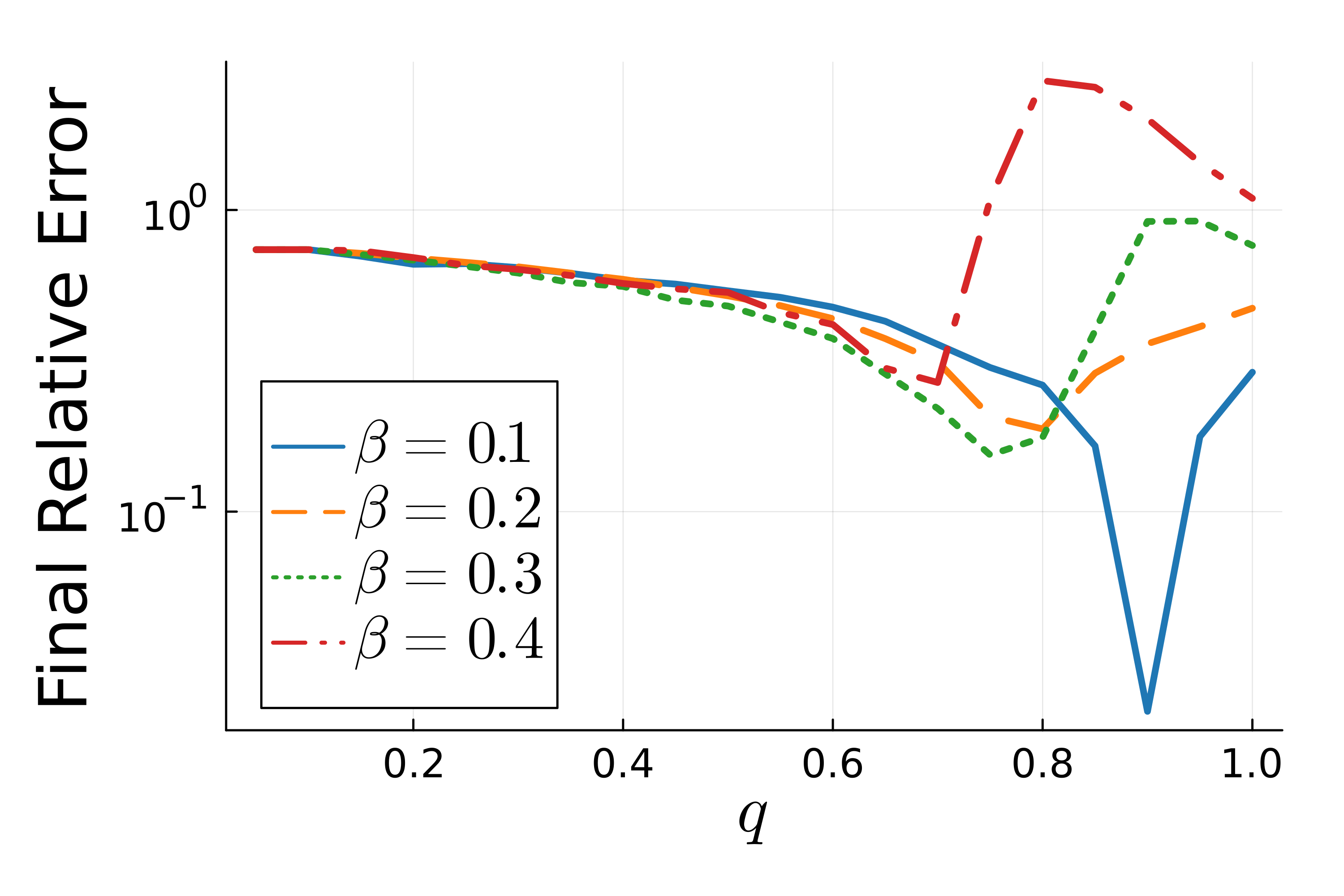}
    \caption{(Left) Final relative loss $F_G(\mathbf{x}_{10000})/F_G(\mathbf{x}_0)$ for QkL-SGD with $q \in (0,1]$ applied to the regularized hinge loss~\eqref{eq:hinge_loss} with fraction of corruption $\beta \in \{0.1, 0.2, 0.3, 0.4\}$; (right) final relative squared error $\|\mathbf{x}_{10000} - \ve{x}^\star_{\rm clean}\|^2/\|\mathbf{x}_0 - \ve{x}^\star_{\rm clean}\|^2$ for QkL-SGD with $q \in (0,1]$ (log scale). }\label{fig:figure-08}
\end{figure}

\subsection{Varying Sample Size $k$}\label{subsec:vary_k}

We now explore the effect of the sample size $k$ on the behavior of QkL-SGD and MkL-SGD.  We first consider the case where $k$ is a fraction of $m$, denoted $\alpha$ in Subsection~\ref{subsec:sample_fraction}.  We then compare the convergence of QkL-SGD where $k=m$ and MkL-SGD where $k$ is a small constant value \emph{as a function of the total number of loss evaluations} in Subsection~\ref{subsec:small_sample}.

\subsubsection{Sample Fraction \texorpdfstring{$\alpha$}{alpha}}\label{subsec:sample_fraction}

In this section, we explore how the sample size parameter $\alpha$ affects the convergence of QkL-SGD and MkL-SGD. Figure~\ref{fig:figure-09a} compares the relative loss and relative error of QkL-SGD and MkL-SGD with sampling rates $\alpha \in \{0.1, 0.5, 0.9\}$ on the noiseless polynomial regression problem~\eqref{eq:poly_regression}. We generate data as described in Subsection~\ref{subsec:noiseless_poly_regression}; starting from the uncorrupted dataset, we corrupt 5\% of the target values $y_i$ using additive noise drawn from $\mathcal{N}(0,100)$.  For this simple problem, we see very little difference in the behavior of MkL-SGD under different sampling rates.  QkL-SGD, on the other hand, suffers from increased variance in both loss and error as the sampling rate decreases; the method is likely occasionally sampling a small corrupted component.  Regardless, the relative error and relative loss of all QkL-SGD decrease significantly over all iterations.

In Figure~\ref{fig:figure-09b}, we plot the loss and error over iterations of QkL-SGD and MkL-SGD applied to the polynomial regression loss~\eqref{eq:poly_regression} with sampling rate $\alpha \in \{0.1, 0.5, 0.9\}$.  We generate the ideal (uncorrupted) data as described in Section~\ref{subsec:poly_regression} and then sample 5\% of the data and corrupt the corresponding targets $y_i$ by large additive noise sampled from $\mathcal{N}(0,100)$.  In this experiment, we again note that MkL-SGD performs better with smaller sampling rate, while QkL-SGD again achieves smaller variance with larger sampling rate.

In the experiment presented in Figure~\ref{fig:figure-09c}, we generate the uncorrupted
data as in Section~\ref{subsec:logistic_regression}.  We then sample 5\% of the data points for corruption and their labels are flipped.  We implement QkL-SGD and MkL-SGD on the regularized logistic regression loss~\eqref{eq:logistic_regression} with sampling rate $\alpha \in \{0.1,0.5,0.9\}$ and compare the relative loss and relative error across iterations.  We note that MkL-SGD suffers when the sample size is large, as selecting the minimum loss components can actually cause the loss and error to increase.  On the other hand, QkL-SGD can suffer when the sample size is too small, although the improvement in both loss and error is much better than that of MkL-SGD.

In the experiment given in Figure~\ref{fig:figure-09d}, we generate the data as in Section~\ref{subsec:hinge_loss} and then sample 5\% of the data and flip the labels associated to these data points.  We compare the relative loss and relative error of QkL-SGD and MkL-SGD applied to the regularized hinge loss objective~\eqref{eq:hinge_loss} with sample rates $\alpha \in \{0.1, 0.5, 0.9\}$.  Again, MkL-SGD suffers when the sampling rate is too large, while QkL-SGD suffers when the sampling rate is too small, although QkL-SGD achieves significantly lower values of loss and error.

\begin{figure}
\begin{subfigure}{\textwidth}
    \centering
    \includegraphics[width=0.48\textwidth]{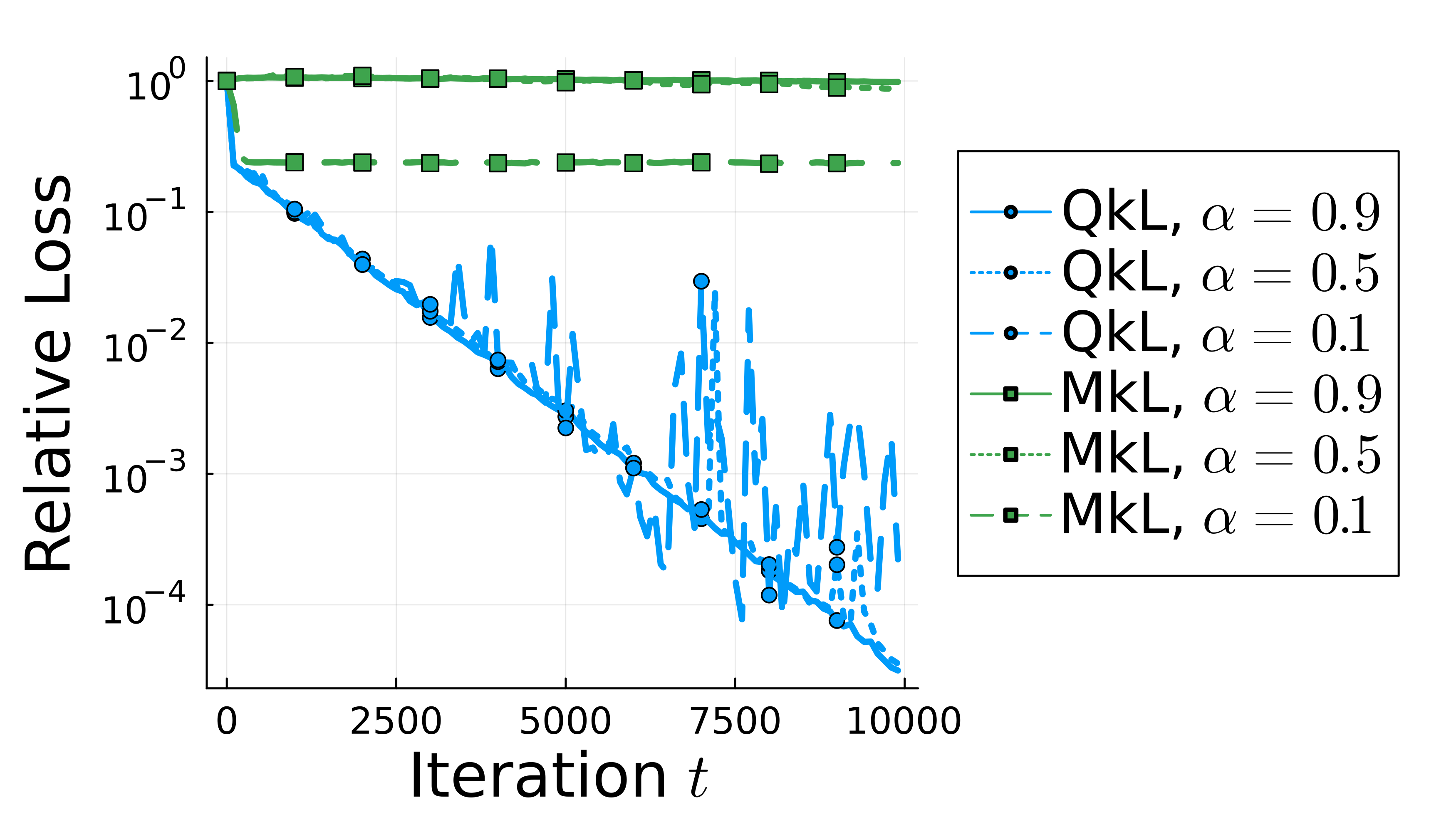}
    \hfill\includegraphics[width=0.48\textwidth]{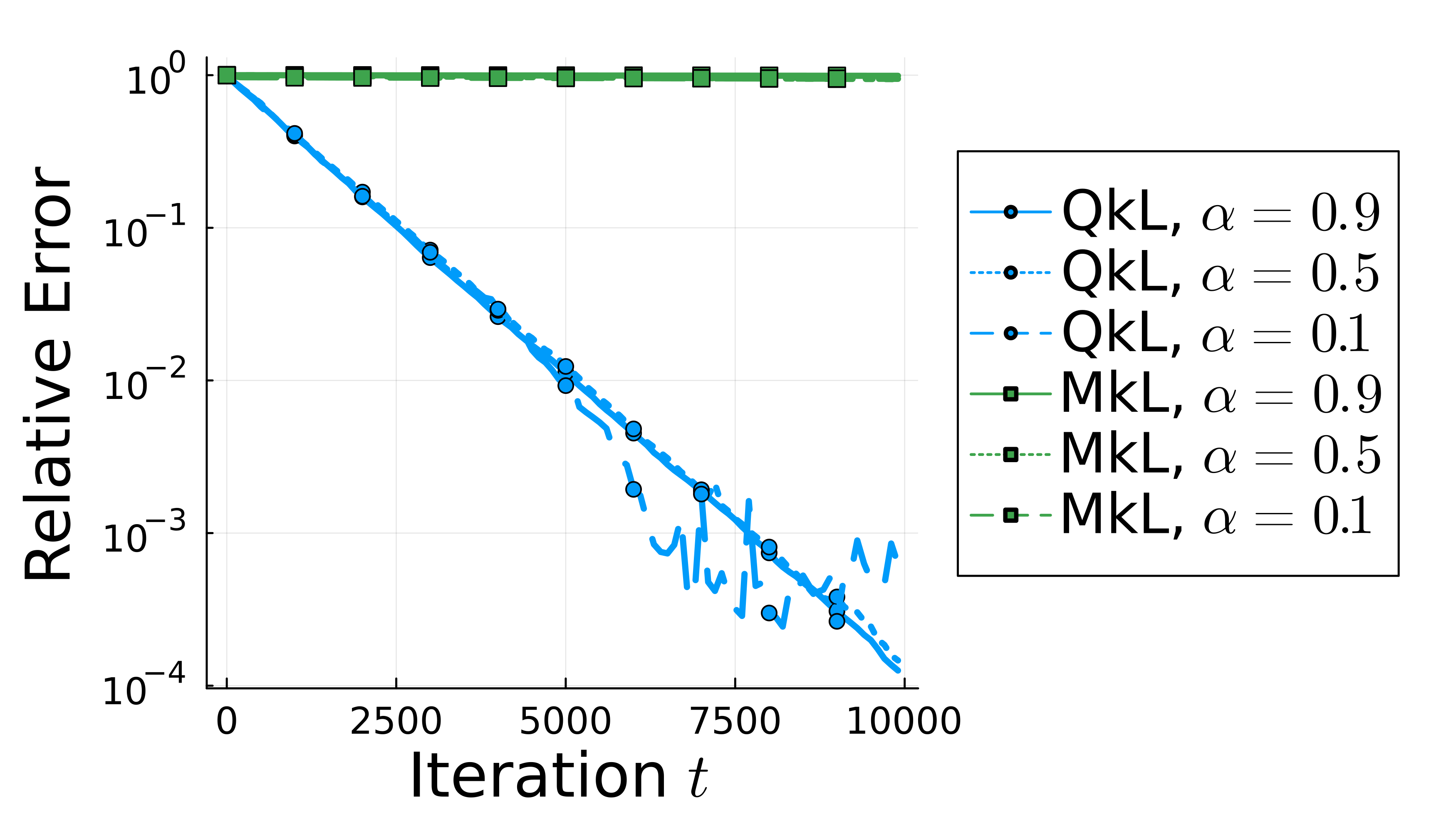}
    \caption{Polynomial regression objective~\eqref{eq:poly_regression} on noiseless uncorrupted data as in Subsection~\ref{subsec:noiseless_poly_regression}.}
\label{fig:figure-09a}
\end{subfigure}

\begin{subfigure}{\textwidth}
    \centering
    \includegraphics[width=0.48\textwidth]{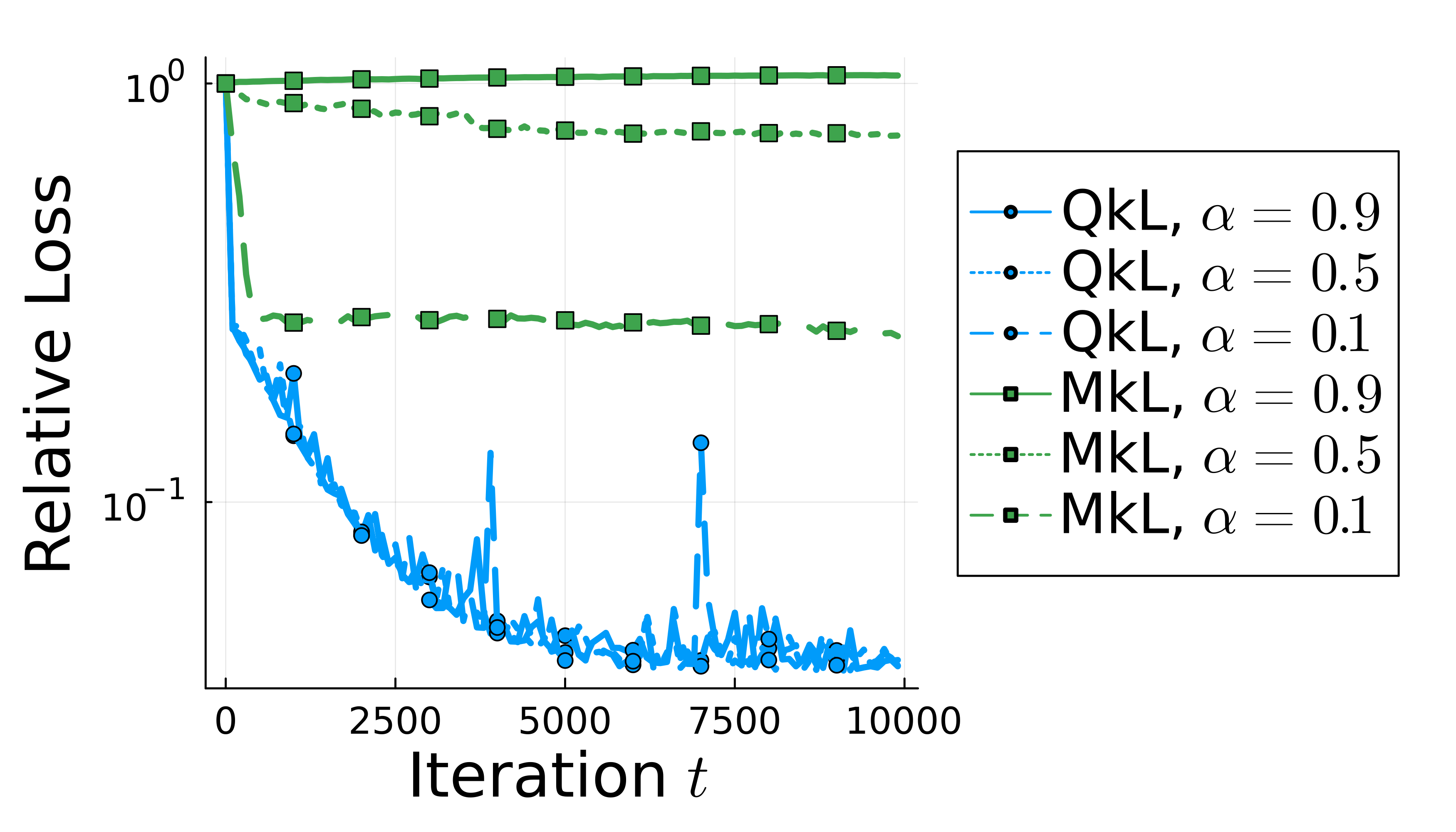}
    \hfill\includegraphics[width=0.48\textwidth]{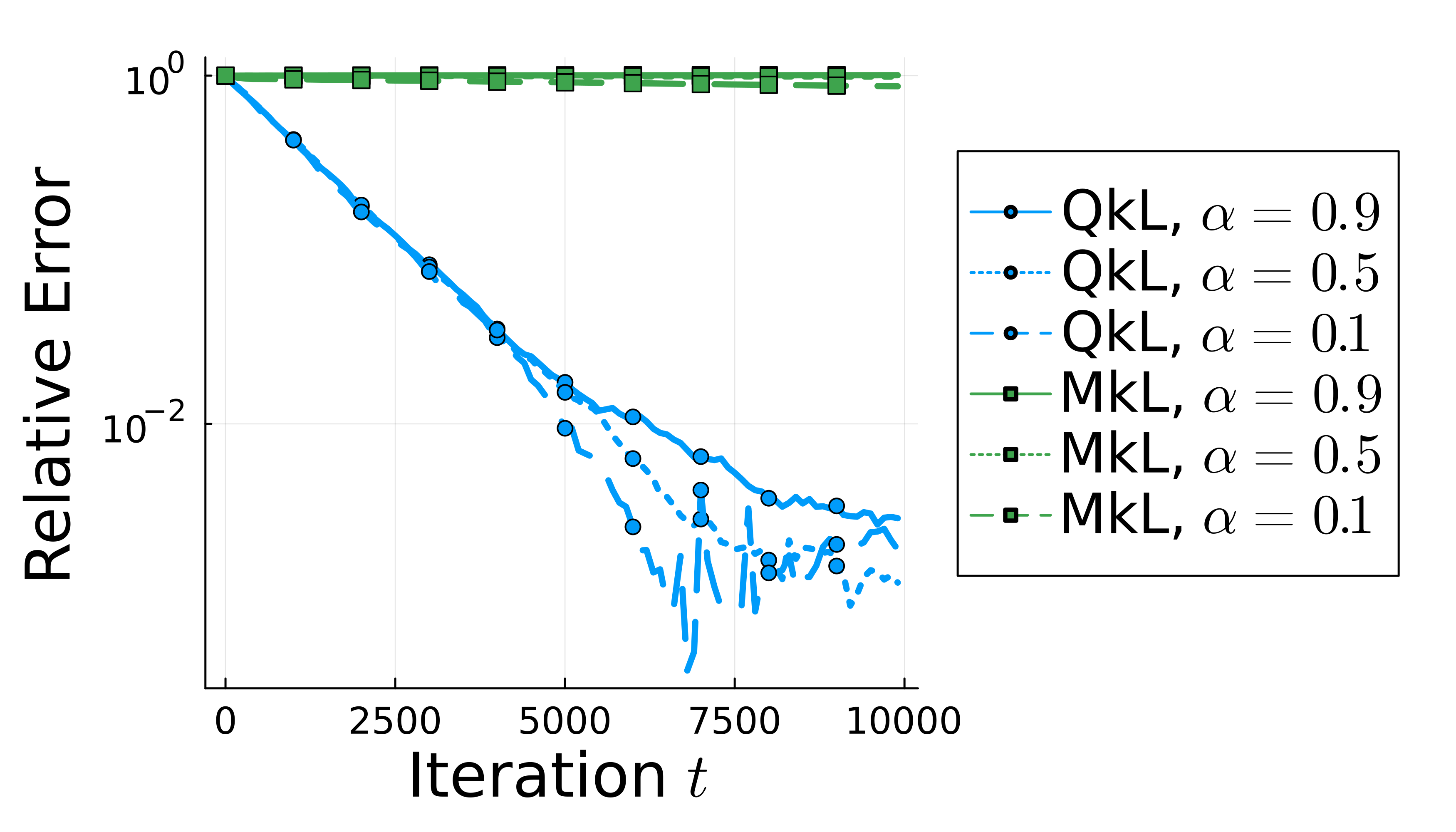}
    \caption{Polynomial regression objective~\eqref{eq:poly_regression} on noisy uncorrupted data as in Subsection~\ref{subsec:poly_regression}.}
    \label{fig:figure-09b}
\end{subfigure}

\begin{subfigure}{\textwidth}
    \centering
    \includegraphics[width=0.48\textwidth]{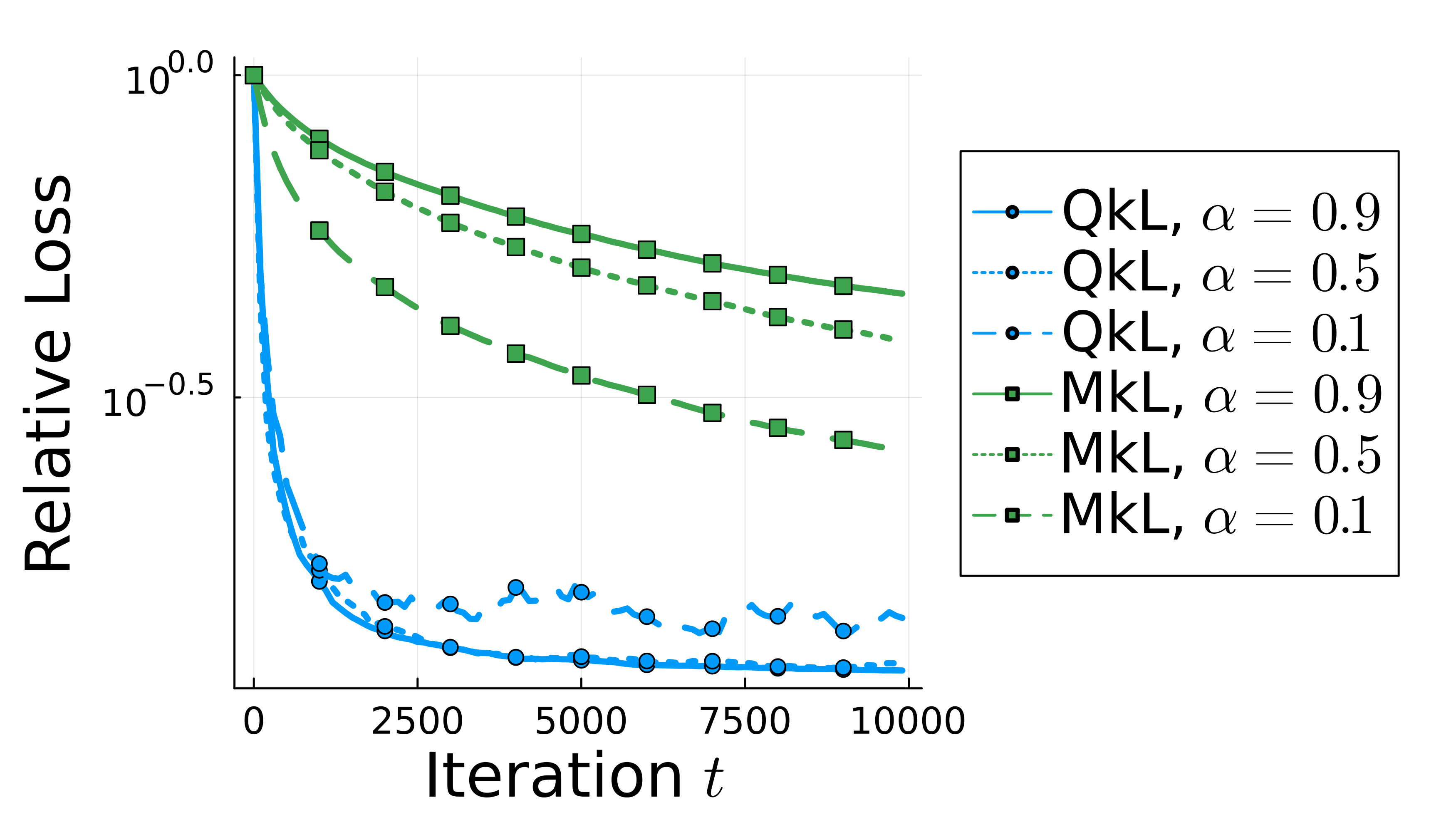}
    \hfill\includegraphics[width=0.48\textwidth]{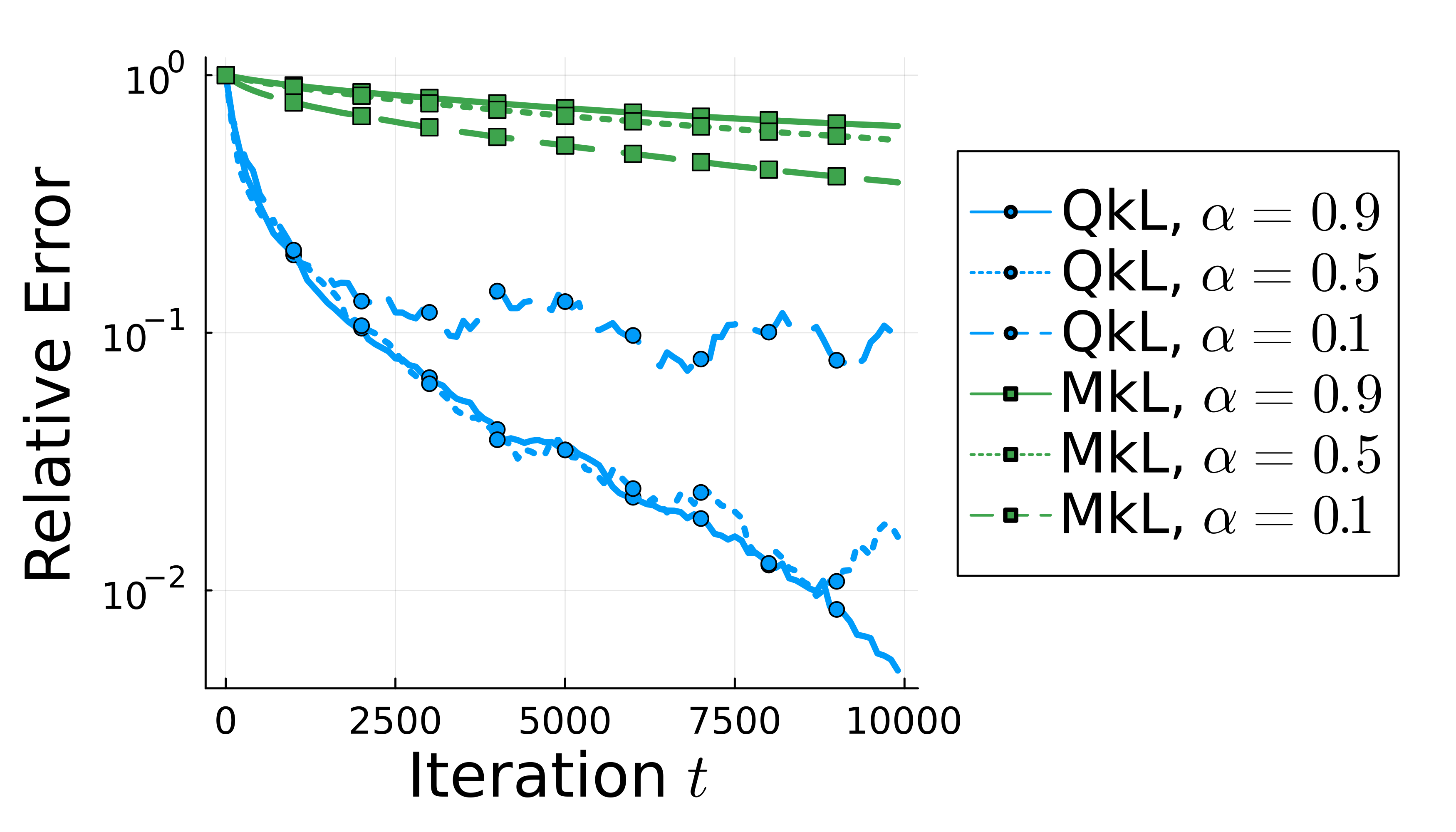}
    \caption{Regularized logistic regression objective~\eqref{eq:logistic_regression}.}
    \label{fig:figure-09c}
\end{subfigure}

\begin{subfigure}{\textwidth}
    \centering
    \includegraphics[width=0.48\textwidth]{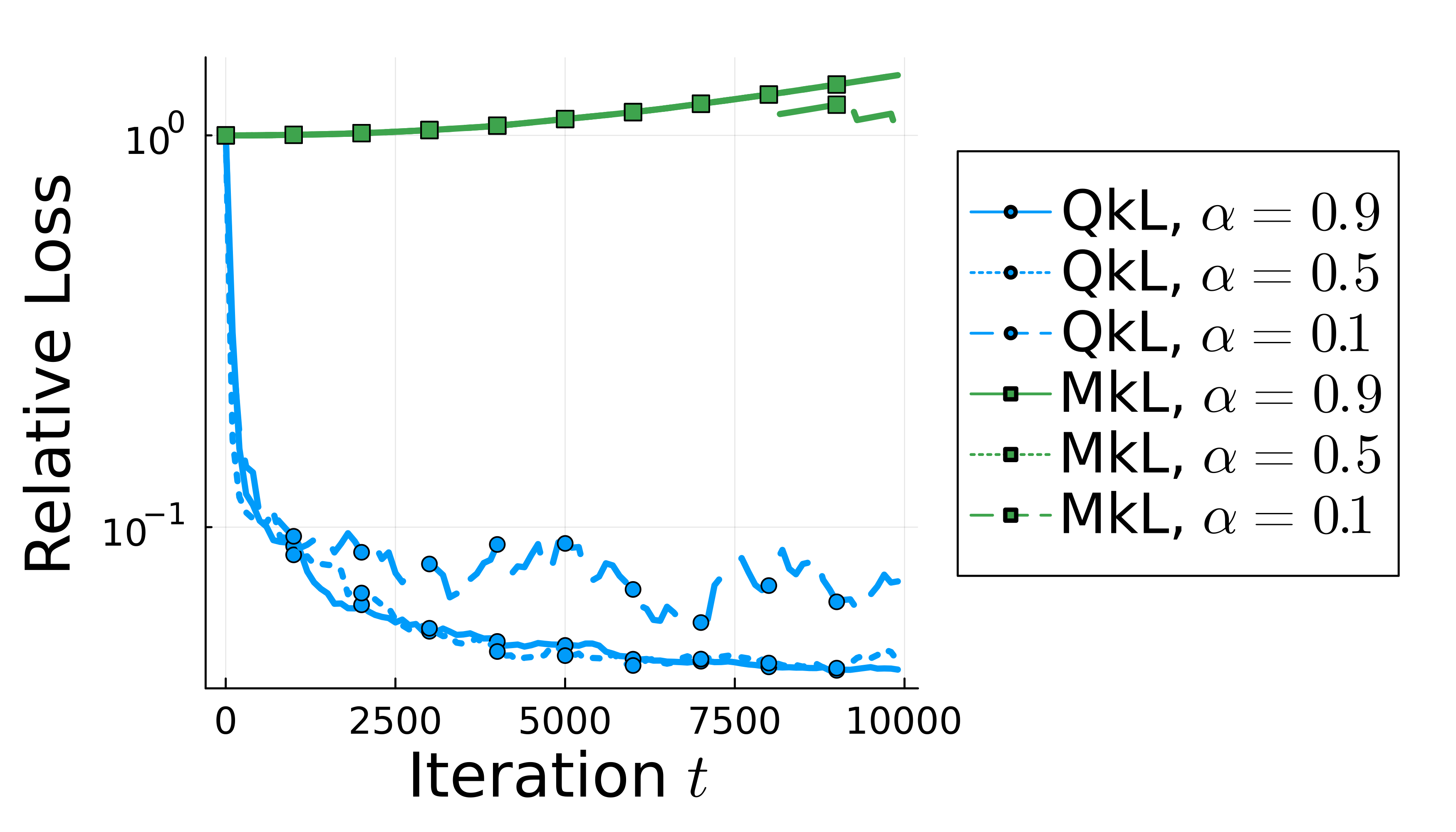}
    \hfill\includegraphics[width=0.48\textwidth]{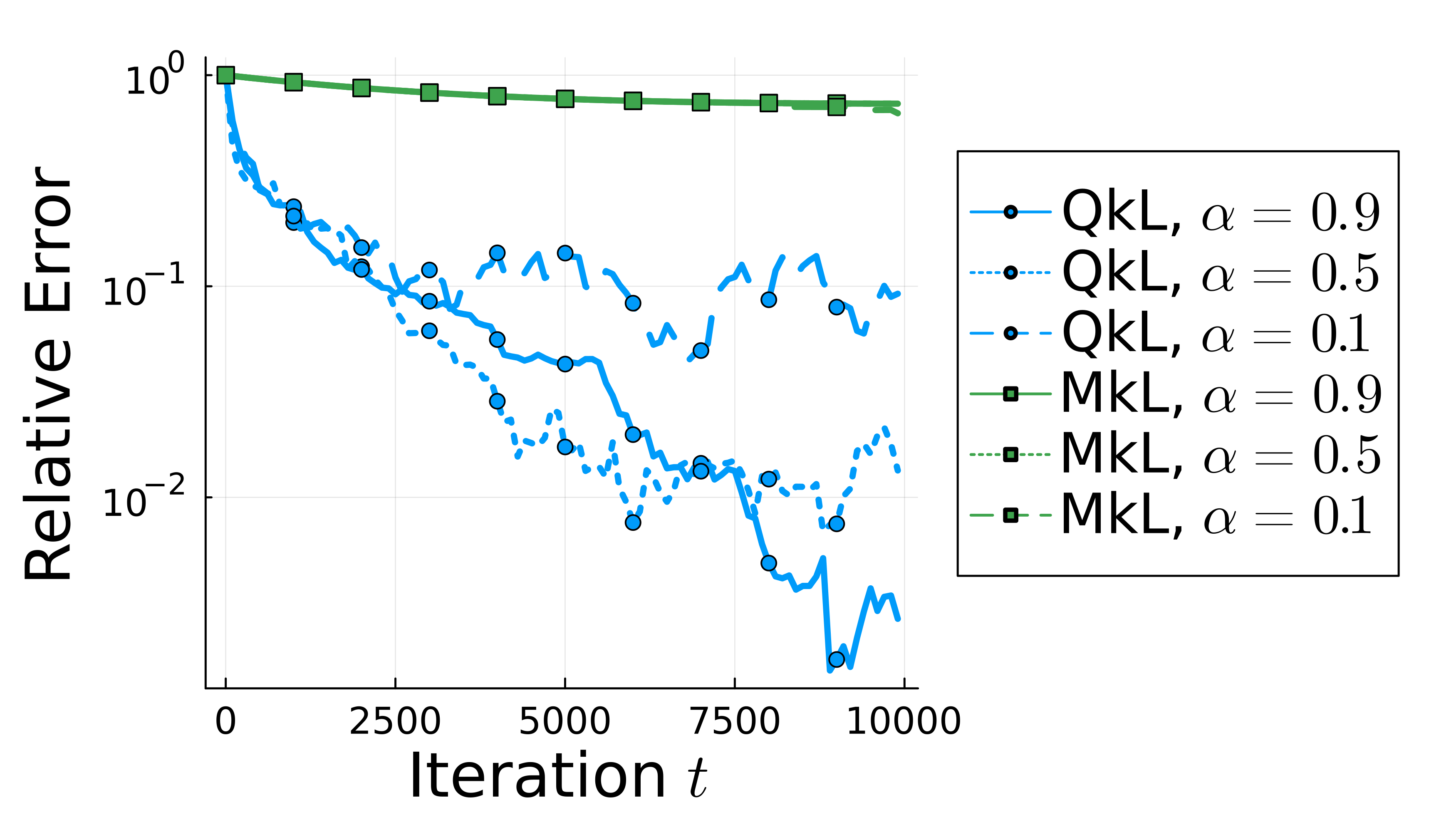}
    \caption{Regularized hinge loss objective~\eqref{eq:hinge_loss}.}
    \label{fig:figure-09d}
\end{subfigure}

\caption{(Left) Relative loss $F_G(\mathbf{x}_t)/F_G(\mathbf{x}_0)$ per iteration (log scale); (right) relative squared error $\|\mathbf{x}_t - \mathbf{x}^\star_{\rm clean}\|^2/\|\mathbf{x}_0 - \mathbf{x}^\star_{\rm clean}\|^2$ per iteration (log scale). In these experiments, $\beta = 0.05$, $q = 0.95$, and we vary $\alpha \in \{0.1, 0.5, 0.9\}$ for QkL-SGD and MkL-SGD.}
\end{figure}

In our final experiment presented in Figure~\ref{fig:figure-10}, we plot the final relative loss or error (after 10000 iterations) for $\alpha \in (0,1]$ for QkL-SGD and MkL-SGD applied to the polynomial regression, logistic regression, and hinge loss problems described above.  We use the same problem set-up described previously in this section for noiseless polynomial regression (top row), noisy polynomial regression (row two), regularized logistic regression (row three), and regularized hinge loss (bottom row).  The lines represent the average over 100 independent trials of 10000 iterations, and the shaded ribbon presents the spread of the maximum and minimum final relative losses and errors across the 100 trials.  We note that generally MkL-SGD performs best with smaller values of $\alpha$ in terms of both relative error and loss, while QkL-SGD performs best with a larger value of $\alpha$, although the best behavior is for $\alpha < 1$.

\begin{figure}[ht]
    \centering
    \begin{subfigure}[b]{\textwidth}\includegraphics[width=0.42\textwidth]{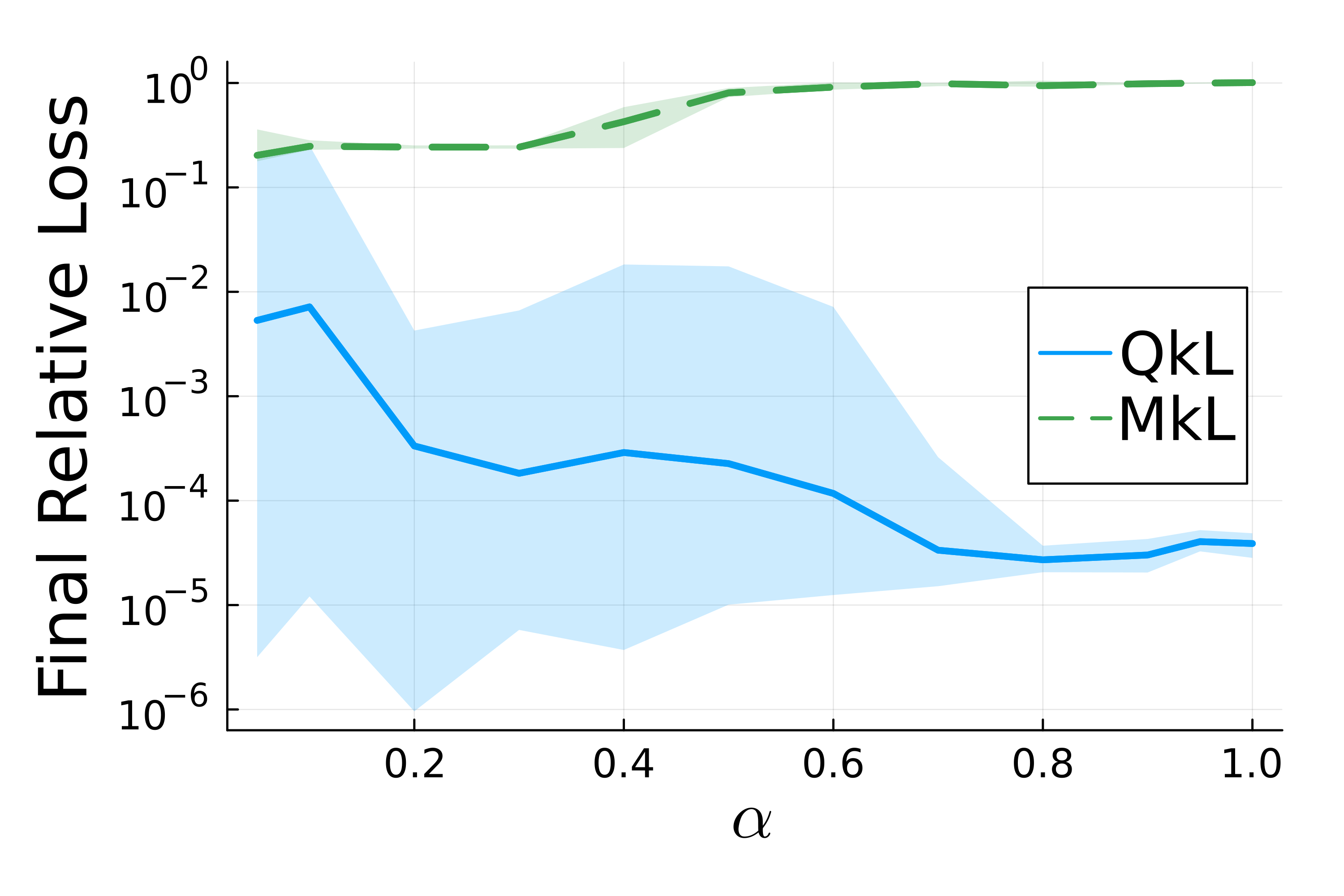}\hfil\includegraphics[width=0.42\textwidth]{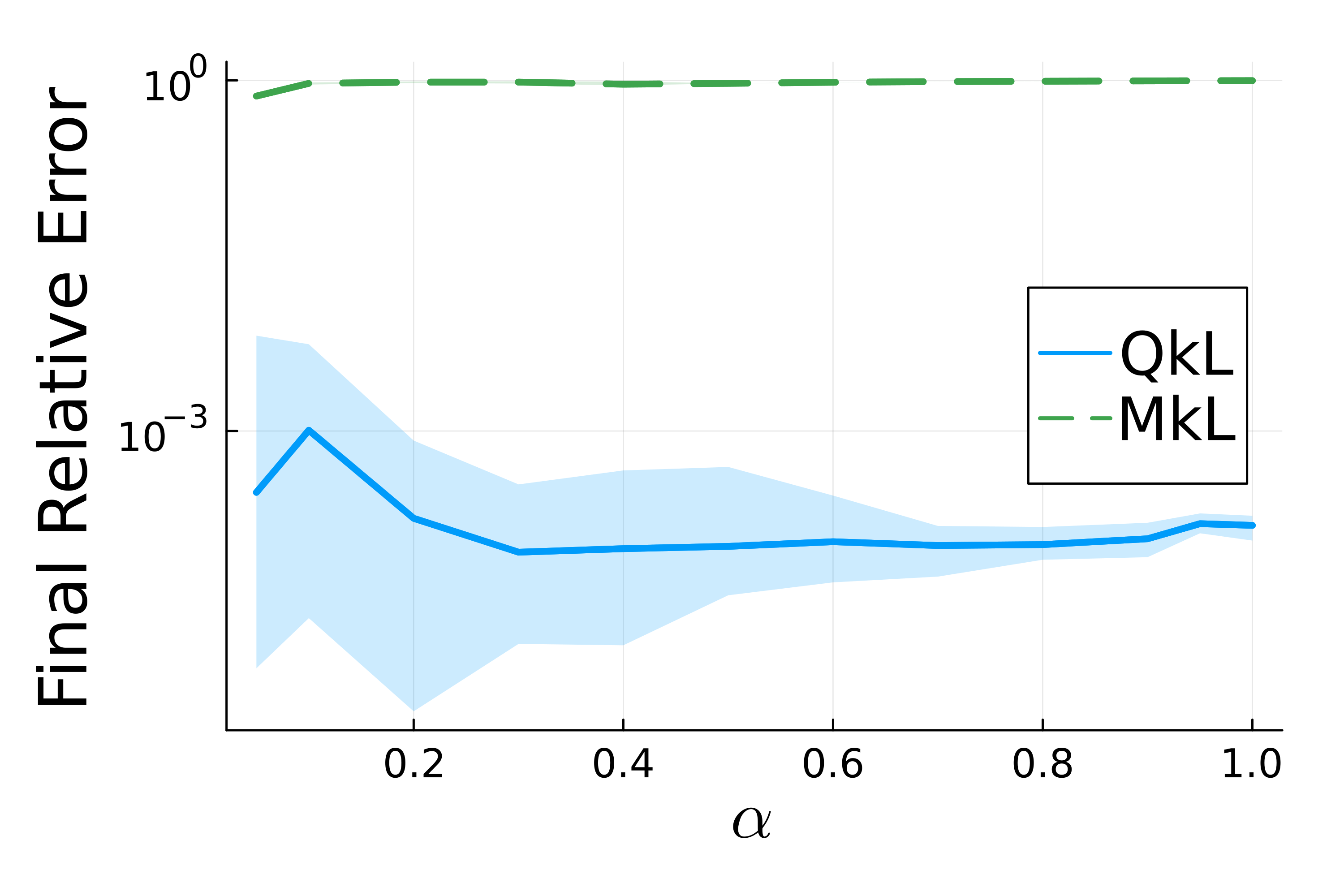}
    \caption{Noiseless polynomial regression loss~\eqref{eq:poly_regression}}\end{subfigure}

    \begin{subfigure}[b]{\textwidth}\includegraphics[width=0.42\textwidth]{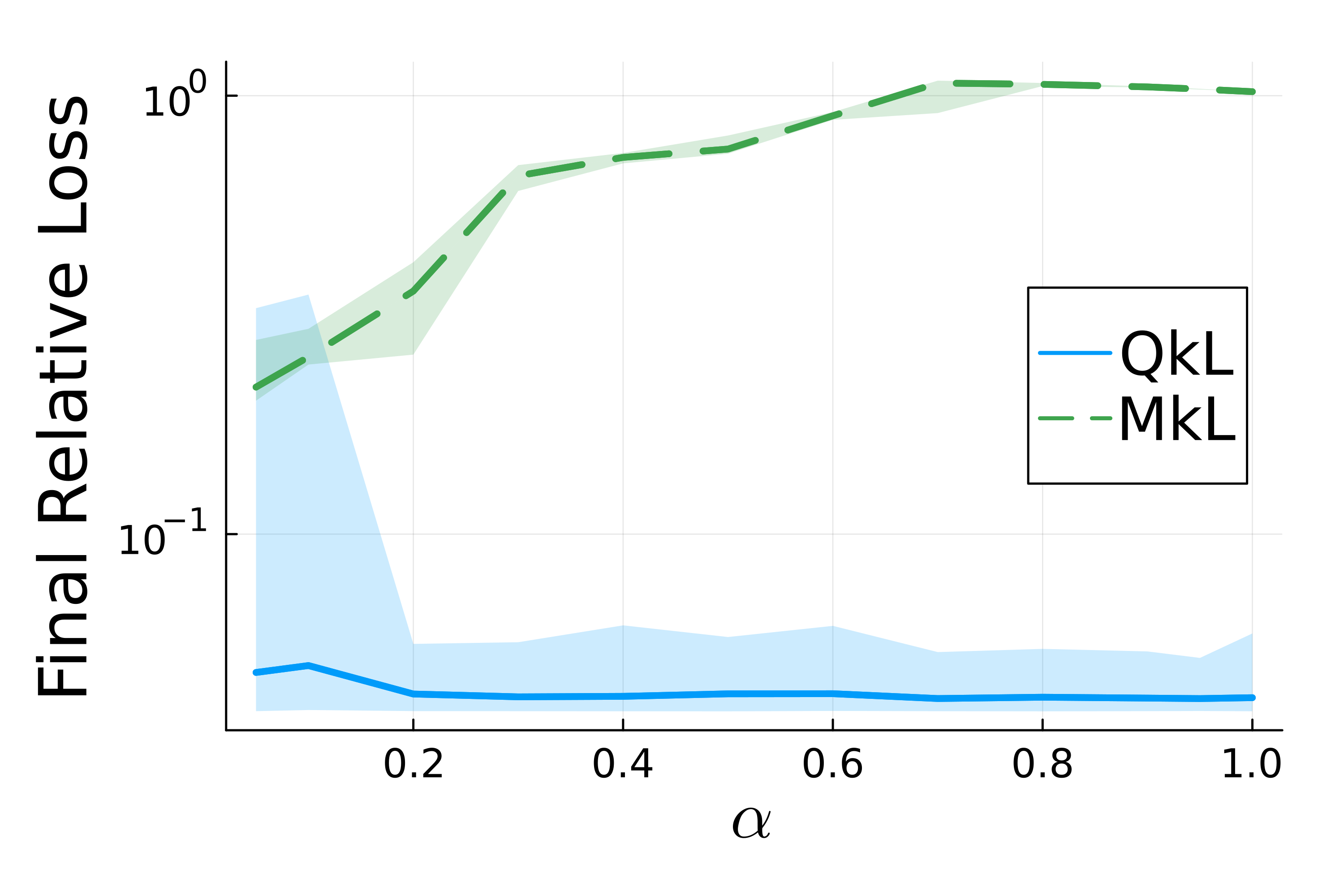}\hfil\includegraphics[width=0.42\textwidth]{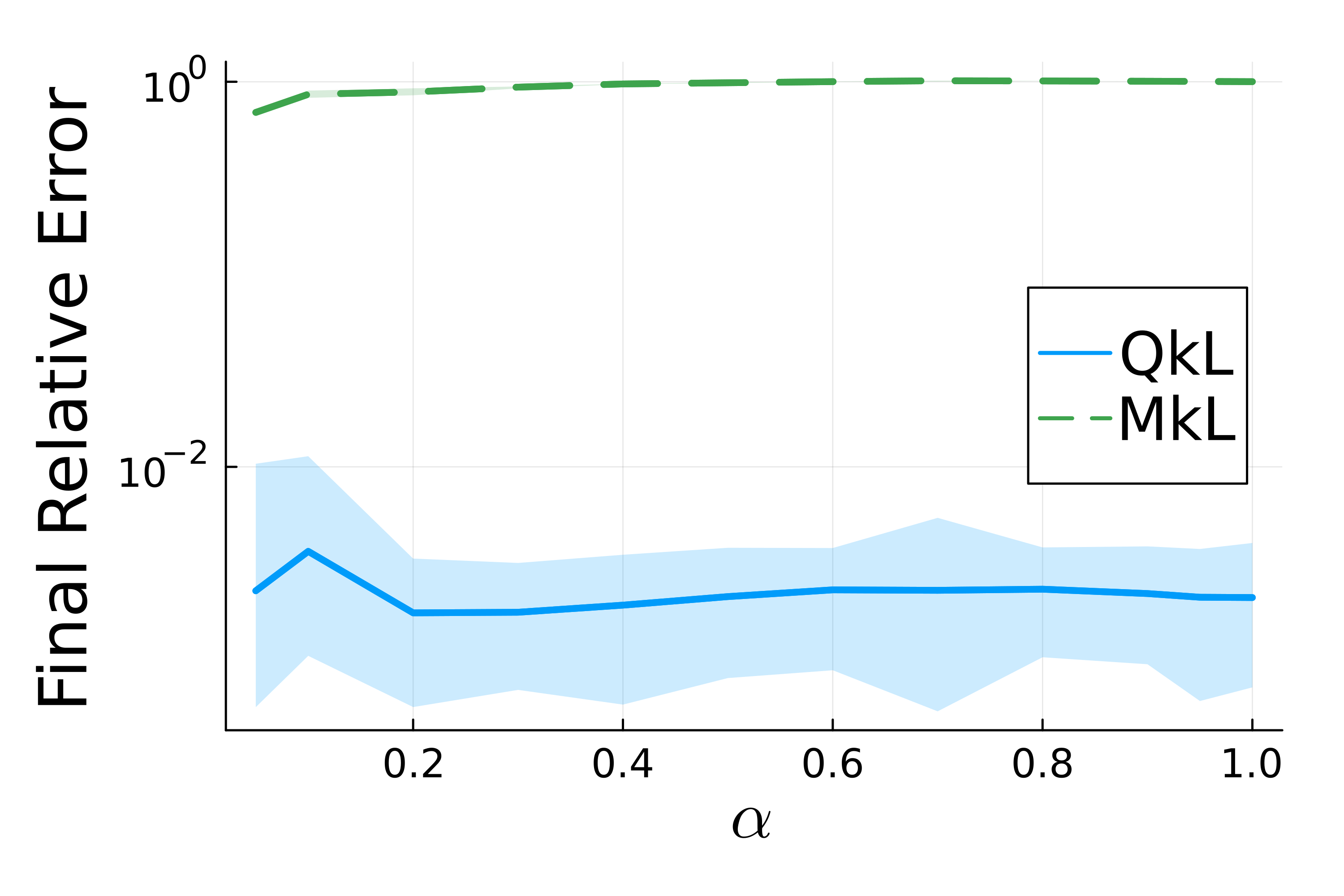}
    \caption{Noisy polynomial regression loss~\eqref{eq:poly_regression}}\end{subfigure}

    \begin{subfigure}[b]{\textwidth}\includegraphics[width=0.42\textwidth]{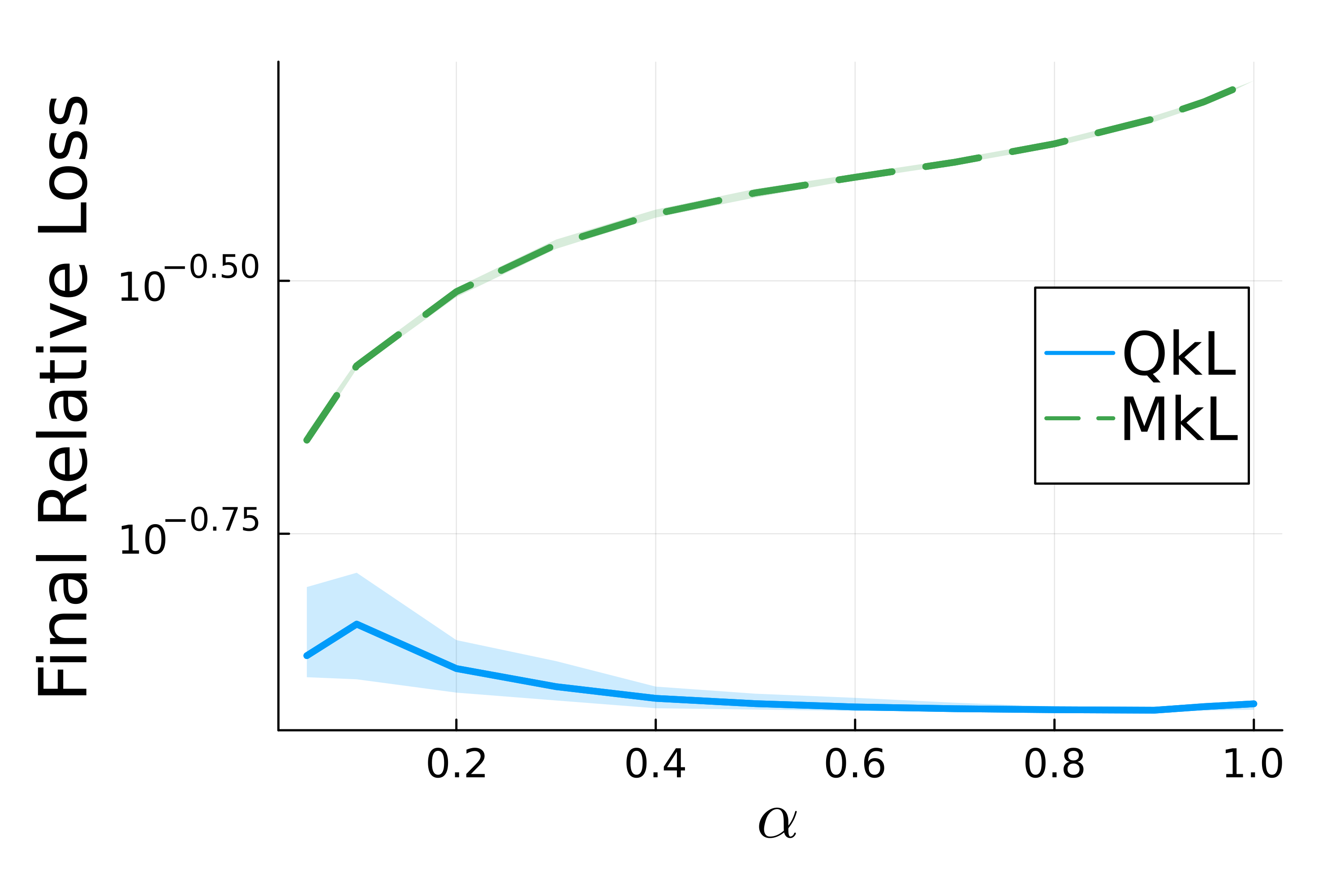}\hfil\includegraphics[width=0.42\textwidth]{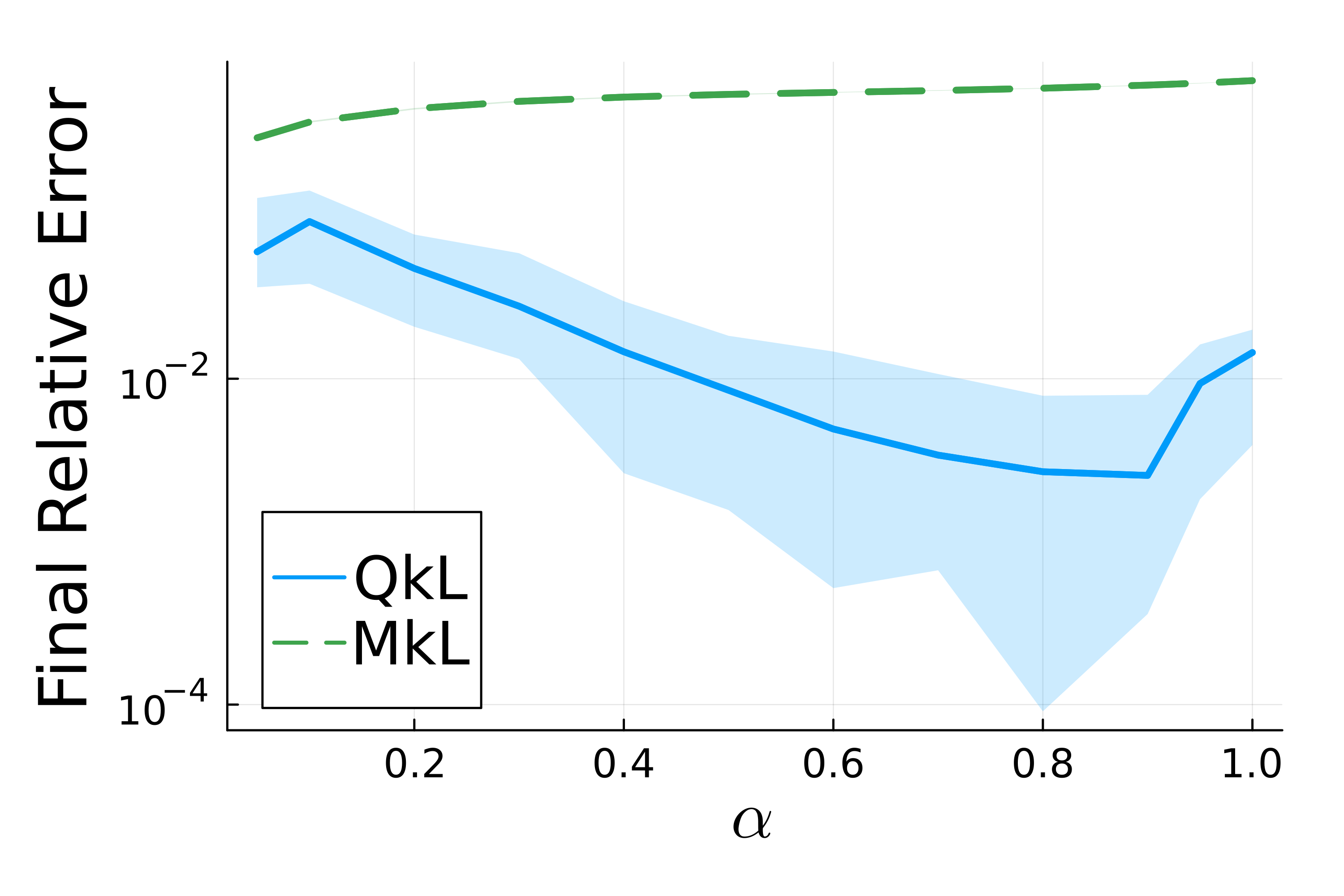}
    \caption{Regularized logistic regression loss~\eqref{eq:logistic_regression}}\end{subfigure}

    \begin{subfigure}[b]{\textwidth}\includegraphics[width=0.42\textwidth]{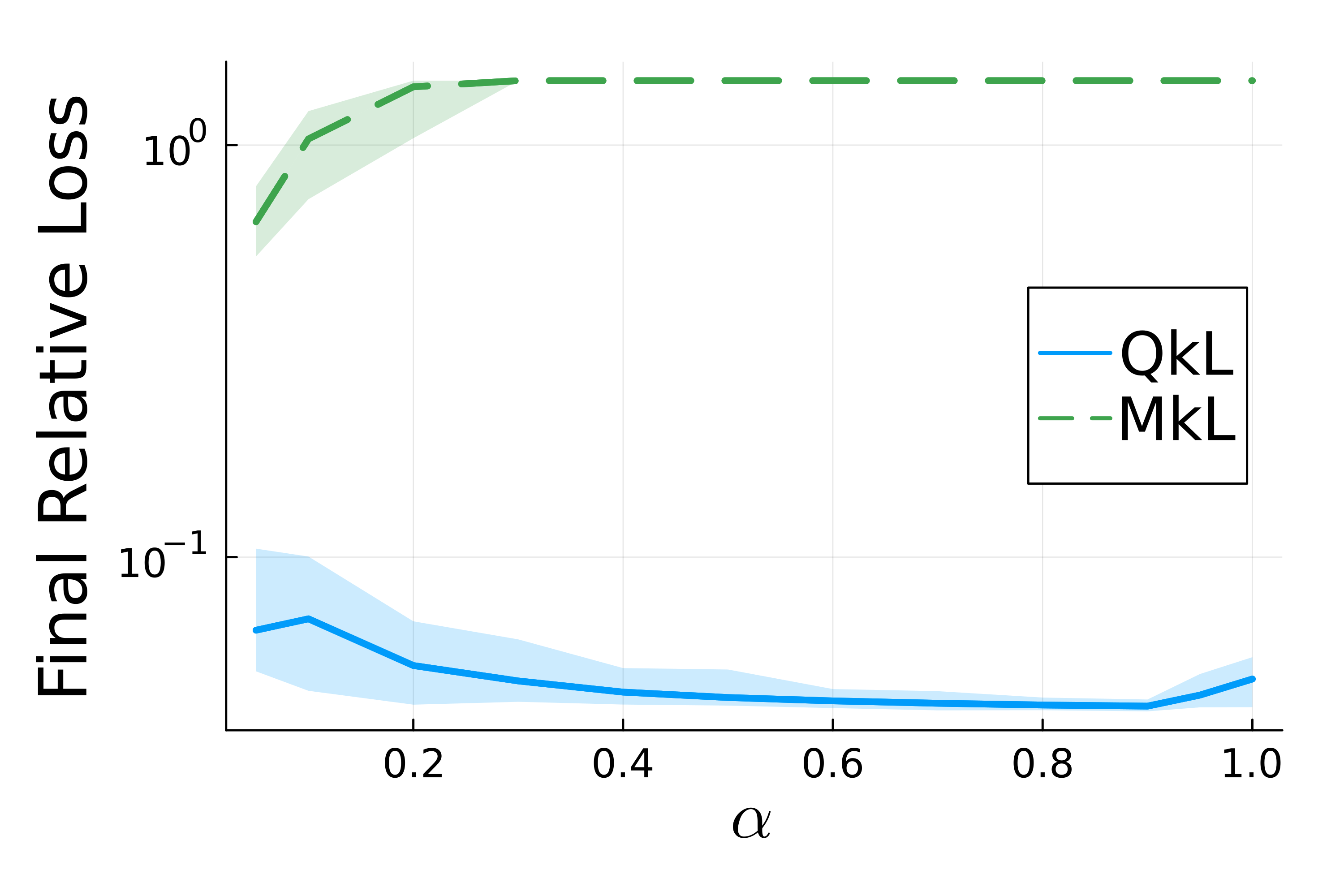}\hfil\includegraphics[width=0.42\textwidth]{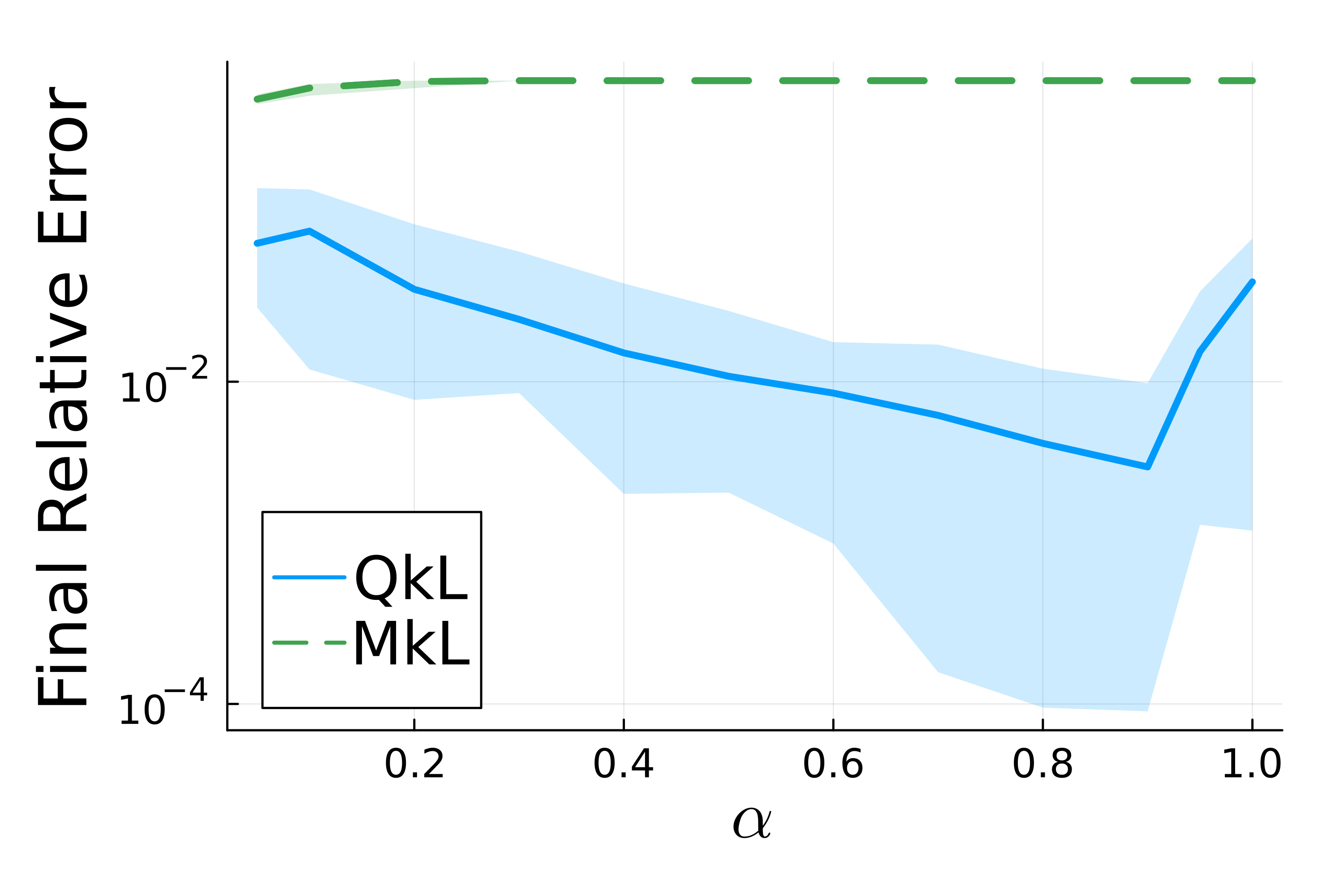}
    \caption{Regularized hinge loss~\eqref{eq:hinge_loss}}\end{subfigure}

    \caption{(Left) Final relative loss $F_G(\mathbf{x}_{10000})/F_G(\mathbf{x}_0)$ for QkL-SGD and MkL-SGD with $\alpha \in (0,1]$; (right) final relative squared error $\|\mathbf{x}_{10000} - \mathbf{x}^\star_{\rm clean}\|^2/\|\mathbf{x}_0 - \mathbf{x}^\star_{\rm clean}\|^2$ for QkL-SGD and MkL-SGD with $\alpha \in (0,1]$ (log scale).  In all experiments, we have $\beta = 0.05$ and we apply QkL-SGD with $q = 0.95$.}\label{fig:figure-10}
\end{figure}

\subsubsection{Small Sample Size}\label{subsec:small_sample}
Noting that MkL-SGD was intended to be used with a very small sample size $k$, we now explore how the loss and error of MkL-SGD and QkL-SGD decrease with respect to cumulative evaluations of loss components. Figure~\ref{fig:figure-11a} shows the relative loss and relative error measured against the number of loss component evaluations for QkL-SGD with sampling rate $\alpha = 1.0$ and $q = 0.95$ and for MkL-SGD with $k \in \{2,5,10\}$ on the polynomial regression problem~\eqref{eq:poly_regression} on the noiseless uncorrupted data. Starting from the baseline dataset described in Section~\ref{subsec:noiseless_poly_regression}, we corrupt 5\% of the target values $y_i$ using additive noise drawn from $\mathcal{N}(0,100)$. The results indicate that QkL-SGD attains performance comparable to MkL-SGD with $k=2$ and $k=10$ in terms of both relative loss and relative error, while MkL-SGD with $k = 5$ achieves the lowest relative loss and error on this very simple problem.

In Figure~\ref{fig:figure-11b}, we plot the relative loss and relative error over loss component evaluations of QkL-SGD with sample rate $\alpha = 1.0$ and $q = 0.95$ and MkL-SGD with $k \in \{2, 5, 10\}$ applied to the polynomial regression loss~\eqref{eq:poly_regression}.  We generate the ideal (uncorrupted) data as described in Section~\ref{subsec:poly_regression} and then sample 5\% of the data and corrupt the corresponding targets $y_i$ by large additive noise sampled from $\mathcal{N}(0,100)$. In this case, QkL-SGD and MkL-SGD with $k = 5$ and $k=10$ achieve similar relative loss.  However, QkL-SGD achieves the lowest final relative error.

In the experiment presented in Figure~\ref{fig:figure-11c}, we generate the uncorrupt data as in Section~\ref{subsec:logistic_regression}.  We then sample 5\% of the data points for corruption and their labels are flipped.  We implement QkL-SGD with sampling rate $\alpha = 1.0$ and $q = 0.95$, and MkL-SGD with $k \in \{2, 5, 10\}$ on the regularized logistic regression loss~\eqref{eq:logistic_regression} and compare the relative loss and relative error across loss component evaluations.  We note that while the MkL-SGD loss and error decrease quickly with respect to loss component evaluations, QkL-SGD eventually achieves lower values.

Next, in the experiment given in Figure~\ref{fig:figure-11d}, we generate the data as in Section~\ref{subsec:hinge_loss} and then sample 5\% of the data and flip the labels associated to these data points.  We compare the loss and error of QkL-SGD with sampling rate $\alpha = 1.0$ and $q = 0.95$, and MkL-SGD with $k \in \{2, 5, 10\}$ applied to the regularized hinge loss objective~\eqref{eq:hinge_loss}.  In this case, the relative loss and relative error of QkL-SGD decrease at roughly the same rate as MkL-SGD and again eventually achieve lower values for each.

\begin{figure}
\begin{subfigure}{\textwidth}
    \centering
    \includegraphics[width=0.48\textwidth]{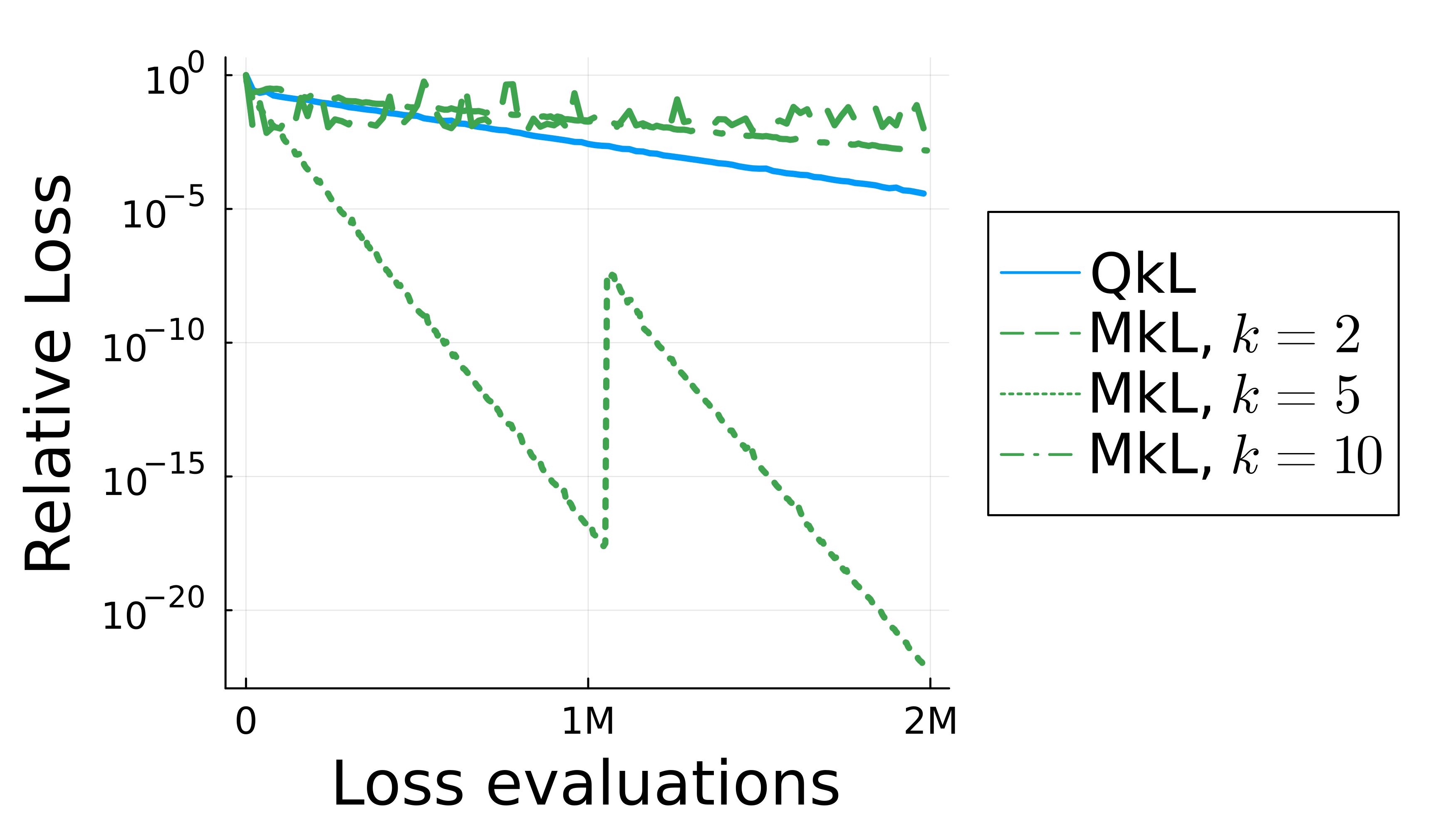}
    \hfill\includegraphics[width=0.48\textwidth]{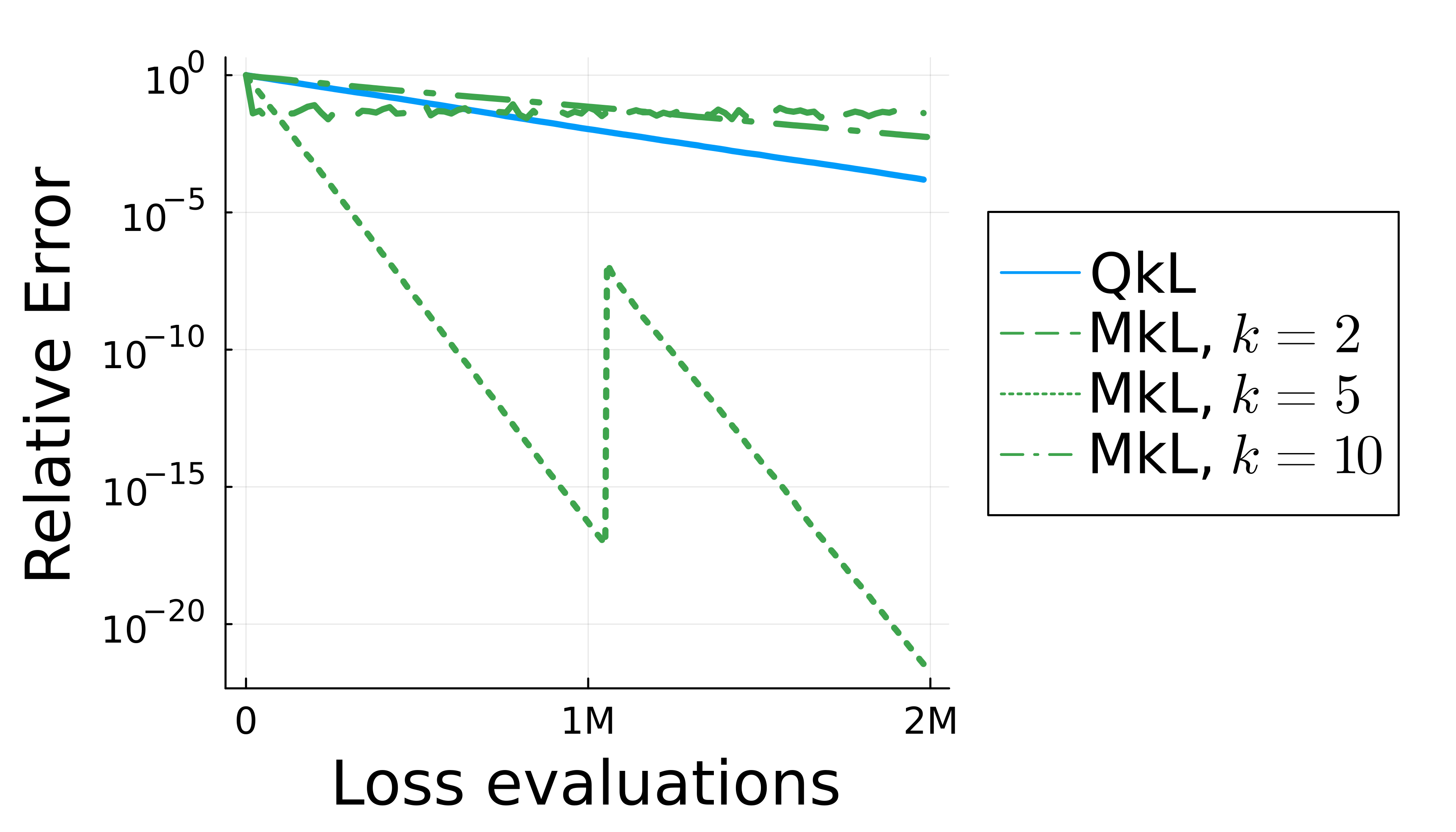}
    \caption{Polynomial regression objective~\eqref{eq:poly_regression} on noiseless uncorrupted data as in Subsection~\ref{subsec:noiseless_poly_regression}.}
    \label{fig:figure-11a}
\end{subfigure}

\begin{subfigure}{\textwidth}
    \centering
    \includegraphics[width=0.48\textwidth]{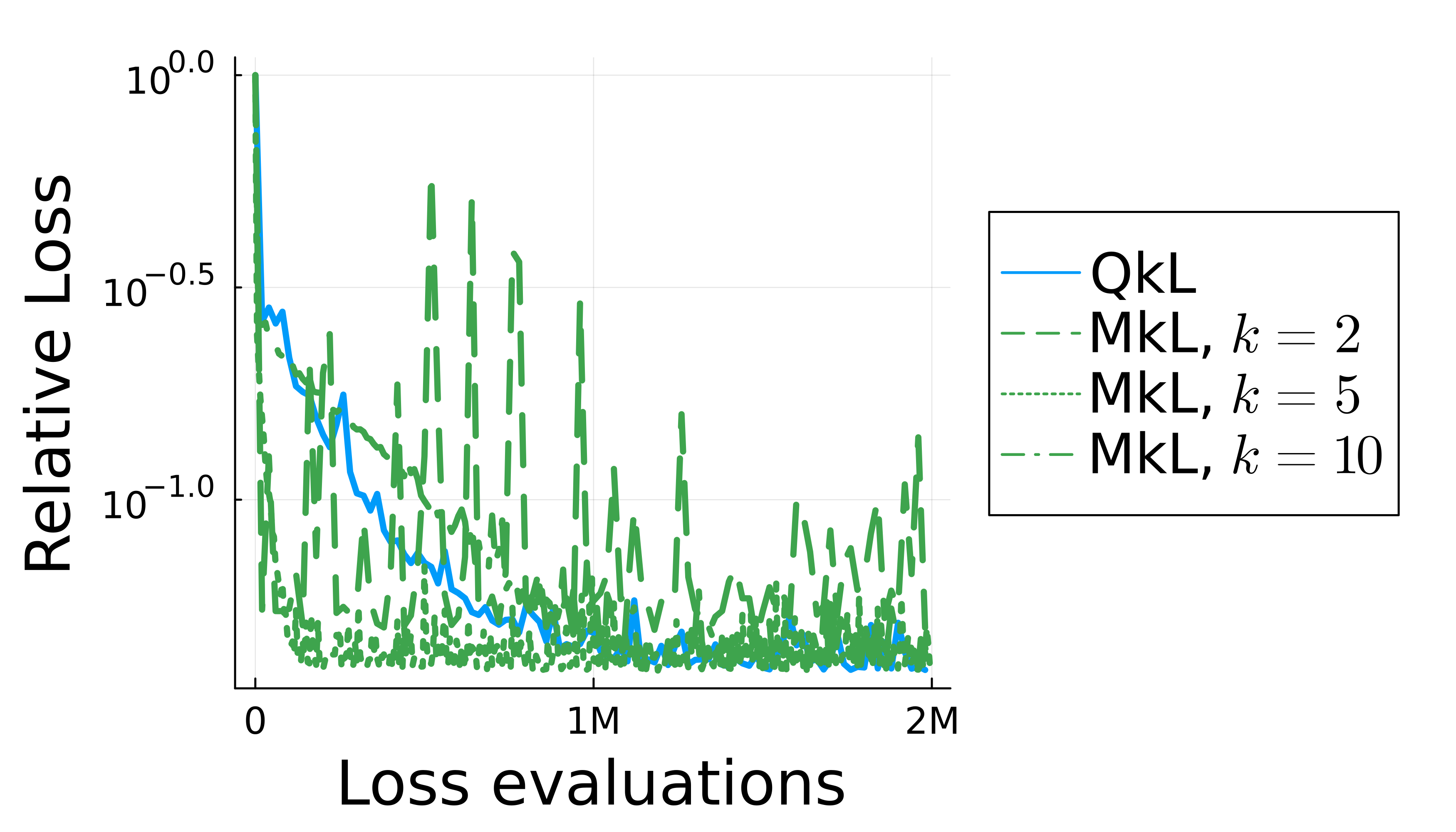}
    \hfill\includegraphics[width=0.48\textwidth]{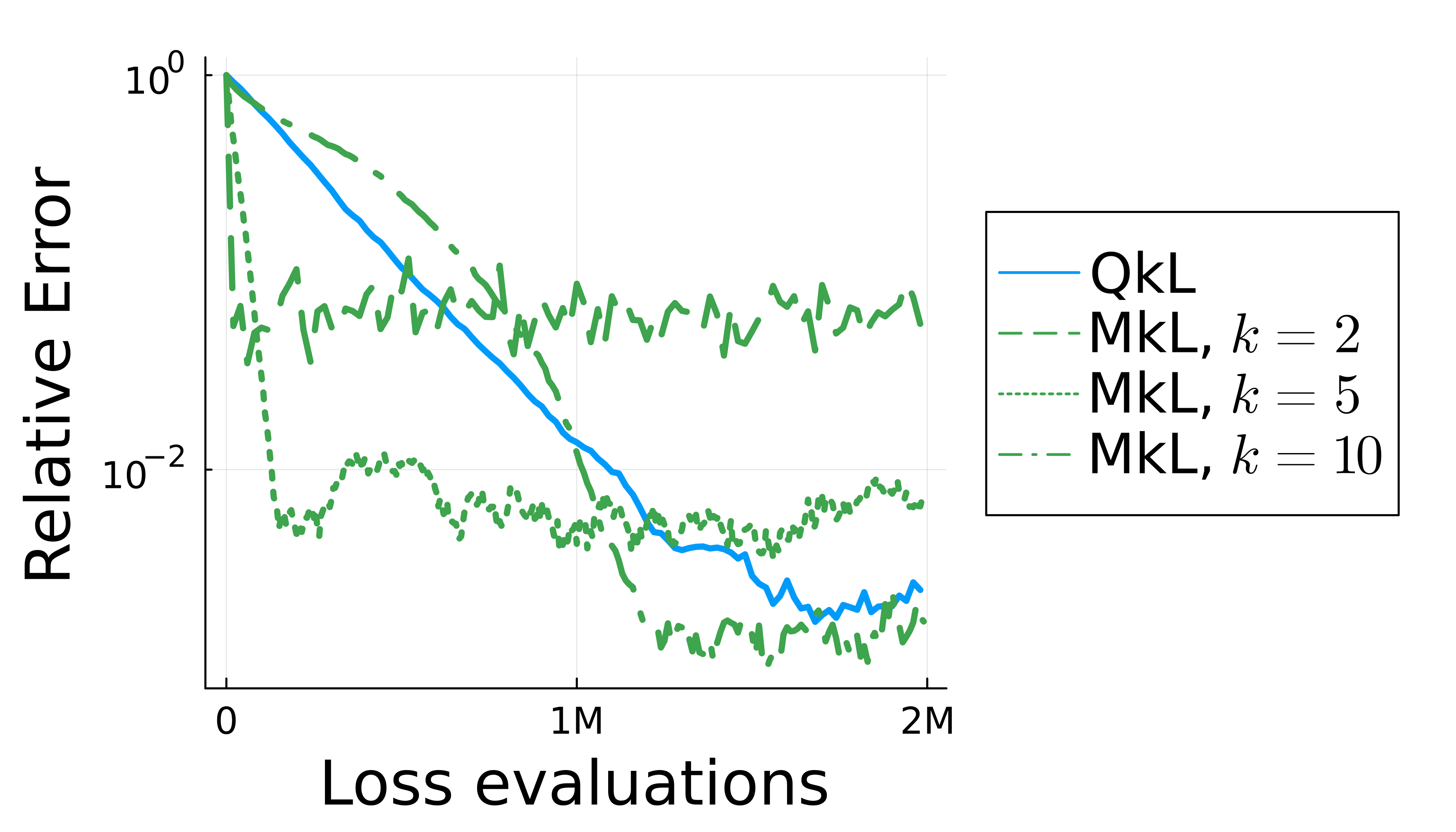}
    \caption{Polynomial regression objective~\eqref{eq:poly_regression} on noisy uncorrupted data as in Subsection~\ref{subsec:poly_regression}.}
    \label{fig:figure-11b}
\end{subfigure}

\begin{subfigure}{\textwidth}
    \centering
    \includegraphics[width=0.48\textwidth]{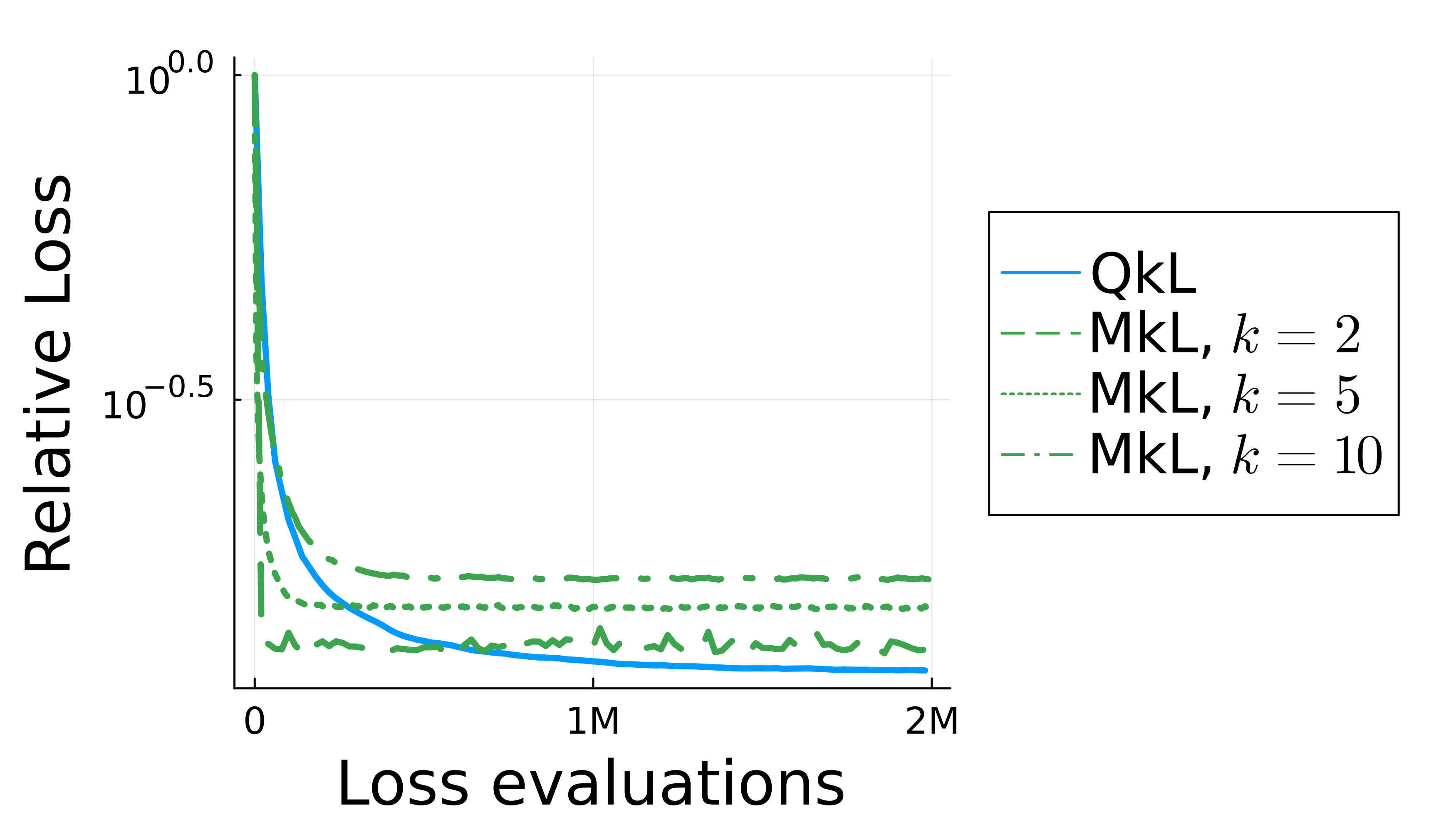}
    \hfill\includegraphics[width=0.48\textwidth]{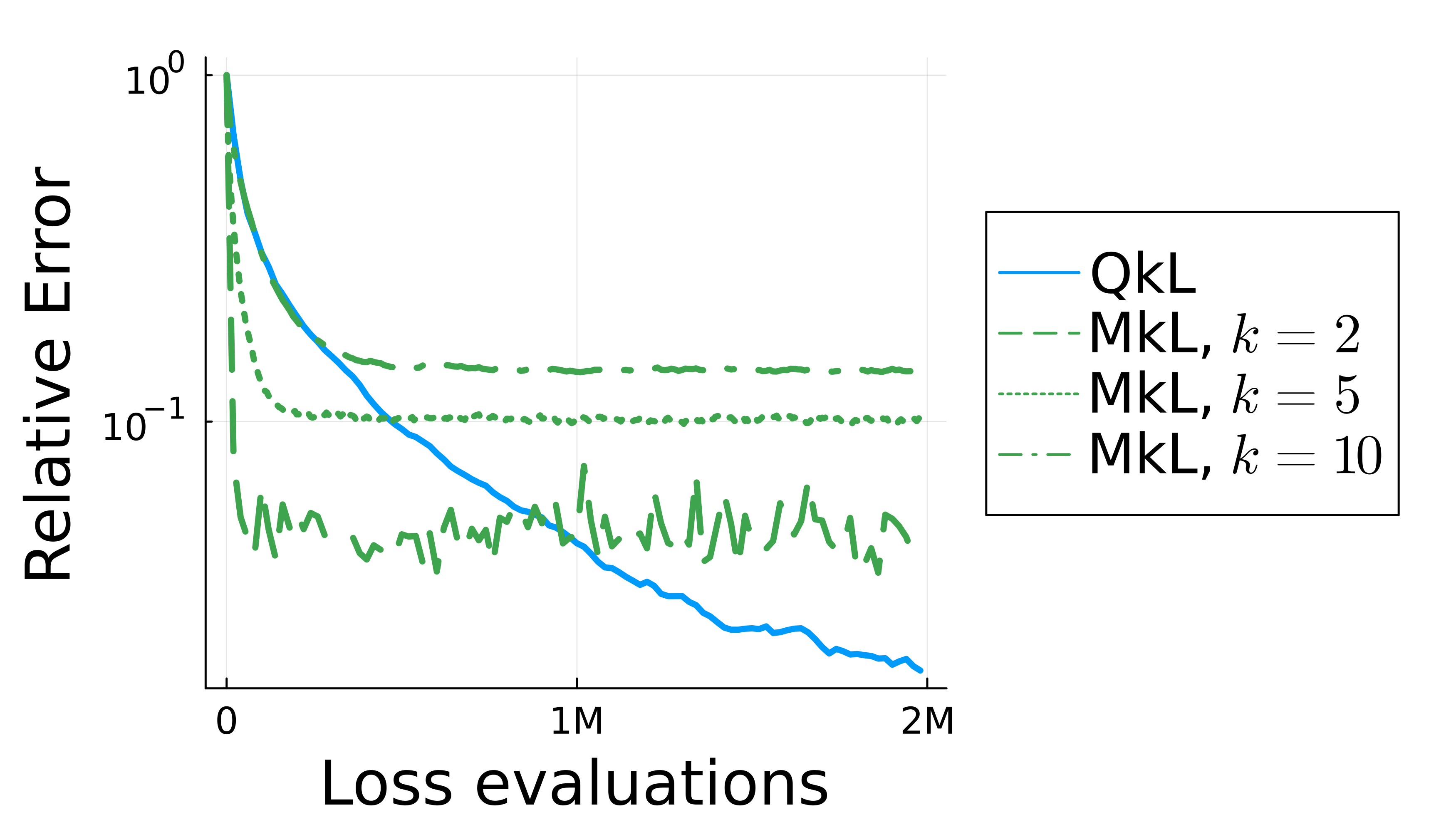}
    \caption{Regularized logistic regression objective~\eqref{eq:logistic_regression}.}
    \label{fig:figure-11c}
\end{subfigure}

\begin{subfigure}{\textwidth}
    \centering
    \includegraphics[width=0.48\textwidth]{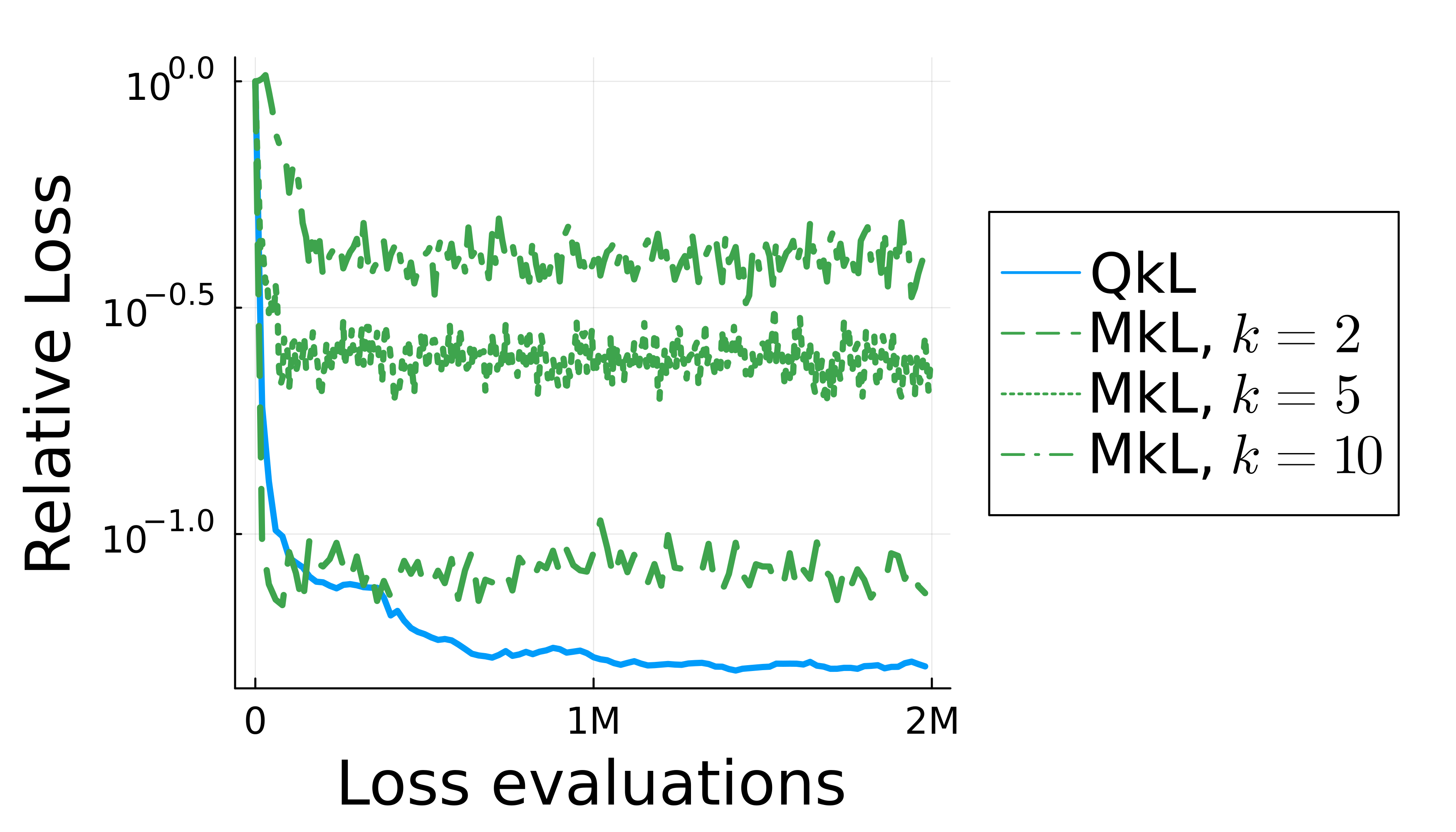}
    \hfill\includegraphics[width=0.48\textwidth]{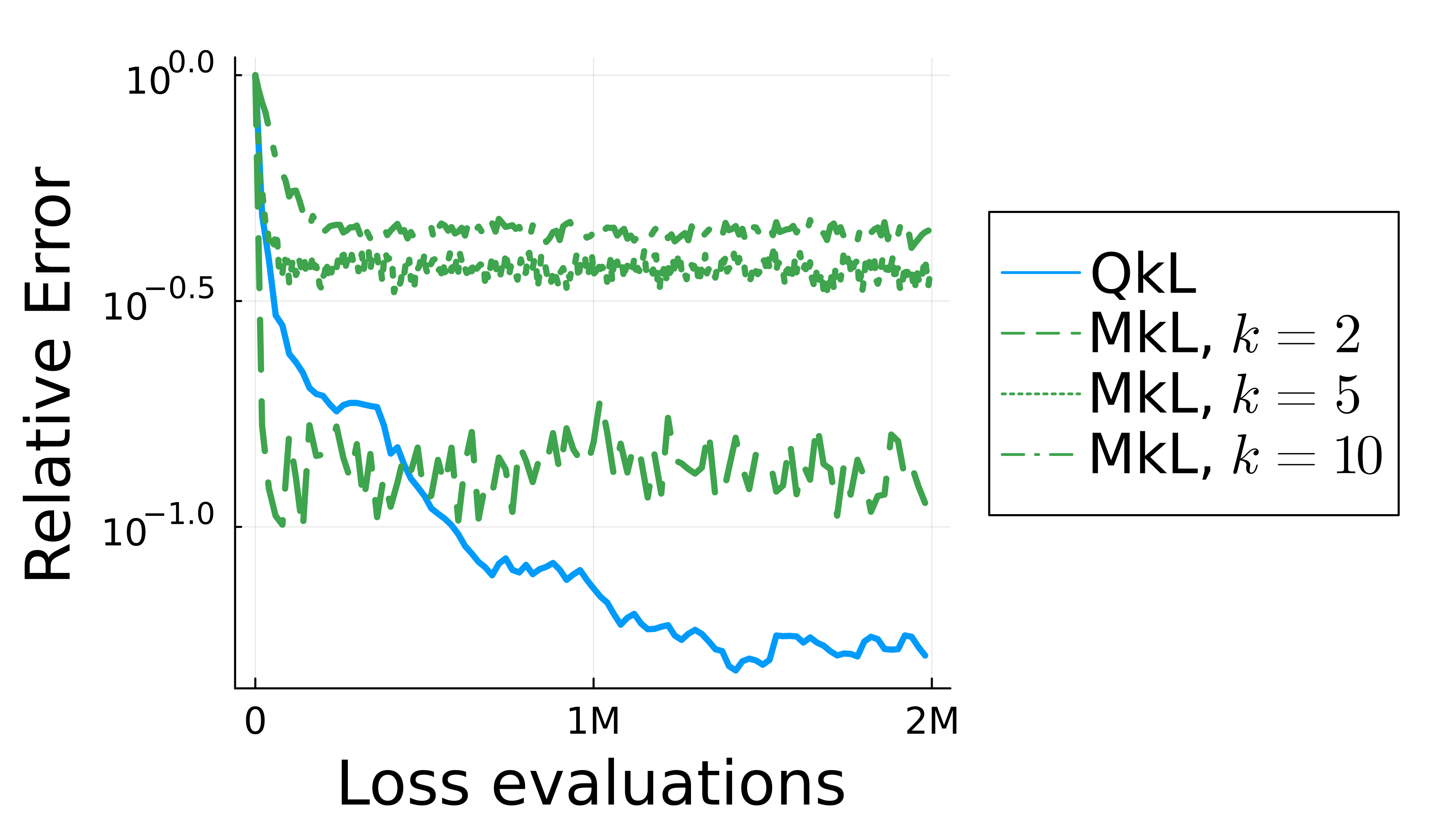}
    \caption{Regularized hinge loss objective~\eqref{eq:hinge_loss}.}
    \label{fig:figure-11d}
\end{subfigure}

    \caption{(Left) Relative loss $F_G(\mathbf{x}_t)/F_G(\mathbf{x}_0)$ per loss evaluation (log scale); (right) relative squared error $\|\mathbf{x}_t - \mathbf{x}^\star_{\rm clean}\|^2/\|\mathbf{x}_0 - \mathbf{x}^\star_{\rm clean}\|^2$ per loss evaluation (log scale). In this experiment, $\beta = 0.05$, and we set $q = 0.95$ and $\alpha = 1$ for QkL-SGD and vary $k \in \{2, 5, 10\}$ for MkL-SGD.}
\end{figure}

\section{Conclusions and Future Work} \label{sec:conclusion}

We propose a broad framework for optimization on corrupted data through quantile-based loss filtering. Our work shows that the  Q\(k\)L-SGD admits both deterministic guarantees, when the sample size scales with the corruption size, and probabilistic guarantees, in the small sample size regime. Empirically, we show that our framework is effective across several machine learning and optimization problems under a variety of models of corruption.

Many natural extensions remain open. First, Q\(k\)L-SGD uses a potentially
large set of sampled losses only to select a single update index. Since the
method has already paid the cost of evaluating \(k\) component losses, it is
natural to ask whether one can use the accepted lower-tail set more fully, for
example by averaging gradients over accepted components or by designing
large-batch variants with robust loss filtering. Second, momentum and
variance-reduced versions of Q\(k\)L-SGD may improve practical performance,
especially in regimes where the quantile rule successfully filters out large
corruptions but the selected good updates remain noisy. Developing convergence
theory for such batch, momentum, and adaptive variants are interesting
directions for future work.

\section*{Acknowledgments} J.H.\ was partially supported by NSF CAREER \#2440040 A.M.\ was partially supported by the Rose Hills Foundation and Simons Grant MPS-TSM-00014026. E.R.\ was partially supported by NSF DMS \#2309685.

Generative AI tools were used to assist with drafting and refining portions of the manuscript. The authors developed all ideas, numerical results, and conclusions and made all final editorial decisions. The authors reviewed, revised, and take full responsibility for the manuscript’s content.

\bibliography{references}

\end{document}